\title{Multi-task Representation Learning for Pure Exploration in Bilinear Bandits}
\author{%
  Subhojyoti Mukherjee \\
  ECE Department\\
  UW-Madison\\
  Wisconsin, Madison \\
  \texttt{smukherjee27@wisc.edu} 
  \and
  Qiaomin Xie \\
  ISyE Department\\
  UW-Madison\\
  Wisconsin, Madison 
  \and
  Josiah P. Hanna \\
  CS Department\\
  UW-Madison\\
  Wisconsin, Madison \\
  \and
  Robert Nowak \\
  ECE Department\\
  UW-Madison\\
  Wisconsin, Madison \\
}
\begin{document}

\maketitle

\begin{abstract}
We study  multi-task representation learning for the problem of pure exploration in bilinear bandits.
In bilinear bandits, an action takes the
form of a pair of arms from two different entity
types and the reward is a bilinear function
of the known feature vectors of the arms. 
In the \textit{multi-task bilinear bandit problem}, we aim to find optimal actions for multiple tasks that share a common low-dimensional linear representation. The objective is to leverage this characteristic to expedite the process of identifying the best pair of arms for all tasks. 
We propose the algorithm \gob\ that uses an experimental design approach to optimize sample allocations for learning the global representation as well as minimize the number of samples needed to identify the optimal pair of arms in individual tasks. 
To the best of our knowledge, this is the first study to give sample complexity analysis for pure exploration in bilinear bandits with shared representation.
Our results demonstrate that by learning the shared representation across tasks, we achieve significantly improved sample complexity compared to the traditional approach of solving tasks independently. 
%
\end{abstract}

\section{Introduction}
\label{sec:intro}
Bilinear bandits \citep{jun2019bilinear,lu2021low,kang2022efficient} are an important class of sequential decision-making problems. 
In bilinear bandits (as opposed to the standard linear bandit setting) we are given a pair of arms $\bx_t\in\R^{d_1}$ and $\bz_t\in\R^{d_2}$ at every round $t$ and the interaction of this pair of arms with a low-rank hidden parameter, $\bTheta_*\in\R^{d_1\times d_2}$ generates the noisy feedback (reward) $r_t = \bx_t^\top \bTheta_* \bz_t + \eta_t$. The $\eta_t$ is  random $1$-subGaussian noise.

A lot of real-world applications exhibit the above bilinear feedback structure, particularly applications that involve selecting pairs of items and evaluating their compatibility. 
For example, in a drug discovery application, scientists may want to determine whether a particular (drug, protein) pair interacts in the desired way \citep{luo2017network}. Likewise, an online dating service might match a pair of people and gather feedback about their compatibility \citep{shen2023user}. A clothing website's recommendation system may suggest a pair of items (top, bottom) for a customer based on their likelihood of matching \citep{reyes2021adaptable}. 
In all of these scenarios, the two items are considered as a single unit, and the system must utilize available feature vectors ($\bx_t, \bz_t$) to learn which features of the pairs are most indicative of positive feedback in order to make effective recommendations. 
All the previous works in this setting \citep{jun2019bilinear,lu2021low,kang2022efficient} exclusively focused on maximizing the number of pairs with desired interactions discovered over time (regret minimization).
However, in many real-world applications where obtaining a sample is expensive and time-consuming, e.g., clinical trials \citep{zhao2009reinforcement,zhang2012multi}, it is often desirable to identify the optimal option using as few samples as possible, i.e., we face the pure exploration scenario \citep{fiez2019sequential, katz2020empirical} rather than regret minimization. 

%
Moreover, in various decision-making scenarios, we may encounter multiple interrelated tasks such as treatment planning for different diseases \citep{bragman2018uncertainty} and content optimization for multiple websites \citep{agarwal2009explore}. Often, there exists a shared representation among these tasks, such as the features of drugs or the representations of website items. Therefore, we can leverage this shared representation to accelerate learning. 
This area of research is called multi-task  representation learning and has recently generated a lot of attention in machine learning \citep{bengio2013representation, li2014joint, maurer2016benefit, du2020few, tripuraneni2021provable}. 
There are many applications of this multi-task representation learning in real-world settings.
For instance, in clinical treatment planning, we seek to determine the optimal treatments for multiple diseases, and there may exist a  low-dimensional representation common to multiple diseases. To avoid the time-consuming process of conducting clinical trials for individual tasks and collecting samples, we utilize the shared representation and decrease the number of required samples.

The above multi-task representation learning naturally shows up in bilinear bandit setting as follows: Let there be $M$ tasks indexed as $m=1,2,\ldots, M$ with each task having its own hidden parameter $\bTheta_{m,*}\in \R^{d_1\times d_2}$. 
Let each $\bTheta_{m,*}$ has a decomposition of $\bTheta_{m,*} = \bB_1 \bS_{m,*}\bB_2^\top$, where $\bB_1\in\R^{d_1\times k_1}$ and $\bB_2\in\R^{d_2\times k_2}$ are shared across tasks, but $\bS_{m,*}\in \R^{k_1\times k_2}$ is specific for task $m$.
We assume that $k_1, k_2 \ll d_1, d_2$ and $M\gg d_1,d_2$. Thus, $\bB_1$ and $\bB_2$ provide a means of dimensionality reduction.  Furthermore, we assume that each $\bS_{m,*}$ has rank $r\ll \min\{ k_1, k_2\}$. 
In the terminology of multi-task representation learning $\bB_1,\bB_2$ are called \emph{feature extractors} and $\bx_{m,t},\bz_{m,t}$ are called \emph{rich observations} \citep{yang2020impact, yang2022nearly, du2023multi}. The reward for the task $m\in\{1,2,\ldots,M\}$ at round $t$ is
\begin{align}
    r_{m,t} = \bx_{m,t}^\top \bTheta_{m,*} \bz_{m,t} + \eta_{m,t} = \underbrace{\bx_{m,t}^\top\bB_1}_{\bg_{m,t}^\top}\bS_{m,*}\underbrace{\bB_2^\top\bz_{m,t}}_{\bv_{m,t}} + \eta_{m,t} = \bg_{m,t}^\top\bS_{m,*}\bv_{m,t} + \eta_{m,t}. \label{eq:reward-model-multi}
    \vspace*{-1em}
\end{align}
Observe that similar to the learning procedure in \citet{yang2020impact,yang2022nearly}, at each round $t=1, 2, \cdots$, for each task $m \in[M]$, the learner selects a left and right action $\bx_{m,t} \in \X$ and $\bz_{m,t}\in\Z$. After the player commits the batch of actions for each task $\left\{\bx_{m, t}, \bz_{m, t}: m \in[M]\right\}$, it receives the batch of rewards $\left\{r_{m, t}: m \in[M]\right\}$.
Also note that in \eqref{eq:reward-model-multi} we define the $\tg_{m,t}\in\R^{k_1}, \tv_{m,t}\in\R^{k_2}$ as the latent features, and both $\tg_{m,t}, \tv_{m,t}$ are unknown to the learner and needs to be learned for each task $m$ (hence the name multi-task representation learning).  

In this paper, we focus on pure exploration for multi-task representation learning in bilinear bandits where the goal is to find the optimal left arm $\bx_{m,*}$ and right arm $\bz_{m,*}$ for each task $m$ with a minimum number of samples (fixed confidence setting). 
First, consider a single-task setting and let $\bTheta_*$ have low rank $r$. Let the SVD of the  $\bTheta_* = \bU\bD\bV^\top$. Prima-facie, if $\bU$ and $\bV$ are known then one might want to project all the left and right arms in the $r\times r$ subspace of $\bU$ and $\bV$ and reduce the bilinear bandit problem into a $r^2$ dimension linear bandit setting. 
Then one can apply one of the algorithms from \citet{soare2014best,fiez2019sequential,katz2020empirical} to solve this $r^2$ dimensional linear bandit pure exploration problem. Following the analysis of this line of work (in linear bandits) \citep{mason2021nearly,mukherjee2022chernoff,mukherjee2023speed} one might conjecture that a sample complexity bound of $\tO(r^2/\Delta^2)$ is possible where $\Delta$ is the minimum reward gap and $\tO(\cdot)$ hides log factors.
Similarly, for the multi-task setting one might be tempted to use the linear bandit analysis of \citet{du2023multi} to convert this problem into $M$ concurrent $r^2$ dimensional linear bandit problems with shared representation and achieve a sample complexity bound of $\tO(M r^2/\Delta^2)$.
However, these matrices (subspaces) are unknown and so there is a model mismatch as noted in the regret analysis of bilinear bandits \citep{jun2019bilinear,lu2021low,kang2022efficient}. Thus it is difficult to apply the $r^2$ dimensional linear bandit sample complexity analysis. 
Following the regret analysis of bilinear bandit setting by \citet{jun2019bilinear,lu2021low,kang2022efficient} we know that the effective dimension is actually $(d_1+d_2)r$. Similarly for the multi-task representation learning the effective dimension should scale with the learned latent features $(k_1+k_2)r$. 
Hence the natural questions to ask are these:
\begin{quote}
    \small\textbf{1) Can we design a single-task pure exploration bilinear bandit algorithm whose sample complexity scales as $\tO((d_1+d_2)r/\Delta^2)$?} \\\\
    \small\textbf{2) Can we design an algorithm for multi-task pure exploration bilinear bandit problem that can learn the latent features and has sample complexity that scales as $\tO(M(k_1+k_2)r/\Delta^2)$?}
\end{quote}

In this paper, we answer both these questions affirmatively. In doing so, 
we make the following novel contributions to the growing literature of multi-task representation learning in online  settings:
\par
\textbf{1)} We formulate the multi-task bilinear representation learning problem. To our knowledge, this is the first work that explores pure exploration in a multi-task bilinear representation learning setting.
\par
\textbf{2)} We proposed the algorithm \gob\ for a single-task pure exploration bilinear bandit setting whose sample complexity scales as $\tO((d_1+d_2)r/\Delta^2)$. This improves over RAGE \citep{fiez2019sequential} whose sample complexity scales as $\tO((d_1d_2)/\Delta^2)$.
\par
\textbf{3)} Our algorithm \gob\ for multi-task pure exploration bilinear bandit problem learns the latent features and has sample complexity that scales as $\tO(M(k_1+k_2)r/\Delta^2)$. This improves over DouExpDes \citep{du2023multi} whose samples complexity scales as $\tO(M(k_1k_2)/\Delta^2)$.

\label{sec:prelim}
\textbf{Preliminaries:} We assume that $\|\bx\|_2\leq 1$, $\|\bz\|_2\leq 1$, $\|\bTheta_*\|_F\leq S_0$ and the $r$-th largest singular value of $\bTheta_*\in\R^{d_1\times d_2}$ is $S_r$. Let $p\coloneqq d_1d_2$ denote the ambient dimension, and $k=(d_1+d_2)r$ denote the effective dimension. Let $[n] \coloneqq \{1,2,\ldots,n\}$. 
Let $\bx_*,\bz_*  \!\coloneqq\! \argmax_{\bx,\bz}\bx^\top\bTheta_*\bz$.
For any $\bx, \bz$ define the gap $\Delta(\bx,\bz) \!\coloneqq\! \bx_*^\top\bTheta_*\bz_* - \bx^\top\bTheta_*\bz$ and furthermore $\Delta = \min_{\bx \neq \bx_*,\bz\neq \bz_*} \Delta(\bx,\bz)$. 
%
%
Similarly, for any arbitrary vector $\bw \in \W$ define the gap of $\bw\in\R^p$ as $\Delta(\bw)\coloneqq\left(\bw_*-\bw\right)^{\top} \btheta_*$, for some $\btheta_*\in\R^p$ and furthermore, $\Delta_{}=\min _{\bw \neq \bw_*} \Delta(\bw)$. 
If $\bA \in \R_{\geq 0}^{d \times d}$ is a positive semidefinite matrix, and $\bw \in \R^p$ is a vector, let $\|\bw\|_{\bA}^2:=\bw^{\top} \bA \bw$ denote the induced semi-norm. 
Given any vector $\bb\in\R^{|\W|}$ we denote the $\bw$-th component as $\bb_{\bw}$.
Let $\Delta_{\W}:=\left\{\bb \in \R^{|\W|}: \bb_{\bw} \geq 0, \sum_{\bw \in \W} \bb_{\bw}=1\right\}$ denote the set of probability distributions on $\W$. 
We define $\Y(\W)=\left\{\bw-\bw^{\prime}: \forall \bw, \bw^{\prime} \in \W, \bw \neq \bw^{\prime}\right\}$ as the directions obtained from the differences between each pair of arms and $\Y^*(\W)=\left\{\bw_*-\bw: \forall \bw \in \W \backslash \bw_*\right\}$ as the directions obtained from the differences between the optimal arm and each suboptimal arm.



\section{Pure Exploration in Single-Task Bilinear Bandits}
\label{sec:pure_exp_bi}
In this section, we consider pure exploration in a single-task bilinear bandit setting as a warm-up to the main goal of learning representations for the multi-task bilinear bandit. 
To our knowledge, this is the first study of pure exploration in single-task bilinear bandits.
%
We first recall the single-task bilinear bandit setting as follows: At every round $t=1,2,\ldots$ the learner observes the reward $r_t = \bx_t^\top\bTheta_*\bz_t + \eta_t$ where the low rank hidden parameter $\bTheta_*\in\R^{d_1\times d_2}$ is unknown to the learner, $\bx_t\in\R^{d_1}$, $\bz_t\in\R^{d_2}$ are visible to the learner, and $\eta_t$ is a $1$-sub-Gaussian noise. We assume that the matrix $\bTheta_*$ has a low rank $r$ which is known to the learner and $d_1, d_2 \gg r$.
Finally recall that the goal is to identify the optimal left and right arms $\bx_*, \bz_*$ with a minimum number of samples. 


We propose a phase-based, two-stage arm elimination algorithm called \textbf{G}-\textbf{O}ptimal Design for \textbf{B}i\textbf{lin}ear Bandits (abbreviated as \gob). 
\gob\ proceeds in phases indexed by $\ell=1,2,\ldots$ As this is a pure-exploration problem, the total number of samples is controlled by the total phases which depends on the intrinsic problem complexity. 
Each phase $\ell$ of \gob\ consists of two stages; the estimation of $\bTheta_*$ stage, which runs for $\tau^E_\ell$ rounds, and pure exploration in rotated arms stage that runs for $\tau^G_\ell$ rounds. We will define $\tau^E_\ell$ in \Cref{sec:low-dim-single}, while rotated arms and $\tau^G_\ell$ are defined in \Cref{sec:optimal-design-single}.
At the end of every phase, \gob\ eliminates sub-optimal arms to build the active set for the next phase and stops when only the optimal left and right arms are remaining.
Now we discuss the individual stages that occur at every phase $\ell$ of \gob.

\subsection{Estimating Subspaces of $\bTheta_*$ (Stage 1 of the $\ell$-th phase)}
\label{sec:low-dim-single}
In the first stage of phase $\ell$, \gob\ estimates the row and column sub-spaces $\bTheta_*$. Then \gob\ uses these estimates to reduce the bilinear bandit problem in the original ambient dimension $p\coloneqq d_1d_2$ to a lower effective dimension $k\coloneqq (d_1 + d_2)r$. 
%
%
%
%
To do this, \gob\ first vectorizes the $\bx\in\R^{d_1},\bz\in\R^{d_2}$ into a new vector $\ow\in\R^{p}$ and then solves the $E$-optimal design in Step $3$ of \Cref{alg:bandit-pure}  \citep{pukelsheim2006optimal,jun2019bilinear,du2023multi}. 
%
Let the solution to the $E$-optimal design problem at the stage $1$ of $\ell$-th phase be denoted by $\bb^E_\ell$. Then \gob\ samples each $\ow$ for $\lceil \tau^E_\ell\bb^E_{\ell,\ow} \rceil$ times, where $\tau^E_\ell = \tO(\sqrt{d_1 d_2 r}/ S_r)$ (step $7$ of \Cref{alg:bandit-pure}).
In this paper, we sample an arm $\lceil \tau^E_\ell\bb^E_{\ell,\ow} \rceil$ number of times. However, this may lead to over-sampling of an arm than what the design ($G$ or $E$-optimal) is actually suggesting.
However, we can match the number of allocations of an arm to the design using an \textit{efficcient Rounding Procedures} (see \citet{pukelsheim2006optimal, fiez2019sequential}).
%
%
%
%
%
%
Let $\wTheta_{\ell}$ be estimate of $\bTheta_*$ in stage $1$ of phase $\ell$. \gob\ estimates this by solving the following well-defined regularized minimization problem with nuclear norm penalty:
\begin{align}
\hspace*{-1.5em}\wTheta_{\ell}=\argmin_{\bTheta \in \mathbb{R}^{d_1 \times d_2}} L_{\ell}(\bTheta)+\gamma_\ell\|\bTheta\|_{\mathrm{nuc}},  \quad L_{\ell}(\bTheta)=\langle\bTheta, \bTheta\rangle-\tfrac{2}{\tau^E_\ell} \sum_{s=1}^{\tau^E_\ell}\langle\widetilde{\psi}_\nu(r_s \cdot Q(\bx_s\bz_s^\top)), \bTheta\rangle \label{eq:convex-prog}
\end{align}
where $Q(\cdot)$, $\widetilde{\psi}_\nu(\cdot)$, are appropriate functions stated in  \Cref{def:score-func}, \ref{def:psi-tilde} respectively in \Cref{app:app-stein-lemma}. The $Q(\cdot)$ function takes as input the rank-one matrix $\bx_s\bz_s^\top$ which is obtained after reshaping $\ow_s$. Note that $\bx_s$, and $\bz_s$ are the observed vectors in $d_1$ and $d_2$ dimension and $\wTheta_\ell \in \R^{d_1\times d_2}$
Finally, set the regularization parameter  $\gamma_\ell \coloneqq 4 \sqrt{\tfrac{2\left(4 + S^2_0\right) C d_1 d_2 \log \left(2\left(d_1+d_2\right) / \delta\right)}{\tau^E_\ell}}$. This is in step $8$ of \Cref{alg:bandit-pure}.


\subsection{Optimal Design for Rotated Arms (Stage 2 of $\ell$-th phase)}
\label{sec:optimal-design-single}
%
In stage $2$ of phase $\ell$, \gob\ leverages the information about the learned sub-space of $\bTheta_*$ to rotate the arm set and then run the optimal design on the rotated arm set.
Once we recover $\wTheta_\ell$, one might be tempted to run a pure exploration algorithm \citep{soare2014best, fiez2019sequential, katz2020empirical, zhu2021pure} to identify $\bx_*$ and $\bz_*$. However, then the sample complexity will scale with $d_1d_2$. 
In contrast \gob\ uses the information about the learned sub-space of $\bTheta_*$ to reduce the problem from ambient dimension $d_1d_2$ to effective dimension $(d_1 + d_2)r$.
%
%
This reduction is done as follows:
Let $\wTheta_{\ell} = \wU_\ell\wD_{\ell}\wV_{\ell}^\top$ be the SVD of $\wTheta_{\ell}$ in the $\ell$-th phase. Let $\wU^\ell_{\perp}$ and $\wV^\ell_{\perp}$ be orthonormal bases of the complementary subspaces of $\wU_\ell$ and $\wV_\ell$ respectively. Let $\X_\ell$ and $\Z_\ell$ be the active set of arms in the stage $2$ of phase $\ell$. Then rotate the arm sets such that new rotated arm sets are as follows:
\begin{align}
   \uX_\ell=\{\ux=[\wU_\ell \wU_\ell^\perp]^{\top} \bx \mid \bx \in \X_\ell\}, 
\uZ_\ell = \{\uz=[\wV_\ell \wV_\ell^{\perp}]^{\top} \mathbf{z} \mid \bz \in \Z_\ell\}. \label{eq:rotated-arm-set}
\end{align}
Let $\wH_\ell=[\wU_\ell \wU^\perp_\ell]^{\top} \wTheta_{\ell}[\wV_\ell \wV_\ell^{\perp}]$. Then define vectorized arm set so that the last $\left(d_1-r\right) \cdot\left(d_2-r\right)$ components are from the complementary subspaces as follows:
\begin{align}
\ucW_\ell &= \left\{\left[\vec\left(\ux_{1: r} \uz_{1: r}^{\top}\right) ; \vec\left(\ux_{r+1: d_1} \uz_{1: r}^{\top}\right) ; \vec\left(\ux_{1: r} \uz_{r+1: d_2}^{\top}\right) ; \right.\right.\nonumber\\
&\qquad\qquad\qquad\left.\left.\vec\left(\ux_{r+1: d_1} \uz_{r+1: d_2}^{\top}\right)\right] \in \mathbb{R}^{d_1 d_2}: \ux \in \X_{\ell}, \uz \in \Z_\ell\right\}\nonumber\\
\wtheta_{\ell, 1: k}&=[\vec(\wH_{\ell,1: r, 1: r}) ; \vec(\wH_{\ell,r+1: d_1, 1: r}) ; \vec(\wH_{\ell,1: r, r+1: d_2})], \nonumber\\
 \wtheta_{\ell, k+1: p} &=\vec(\wH_{\ell,r+1: d_1, r+1: d_2}).
\label{eq:rotated-arm-set-phase}
\end{align}
which implies $\|\wtheta_{k+1: p}\|_2=O\left(d_1d_2 r / \tau^E_\ell\right)$ by \Cref{theorem:kang-low-rank} in \Cref{app:prob-tools}. So the last $p-k$ components of $\wtheta_\ell$ are very small compared to the first $k$ components.
Hence, \gob\ has now reduced the $d_1d_2$ dimensional linear bandit to $(d_1 + d_2)r$ dimensional linear bandit using \eqref{eq:rotated-arm-set}, \eqref{eq:rotated-arm-set-phase}. This is shown in step $10$ of \Cref{alg:bandit-pure}.

Now in stage $2$ of phase $\ell$, \gob\ implements $G$-optimal design \citep{pukelsheim2006optimal,fiez2019sequential} in the rotated arm set $\uX_\ell, \uZ_\ell$ defined in  \eqref{eq:rotated-arm-set}.
%
%
%
%
%
To do this, first \gob\ defines the rotated vector $\uw = [\ux_{1:d_1};\uz_{1:d_2}] \in\R^p$ that belong to the set $\ucW_\ell$. 
Then \gob\ solves the $G$-optimal design \citep{pukelsheim2006optimal} as follows:
\begin{align}
    \wb^G_\ell = \argmin_{\bb_{\uw}}\max_{\uw,\uw'\in\ucW_\ell} \|\uw-\uw'\|^2_{(\sum_{\uw\in\ucW} \bb_{\uw}\uw\ \uw^\top + \bLambda_\ell/n )^{-1}}.  \label{eq:bi-level-opt-single}
\end{align}
This is shown in step $11$ of \Cref{alg:bandit-pure} and $\bLambda_\ell$ is defined in \eqref{eq:bLambda}. It can be shown that sampling according to $\wb^G_\ell$ leads to the optimal sample complexity. This is discussed in \Cref{remark:g-optimal} in \Cref{app:G-optimal-remark}.
The key point to note from \eqref{eq:bi-level-opt-single} is that due to the estimation in the rotated arm space $\ucW_\ell$ we are guaranteed that the support of $\mathbf{supp}(\wb^G_\ell) \leq \tO(k(k+1)/2)$ \citep{pukelsheim2006optimal}. On the other hand, if the G-optimal design of \citet{fiez2019sequential,katz2020empirical} are run in $d_1d_2$ dimension then the support of $\wb^G_\ell$ will scale with $d_1d_2$ which will lead to higher sample complexity. 
Then \gob\ samples each $\uw\in\ucW_\ell$ for $\lceil \tau^G_\ell\bb^G_{\ell,\uw} \rceil$ times, where $\tau^G_\ell \coloneqq \lceil \frac{8 B^\ell_*\rho^G(\Y(\W_\ell))\log (4\ell^2|\W| / \delta)}{\epsilon_\ell^2}\rceil$. Note that the total length of phase $\ell$, combining stages $1$ and $2$ is $(\tau^E_\ell + \tau^G_\ell)$ rounds. Observe that the stage $1$ design is on the whole arm set $\ocW$ whereas stage $2$ design is on the refined active set $\ucW_\ell$.

Let the observed features in stage $2$ of phase $\ell$ be denoted by $\uW_\ell \in \R^{\tau^G_\ell\times p}$, and $\br_\ell \in \R^{\tau^G_\ell}$ be the observed rewards.
Define the diagonal matrix $\bLambda_\ell$ as
\begin{align}
    \bLambda_\ell = \mathbf{diag}[\underbrace{\lambda,\ldots,\lambda}_{k},\underbrace{\lambda^\perp_\ell,\ldots,\lambda^\perp_\ell}_{p-k}] \label{eq:bLambda}
\end{align}
where, $\lambda^\perp_\ell \coloneqq \tau^G_{\ell-1}/8k\log(1+\tau^G_{\ell-1}/\lambda) \gg \lambda$. 
Deviating from \citet{soare2014best, fiez2019sequential} \gob\ constructs a regularized least square estimator 
at phase $\ell$ as follows
\begin{align}
    \wtheta_\ell=\argmin_{\btheta\in\R^p} \frac{1}{2}\|\uW_\ell \btheta-\br_\ell\|_2^2+\frac{1}{2}\|\btheta\|_{\bLambda_\ell}^2. \label{eq:least-sq-reg}
\end{align}
This regularized least square estimator in \eqref{eq:least-sq-reg} forces the last $p-k$ components of $\wtheta_\ell$ to be very small compared to the first $k$ components. 
Then \gob\ builds the estimate $\wtheta_\ell$ from \eqref{eq:least-sq-reg} only from the observations from this phase (step $13$ in \Cref{alg:bandit-pure}) and eliminates sub-optimal actions in step $14$ in \Cref{alg:bandit-pure} using the estimator $\wtheta_\ell$.
%
%
%
%
%
Finally \gob\ eliminates sub-optimal arms to build the next phase active set $\ucW_\ell$ and stops when $|\ucW_\ell| = 1$. 
\gob\ outputs the arm in $\ucW_{\ell}$ and reshapes it to get the $\widehat{\bx}_{*}$ and $\widehat{\bz}_{*}$. The full pseudocode is presented in \Cref{alg:bandit-pure}.

\begin{algorithm}
\caption{G-Optimal Design for Bilinear Bandits (\gob) for single-task setting}
\label{alg:bandit-pure}
\begin{algorithmic}[1]
\State Input: arm set $\X,\Z$, confidence $\delta$, rank $r$ of $\bTheta_*$, spectral bound $S_r$ of $\bTheta_*$,  $S, S_\ell^{\perp} \coloneqq  \frac{8d_1 d_2 r}{\tau^E_{\ell} S^2_{r}} \log \left(\frac{d_1+d_2}{\delta_\ell}\right), \lambda, \lambda_\ell^{\perp} \coloneqq \tau^G_{\ell-1}/8(d_1+d_2)r\log(1+\frac{\tau^G_{\ell-1}}{\lambda})$. Let $p\coloneqq d_1d_2$, $k\coloneqq (d_1+d_2)r$.
\State Let $\!\ucW_1 \!\leftarrow\! \ucW, \ell \!\leftarrow 1$, $\tau^G_0 \coloneqq \log (4\ell^2|\X| / \delta)$. 
Define $\bLambda_\ell$ as in \eqref{eq:bLambda},
$B^\ell_* \coloneqq (8\sqrt{\lambda} S +\sqrt{\lambda_\ell^{\perp}} S^{\perp}_{\ell})$.
\State Define a vectorized arm $\ow \coloneqq \left[\bx_{1: d_1}; \bz_{1: d_2}\right]$ and $\ow\in\ocW$. Let $\tau^E_{\ell} \coloneqq \frac{\sqrt{8d_1 d_2 r\log (4\ell^2|\W| / \delta_\ell)}}{S_{r}}$. Let the $E$-optimal design be $\small\bb^E_\ell \coloneqq \argmin_{\bb \in \triangle_{\ocW}}\big\|\big(\sum_{\ow\in\ocW} \bb_{\ow}\ow \ \ow^\top\big)^{-1}\big\|$.
\While{$\left|\ucW_{\ell}\right|>1$}
\State $\epsilon_{\ell}=2^{-\ell}$, $\delta_\ell = \delta/\ell^2$.
\State \textbf{(Stage 1:) Explore the Low-Rank Subspace}
\State Pull arm $\ow \in \ocW$ exactly $\left\lceil\wb^E_{\ell, \ow} \tau^E_{\ell}\right\rceil$ times
and observe rewards $r_t$, for $t=1, \ldots, \tau^E_{\ell}$.
\State Compute $\wTheta_{\ell}$ using \eqref{eq:convex-prog}.
\State \textbf{(Stage 2:) Reduction to low dimensional linear bandits}  
\State Let the SVD of $\wTheta_{\ell} = \wU_\ell\wD_{\ell}\wV_{\ell}^\top$. Rotate arms in active set $\ucW_{\ell-1}$  to build $\ucW_\ell$ following \eqref{eq:rotated-arm-set-phase}.
%
%
\State Let $\wb^G_{\ell} \coloneqq \argmin_{\bb_{\uw}}\max_{\uw,\uw'\in\ucW_\ell} \|\uw-\uw'\|^2_{(\sum_{\uw\in\ucW} \bb_{\uw}\uw\ \uw^\top + \bLambda_\ell/n )^{-1}}$. 
\State Define $\rho^G(\Y(\ucW_{\ell})) \coloneqq \min_{\bb_{\uw}}\max_{\uw,\uw'\in\ucW_\ell} \|\uw-\uw'\|^2_{(\sum_{\uw\in\ucW} \bb_{\uw}\uw\ \uw^\top + \bLambda_\ell/n )^{-1}}$. 
\State Set 
$\tau^G_{\ell} \!\!\coloneqq\!\! \lceil\frac{64 B^\ell_*\rho^G(\Y(\W_\ell))\log (4\ell^2|\W| / \delta_\ell)}{\epsilon_\ell^2}\rceil$. 
Then pull arm $\uw \in \ucW$ exactly $\left\lceil\wb^G_{\ell, \uw} \tau^G_{\ell}\right\rceil$ times  \indent and construct the least squares estimator $\wtheta_{\ell}$ using only the observations of this phase where \indent $\wtheta_\ell$ is defined in \eqref{eq:least-sq-reg}. Note that $\wtheta_\ell$ is also rotated following \eqref{eq:rotated-arm-set-phase}.
\State Eliminate arms such that 
$\ucW_{\ell+1} \leftarrow \ucW_{\ell} \backslash\{\uw \in \ucW_{\ell}: \max_{\uw^{\prime} \in \ucW_{\ell}}\langle \uw^{\prime}-\uw, \wtheta_{\ell}\rangle>2 \epsilon_{\ell}\}$
\State $\ell \leftarrow \ell+1$
\EndWhile
\State Output the arm in $\ucW_{\ell}$ and reshape to get the $\widehat{\bx}_*$ and $\widehat{\bz}_*$
\end{algorithmic}
\end{algorithm}

\subsection{Sample Complexity Analysis of Single-Task \gob}
We now analyze the sample complexity of \gob\ in the single-task setting through the following theorem.

\begin{customtheorem}{1}\textbf{(informal)}
\label{thm:single-task}
With probability at least $1 - \delta$, \gob\ returns the best arms $\bx_*$, $\bz_*$, and the number of samples used is bounded by
    $\tO\left(\tfrac{(d_1+d_2)r}{\Delta_{}^2} + \tfrac{\sqrt{d_1 d_2 r}}{S_r}\right)$.
\end{customtheorem}

\begin{discussion}
In \Cref{thm:single-task} the first quantity is the number of samples needed to identify the best arms $\bx_*$, $\bz_*$ while the second quantity is the number of samples to learn $\bTheta_*$ (which is required to find the best arms).
Note that the magnitude of $S_{r}$ would be free of $d_1,d_2$ since $\bTheta_*$ contains only $r$ nonzero singular values and $\|\bTheta_*\|\leq 1$, and hence we assume that $S_{r}=\Theta(1 / \sqrt{r})$ \citep{kang2022efficient}. So 
the sample complexity of single-task \gob\ scales as $\tO(\tfrac{(d_1+d_2)r}{\Delta_{}^2})$. However, if one runs RAGE \citep{fiez2019sequential} on the arms in $\X,\Z$ then the sample complexity will scale as $\tO(\tfrac{d_1d_2}{\Delta_{}^2})$.
\end{discussion}

\textbf{Proof (Overview) of \Cref{thm:single-task}:} \textbf{Step 1 (Subspace estimation in high dimension):} We denote the vectorized arms in high dimension as $\ow\in\ocW$. We run the $E$-optimal design to sample the arms in $\ocW$. Note that this $E$-optimal design satisfies the distribution assumption of \citet{kang2022efficient} which enables us to apply the \Cref{theorem:kang-low-rank} in \Cref{app:prob-tools}. This leads to $\|\wTheta_\ell - \bTheta_*\|_F^2 \leq \tfrac{C_1 d_1 d_2 r \log (2\left(d_1+d_2\right)/\delta)}{\tau^E_\ell}$ for some $C_1>0$. 
Also, note that in the first stage of the $\ell$-th phase by setting $\tau^E_{\ell} = \tfrac{\sqrt{8d_1 d_2 r\log (4\ell^2|\W| / \delta_\ell)}}{S_{r}}$ and sampling each arm $\ow \in \ocW$ exactly $\lceil\wb^E_{\ell, \ow} \tau^E_{\ell}\rceil$ times we are guaranteed that $\|\btheta_{k+1: p}^*\|_2=O(d_1d_2 r / \tau^E_\ell)$.
Summing up over $\ell=1$ to $\left\lceil\log _2\left(4 \Delta^{-1}\right)\right\rceil$ we get that the total sample complexity of the first stage is bounded by $\tO({\sqrt{d_1 d_2 r}}/{S_r})$.

\textbf{Step 2 (Effective dimension for rotated arms):} We  rotate the arms $\ow\in\ocW$ in high dimension to get the rotated arms $\uw\in\ucW_\ell$ in step $10$ of \Cref{alg:bandit-pure}. Then we show that the effective dimension of $\uw$ scales $8 k \log \left(1+{\tau^G_{\ell-1}}/{\lambda}\right)$ 
when $\lambda_\ell^{\perp}=\tfrac{\tau^G_{\ell-1}}{8k \log \left(1+\tau^G_{\ell-1} / \lambda\right)}$ in \Cref{lemma:valko-lemma-effective-dim-exp} of \Cref{app:single-bi}. Note that this requires a different proof technique than  \citet{valko2014spectral} where the budget $n$ is given apriori and effective dimension scales with $\log(n)$. 
This step also diverges from the pure exploration proof technique of \citet{fiez2019sequential,katz2020empirical} as there is no parameter $\lambda_\ell^\perp$ to control during phase $\ell$, and the effective dimensions in those papers do not depend on phase length.

\textbf{Step 3 (Bounded Support):} For any phase $\ell$, 
we can show that $ 1 \leq \rho^G(\Y(\ucW_{\ell})) \leq p / \gamma_{\Y}^2$ where, $\gamma_{\Y}=\max \{c>0: c \Y \subset \operatorname{conv}(\ucW \cup-\ucW)\}$ is the gauge norm of $\Y$ \citep{rockafellar2015convex}. Note that this is a worst-case dependence when $\rho^G(\Y(\ucW_{\ell}))$ scales with $p$. Substituting this value of $\rho^G(\Y(\ucW_{\ell}))$ in the definition of $\lambda_\ell^\perp$ we can show that $\bLambda_\ell$ does not depend on $\uw$ or $\by = \uw-\uw'$. Then following Theorem 21.1 in \citet{lattimore2020bandit}  we can show that the $G$-optimal design $\wb^G_\ell$ is equivalent to $D$-optimal design $\wb^D_\ell = \argmax_{\bb} \log \tfrac{\left|\sum_{\uw\in\ucW_\ell}\bb_{\uw}\uw\ \uw^\top + \bLambda_\ell\right|}{|\bLambda_\ell|}$. Then using Frank-Wolfe algorithm \citep{jamieson2022interactive} 
we can show the support $\wb^G_\ell$ or equivalently $\wb^D_\ell$ is bounded by at most $\frac{8k\log(1+{\tau^G_{\ell-1}}/{\lambda})(8k\log(1+{\tau^G_{\ell-1}}/{\lambda}) + 1)}{2}$. This is shown in \Cref{lemma:equivalence} (\Cref{app:single-bi}).

\textbf{Step 4 (Phase length and Elimination):} Using the \Cref{lemma:equivalence}, concentration  \Cref{eq:conc-lemma-bilin-exp}, and using the log determinant inequality in \Cref{lemma:valko-lemma-effective-dim-exp} and \Cref{prop:conc} (\Cref{app:single-bi}) we show that the phase length in the second stage is given by $\tau^G_\ell = \lceil\tfrac{8B^\ell_* \rho(\Y(\ucW_\ell))  \log \left(2|\W| / \delta\right)}{(\bx^\top(\wtheta_\ell-\btheta^*))^2}\rceil$. This is discussed in \Cref{dis:phase-length} (\Cref{app:single-bi}). We show in \Cref{lemma:elim} (\Cref{app:single-bi}) that setting this phase length and sampling each active arm in $\ucW_\ell$ exactly $\lceil\wb_{\ell, \uw} \tau^G_{\ell}\rceil$ times results in the elimination of sub-optimal actions with high probability.

\textbf{Step 5 (Total Samples):} We first show that the total samples in the second phase are bounded by $ O(\frac{k}{\gamma_{\Y}^2} \log (\frac{k\log_2( \Delta^{-1})|\ucW|}{\delta}) \lceil\log _2(\Delta^{-1})\rceil)$ where the effective dimension $k=(d_1+d_2)r$.
Finally, we combine the total samples of phase $\ell$ as $(\tau^E_\ell + \tau^G_\ell)$. The final sample complexity is given by summing over all phases from $\ell=1$ to $\left\lceil\log _2\left(4 \Delta^{-1}\right)\right\rceil$. 
%
%
%
The claim of the theorem follows by noting $\tO({k}/{\gamma_{\Y}^2})\leq \tO({k}/{\Delta^2})$.


\section{Multi-task Representation Learning}
\label{sec:multi-task}
In this section, we extend \gob\ to multi-task representation learning for the bilinear bandit setting. 
In the multi-task setting, we now have $M$ tasks, where each task $m\in[M]$ has a reward model stated in \eqref{eq:reward-model-multi}. 
%
%
The learning proceeds as follows: At each round $t=1, 2, \cdots$, for each task $m \in[M]$, the learner selects a left and right action $\bx_{m,t} \in \X$ and $\bz_{m,t}\in\Z$. After the player commits the batch of actions for each task $\left\{\bx_{m, t}, \bz_{m, t}: m \in[M]\right\}$, it receives the batch of rewards $\left\{r_{m, t}: m \in[M]\right\}$.
Finally recall that the goal is to identify the optimal left and right arms $\bx_{m,*}, \bz_{m,*}$ for each task $m$ with a minimum number of samples. We now state the following assumptions to enable representation learning across tasks.
%
%
\begin{assumption}\textbf{(Low-rank Tasks)}
    \label{assm:low-rank}
    We assume that the hidden parameter $\bTheta_{m,*}$ for all the $m\in[M]$ have a decomposition $\bTheta_{m,*} = \bB_1\bS_{m,*}\bB_2^\top$ and each $\bS_{m,*}$ has rank $r$.
\end{assumption}
This is similar to the assumptions in \citet{yang2020impact, yang2022nearly, du2023multi} ensuring the feature extractors are shared across tasks in the bilinear bandit setting.
\begin{assumption}\textbf{(Diverse Tasks)}
    \label{assm:diverse-task}
    We assume that $\sigma_{\min}(\tfrac{1}{M}\sum_{m=1}^M\bTheta_{m,*}) \geq \tfrac{c_0}{S_r}$, for some $c_0 >0$, $S_r$ is the $r$-th largest singular value of $\bTheta_{m,*}$ and $\sigma_{\min}(\bA)$ denotes the minimum eigenvalue of matrix $\bA$.  
\end{assumption}
This assumption is similar to the diverse tasks assumption of \citet{yang2020impact,yang2022nearly,tripuraneni2021provable,du2023multi} and ensures the possibility of
recovering the feature extractors $\bB_1$ and $\bB_2$ shared across tasks.

Our extension of \gob\ to the multi-task setting is now a phase-based, \textit{three-stage} arm elimination algorithm.
%
In \gob\ each phase $\ell=1,2,\ldots$ consists of three stages; the stage for estimation of feature extractors $\bB_1,\bB_2$, which runs for $\tau^E_{\ell}$ rounds, the stage for estimation of $\bS_{m,*}$ which runs for $\sum_m\widetilde{\tau}^E_{m,\ell}$ rounds, and a stage of pure exploration with rotated arms that runs for $\sum_m\tau^G_{m,\ell}$ rounds. We will define ${\tau^E_{m,\ell}}$ in \Cref{sec:stage-1-multi}, $\widetilde{\tau}^E_{m,\ell}$ in \Cref{sec:stage-2-multi}, while the rotated arms and $\tau^G_{m,\ell}$ are defined in \Cref{sec:stage-3-multi}. 
At the end of every phase, \gob\ eliminates sub-optimal arms to build the active set for the next phase and stops when only the optimal left and right arms are remaining.
Now we discuss the individual stages that occur at every phase $\ell=1,2,\ldots$ for multi-task \gob. 

\subsection{Estimating Feature Extractors $\bB_1$ and $\bB_2$ (Stage 1 of Phase $\ell$)}
\label{sec:stage-1-multi}

%
In the first stage of phase $\ell$, \gob\ leverages the batch of rewards $\left\{r_{m, t}: m \in[M]\right\}$ at every round $t$ from $M$ tasks to learn the feature extractors $\bB_1$ and $\bB_2$. 
%
%
To do this, \gob\ first vectorizes the $\bx\in\X,\bz\in\Z$ into a new vector $\ow = \left[\bx_{1: d_1}; \bz_{1: d_2}\right]\in\ocW_m$ and then solves the $E$-optimal design in step $3$ of \Cref{alg:bandit-pure-multi}.
%
Similar to the single-task setting (\Cref{sec:pure_exp_bi})  
\gob\ samples each $\ow\in\ocW_m$ for $\lceil \tau^E_{\ell}\bb^E_{\ell,\ow} \rceil$ times for each task $m$, where $\tau^E_{\ell} = \widetilde{O}(\sqrt{d_1 d_2 r}/ S_r)$ 
and $\bb^E_{\ell,\ow}$ is the solution to $E$-optimal design on $\ow$. 
Let the sampled arms for each task $m$ at round $s$ be denoted by $\bx_{m,s}$, $\bz_{m,s}$ which is obtained after reshaping $\ow_s$. 
Then it builds the estimator $\wZ_{\ell}$ as follows:
\begin{align}
\wZ_{\ell} &=\argmin_{\bTheta \in \R^{d_1 \times d_2}} L_{\ell}(\bTheta)+\gamma_\ell\|\bTheta\|_{\mathrm{nuc}},\nonumber\\  
L_{\ell}(\bTheta) &=\langle\bTheta, \bTheta\rangle-\frac{2}{M\tau^E_{\ell}} \sum_{m=1}^M \sum_{s=1}^{\tau^E_{\ell}}\langle\widetilde{\psi}_\nu(r_{m,s} \cdot Q(\bx_{m,s}\bz_{m,s}^\top)), \bTheta\rangle \label{eq:convex-prog-multi}
\end{align}
%
Then it performs SVD decomposition on $\wZ_{\ell}$, and let $\wB_1$, $\wB_2$ be the top-$k_1$ and top-$k_2$ left and right singular vectors of $\wZ_{\ell}$ respectively. These are the estimation of the feature extractors $\bB_1$ and $\bB_2$. 
%


\subsection{Estimating Hidden Parameter $\bS_{m,*}$ per Task (Stage 2 of phase $\ell$)}
\label{sec:stage-2-multi}

In the second stage of phase $\ell$, the goal is to  recover the hidden parameter  $\bS_{m,*}$ for each task $m$. \gob\ proceeds as follows: First, let $\tg_{m} = \bx^\top\wB_{1,\ell}$ and $\tv_{m} = \bz^\top\wB_{2,\ell}$ be the latent left and right arm respectively for each $m$. 
Then \gob\ defines the vector $\tw = [\tg_m ; \tv_m]\in\tW_m$ and then solves the $E$-optimal design in step $11$ of \Cref{alg:bandit-pure-multi}.
It then samples for each task $m$, the latent arm $\tw\in\tW_m$ for $\lceil \widetilde{\tau}^E_{m,\ell}\widetilde{\bb}^E_{m,\ell,\tw} \rceil$ times, where $\widetilde{\tau}^E_{m,\ell} \coloneqq \widetilde{O}(\sqrt{k_1 k_2 r}/ S_r)$ and 
$\widetilde{\bb}^E_{m,\ell,\tw}$ is the solution to $E$-optimal design on $\tw$. 
Then it builds estimator $\wS_{m, \ell}$ for each task $m$ in step $12$ as follows:
\begin{align}
    \wS_{m,\ell} &=\argmin _{\bTheta \in \R^{k_1 \times k_2}} L'_{\ell}(\bTheta)+\gamma_\ell\|\bTheta\|_{\mathrm{nuc}}, \nonumber\\
    L'_{\ell}(\bTheta) &=\langle\bTheta, \bTheta\rangle-\frac{2}{\widetilde{\tau}^E_{m,\ell}}\!\! \sum_{s=1}^{\widetilde{\tau}^E_{m,\ell}}\langle\widetilde{\psi}_\nu(r_{m,s} \cdot Q(\tg_{m,s}\tv_{m,s}^\top)), \bTheta\rangle \label{eq:convex-prog-multi-wS}
\end{align}
Once \gob\ recovers the $\wS_{m,\ell}$ for each task $m$ it has reduced the $d_1d_2$ bilinear bandit to a $k_1k_2$ dimension bilinear bandit where the left and right arms are $\tg_m\in\tG_m$, $\tv_m\in\tV_m$ respectively. 

\vspace*{-0.5em}
\subsection{Optimal Design for Rotated Arms per Task (Stage 3 of phase $\ell$)}
\vspace*{-0.5em}
\label{sec:stage-3-multi}

In the third stage of phase $\ell$, similar to \Cref{alg:bandit-pure}, the multi-task \gob\ defines the rotated arm set $\uG_m,\uV_m$ for each task $m$ for these $k_1 k_2$ dimensional bilinear bandits. 
Let the SVD of $\wS_{m,\ell} = \wU_{m,\ell}\wD_{m,\ell}\wV^\top_{m,\ell}$. 
Define $\wH_{m,\ell}=[\wU_{m,\ell} \wU^\perp_{m,\ell}]^{\top} \wS_{m,\ell}[\wV_{m,\ell} \wV_{m,\ell}^{\perp}]$. Then define the vectorized arm set so that the last $\left(k_1-r\right) \cdot\left(k_2-r\right)$ components are from the complementary subspaces as follows:
\begin{align}
\ucW_{m,\ell} &= \left\{\left[\vec\left(\tg_{m,1: r} \tv_{m,1: r}^{\top}\right) ; \vec\left(\tg_{m,r+1: k_1} \tv_{m,1: r}^{\top}\right) ; \vec\left(\tg_{m,1: r} \tv_{m,r+1: k_2}^{\top}\right) ; \right.\right.\nonumber\\
&\left.\left.\qquad\qquad\qquad\vec\left(\tg_{m,r+1: k_1} \tv_{m,r+1: k_2}^{\top}\right)\right]\right\}\nonumber\\
\wtheta_{m,\ell, 1: k} &=[\vec(\wH_{m,\ell,1: r, 1: r}) ; \vec(\wH_{m,\ell,r+1: k_1, 1: r}) ; \vec(\wH_{m,\ell,1: r, r+1: k_2})], \nonumber\\
 \btheta_{\ell, k+1: p} &=\vec(\wH_{m,\ell,r+1: k_1, r+1: k_2}).
\label{eq:rotated-arm-set-phase-multi}
\end{align}
This is shown in step $14$ of \Cref{alg:bandit-pure-multi}. 
Now we proceed similarly to \Cref{sec:optimal-design-single}. We construct a per-task optimal design for the rotated arm set $\uV_m, \uG_m$ and define the $\uw = [\tg_{m,1:d_1};\tv_{m,1:d_2}]$ and $\tw\in\tW_m$ where $\tg_{m}\in\uG_m$ and $\tv_m\in\uV_m$ respectively. 
Following \eqref{eq:bi-level-opt-single} we know that to minimize the sample complexity for the $m$-th bilinear bandit we need to sample according to $G$-optimal design
\begin{align}
    \wb^G_{m,\ell} = \argmin_{\bb_{m,\uw}}\max_{\uw,\uw'\in\ucW_{m,\ell}} \|\uw-\uw'\|^2_{(\sum_{\uw\in\ucW_{m}} \bb_{m,\uw}\uw\ \uw^\top + \bLambda_{m,\ell}/n )^{-1}} \label{eq:bi-level-opt-multi}
\end{align} 
Then \gob\ runs $G$-optimal design on the arm set $\ucW_\ell$ following the \eqref{eq:bi-level-opt-multi} and then samples each $\uw\in\ucW_{m,\ell}$ for $\lceil \tau^G_{m,\ell}\wb^G_{m,\ell,\uw} \rceil$ times where $\wb^G_{m,\ell,\uw}$ is the solution to the $G$-optimal design, and $\tau^G_\ell$ is defined in step $17$ of \Cref{alg:bandit-pure-multi}. So the total length of phase $\ell$, combining stages $1,2$ and $3$ is $(\tau^E_\ell + \sum_m\widetilde{\tau}^E_{m,\ell} + \sum_m\tau^G_{m,\ell})$ rounds. Observe that the stage 1 and 2 design is on the whole arm set $\ocW, \tW_m$ whereas the stage $3$ design is on the refined active set $\ucW_{m,\ell}$.
%
%
Let at the stage $3$ of $\ell$-th phase the actions sampled be denoted by the matrix $\uW_{m,\ell}\in\R^{\tau^G_{m,\ell}\times k_1k_2}$ and observed rewards $\mathbf{r}_m\in\R^{\tau^G_{m,\ell}\times k_1k_2}$. Define the positive diagonal matrix $\bLambda_{m,\ell}$ according to \eqref{eq:bLambda} but set $p=k_1k_2$ and $k=(k_1 + k_2)r$. 
%
Then similar to \Cref{sec:optimal-design-single} we can build for each task $m$ only from the observations from this phase 
%
%
\begin{align}
\wtheta_{m,\ell}=\arg \min_{\btheta} \tfrac{1}{2}\|\uW_{m,\ell} \btheta-\mathbf{r}_m\|_2^2+\tfrac{1}{2}\|\btheta\|_{\bLambda_{m,\ell}}^2
\label{eq:btheta-phase-multi}
\end{align}
Finally \gob\ eliminates the sub-optimal arms using the estimator $\wtheta_{m,\ell}$ to build the next phase active set $\ucW_{m,\ell}$ and stops when $|\ucW_{m,\ell}| = 1$. The full pseudo-code is given in \Cref{alg:bandit-pure-multi}.

\vspace*{-0.6em}
\begin{algorithm}
\caption{G-Optimal Design for Bilinear Bandits (\gob) for multi-task setting}
\label{alg:bandit-pure-multi}
\begin{algorithmic}[1]
\State Input: arm set $\X,\Z$, confidence $\delta$, rank $r$ of $\bTheta_*$, spectral bound $S_r$ of $\bTheta_*$, $S, S_{m,\ell}^{\perp} =  \tfrac{8k_1 k_2 r}{\widetilde{\tau}^E_{m,\ell} S_{r}^2} \log (\tfrac{k_1+k_2}{\delta_\ell}), \lambda, \lambda_{m,\ell}^{\perp} \!\!=\!\! \tfrac{\tau^G_{m,\ell-1}}{(8(k_1\!+\!k_2)r\log(1+{\tau^G_{m,\ell-1}}/{\lambda}))}$. Let $p=k_1k_2$, $k=(k_1+k_2)r$.
\State Let $\ucW_{m,1} \!\leftarrow\! \ucW_m, \ell \!\leftarrow\! 1$, $\tau^G_0 \!\!= \log (4\ell^2|\X| / \delta)$. Define $\bLambda_{m,\ell}$ as in \eqref{eq:bLambda}, 
$B^\ell_{m,*} \!\!\coloneqq\! (8\sqrt{\lambda} S \!\!+\!\!\sqrt{\lambda_{m,\ell}^{\perp}} S^{\perp}_{m,\ell})$ 
\State Define arm $\ow = \left[\bx_{1: d_1}; \bz_{1: d_2}\right]$ and $\ow\in\ocW_m$. Let $\tau^E_{\ell} = \frac{\sqrt{8d_1 d_2 r\log (4\ell^2|\W| / \delta_\ell)}}{S_{r}}$. Let $E$-optimal design be $\small\bb^E_{\ell} \!=\! \argmin_{\bb \in \triangle_{\ocW}}\big\|(\sum_{\ow\in\ocW} \bb_{\ow}\ow \ \ow^\top)^{-1}\big\|$.
\While{$\exists m\in [M], \left|\ucW_{m,\ell}\right|>1$}
\State $\epsilon_{\ell}=2^{-\ell}$, $\delta_\ell = \delta/\ell^2$.
%
%
%
\State \textbf{(Stage 1:) Explore the Low-Rank Subspace}
\State Pull arm $\ow \in \ocW$ exactly $\lceil\wb^E_{\ell, \ow} \tau^E_{\ell}\rceil$ times for each task $m$
and observe rewards $\{r_{m,t}\}_{t=1}^{\tau^E_{\ell}}$.
\State Compute $\wZ_\ell$ using \eqref{eq:convex-prog-multi}.
%
\State \textbf{(Stage 2:) Build $\wS_{m,\ell}$ for each task $m$}
\State Let $\wB_{1,\ell}$, $\wB_{2,\ell}$ be the top-$k_1$ left and top-$k_2$ right singular vectors of $\wZ_{\ell}$ respectively. Build \indent $\tg_{m} = \bx^\top\wB_{1,\ell}$ and $\tv_{m} = \bz^\top\wB_{2,\ell}$ for all $\bx\in\X$ and $\bz\in\Z$ for each $m$. 
\State Define a vectorized arm $\tw = \left[\tg_{m,1: k_1}; \tv_{m,1: k_2}\right]$ and $\tw\in\tW_m$ for each $m$. Let $\widetilde{\tau}^E_{m,\ell} \!=\! \frac{\sqrt{8k_1 k_2 r\log (4\ell^2|\W| / \delta_\ell)}}{S_{r}}$, 
%
and $\widetilde{\bb}^E_{m,\ell} = \argmin_{\bb_m \in \triangle_{\tW_m}}\big\|\big(\sum_{\tw\in\tW_m} \bb_{m,\tw}\tw \ \tw^\top\big)^{-1}\big\|$.
\State Pull arm $\tw \in \tW_m$ exactly $\left\lceil\widetilde{\bb}^E_{m,\ell, \tw} \widetilde{\tau}^E_{m,\ell}\right\rceil$ times
and observe rewards $r_{m,t}$, for $t= \indent 1, \ldots, \widetilde{\tau}^E_{m,\ell}$, for each task $m$. Then  compute $\wS_{m,\ell}$ using \eqref{eq:convex-prog-multi-wS} for each $m$.
\State \textbf{(Stage 3:) Reduction to low dimensional linear bandits for each task $m$}  
\State SVD of $\wS_{m,\ell} \!\!=\!\! \wU_{m,\ell}\wD_{m,\ell}\wV^\top_{m,\ell}$. Rotate arms in active set $\ucW_{m,\ell-1}$  to build $\ucW_{m,\ell}$ using \eqref{eq:rotated-arm-set-phase-multi}.
\State Let $\wb^G_{m,\ell} \!\!=\!\! \argmin_{\bb_{m,\uw}}\max_{\uw,\uw'\in\ucW_{m,\ell}} \|\uw-\uw'\|^2_{(\sum_{\uw_m\in\ucW_{m}} \bb_{m,\uw}\uw_m\ \uw_m^\top + \bLambda_{m,\ell}/n )^{-1}}$. 
\State Define $\rho^G(\Y(\ucW_{m,\ell})) \!\!=\!\! \min\limits_{\bb_{m,\uw}}\max\limits_{\uw,\uw'\in\ucW_{m,\ell}} \|\uw \!-\!\uw'\|^2_{(\sum_{\uw\in\ucW_m} \bb_{m,\uw}\uw\ \uw^\top \!+\! \tfrac{\bLambda_{m,\ell}}{n} )^{-1}}.$ 
\State Set 
$\tau^G_{m,\ell} \!\!=\!\! \frac{64 B^{\ell}_{m,*}\rho^G(\Y(\W_{m,\ell}))\log (4\ell^2|\W_m| / \delta_\ell)}{\epsilon_\ell^2}$. 
Then pull arm $\uw \in \ucW_m$ for each task $m$  \indent exactly $\lceil\wb_{m,\ell, \uw} \tau^G_{m,\ell}\rceil$ times and construct the least squares estimator $\wtheta_{m,\ell}$ using only the \indent  observations of this phase where $\wtheta_{m,\ell}$ is defined in \eqref{eq:btheta-phase-multi}.
\State Eliminate arms such that 
$\ucW_{m,\ell+1} \leftarrow \ucW_{m,\ell} \backslash\left\{\uw_m \in \ucW_{m,\ell}: \max_{\uw^{\prime}_m \in \ucW_{m,\ell}}\left\langle \uw^{\prime}_m-\uw_m, \wtheta_{m,\ell}\right\rangle>2 \epsilon_{m,\ell}\right\}$
\State $\ell \leftarrow \ell+1$
\EndWhile
\State Output the arm in $\ucW_{m,\ell}$ and reshape to get the $\widehat{\bx}_{m,*}$ and $\widehat{\bz}_{m,*}$ for each task $m$. 
\end{algorithmic}
\end{algorithm}

\subsection{Sample Complexity analysis of Multi-task \gob}
We now present the sample complexity of \gob\ for the multi-task setting.

\begin{customtheorem}{2}\textbf{(informal)}
\label{thm:multi-task}
With probability at least $1 - \delta$, \gob\ returns the best arms $\bx_{m,*}$, $\bz_{m,*}$ for each task $m$, and the total number of samples is bounded by $\widetilde{O}\left(\tfrac{M(k_1+k_2)r}{\Delta_{}^2} + \tfrac{M\sqrt{k_1 k_2 r}}{S_r} + \tfrac{\sqrt{d_1 d_2 r}}{S_r}\right)$.
\end{customtheorem}

\begin{discussion}
In \Cref{thm:multi-task} the first quantity is the sample complexity to identify the best arms $\bx_{m,*}$, $\bz_{m,*}$ and the second quantity is the number of samples to learn $\bS_{m,*}$ for each task $m$. This is required to rotate the arms to reach the effective dimension of $(k_1+k_2)r$. Finally, the third quantity is the number of samples needed to learn $\bTheta_{m,*}$ (which in turn is used to estimate the feature extractors $\bB_1$ and $\bB_2$ to learn the $\bS_{m,*}$).
Again we assume that $S_{r}=\Theta(1 / \sqrt{r})$ \citep{kang2022efficient}. So 
the sample complexity of multi-task \gob\ scales as $\tO({M(k_1+k_2)r}/{\Delta_{}^2})$. However, if one runs DouExpDes \citep{du2023multi} then the sample complexity will scale as $\tO(M(k_1k_2)/\Delta^2)$ which is worse than \gob when $r\ll k_1$ or $k_2$.
\end{discussion}

\textbf{Proof (Overview) of \Cref{thm:multi-task}:} \textbf{Step 1 (Subspace estimation in high dimension):} The first steps diverge from the proof technique of \Cref{thm:single-task}. We now build the average estimator $\wZ_{\ell}$ to estimate the quantity $\bZ_* = \frac{1}{M}\sum_{m=1}^M\bTheta_{*,m}$ using \eqref{eq:convex-prog-multi}. This requires us to modify the \Cref{theorem:kang-low-rank} in \Cref{app:prob-tools} and apply Stein's lemma (\Cref{lemma:kang-b2}) to get a bound of $\|\wZ_{\ell}- \bZ_*\|_F^2 \leq \tfrac{C_1 d_1 d_2 r \log \left(2\left(d_1+d_2\right)/\delta\right)}{\tau^E_{\ell}}$ for some $C_1>0$. This is shown in \Cref{theorem:kang-low-rank-multi} in \Cref{app:multi-bi}. 
Summing up over $\ell=1$ to $\left\lceil\log _2\left(4 \Delta^{-1}\right)\right\rceil$ we get that the total samples complexity of the first stage is bounded by $\widetilde{O}({\sqrt{d_1 d_2 r}}/{S_r})$.
%
%

\textbf{Step 2 (Estimation of left and right feature extractors):} Now using the estimator in \eqref{eq:convex-prog-multi} we get a good estimation of the feature extractors $\bB_1$ and $\bB_2$. Let $\wB_{1,\ell}$, $\wB_{2,\ell}$ be the top-$k_1$ left and top-$k_2$ right singular vectors of $\wZ_\ell$ respectively. Then using the Davis-Kahan $\sin \theta$ Theorem \citep{bhatia2013matrix} in \Cref{lemma:david-kahan-U}, \ref{lemma:david-kahan-V} (\Cref{app:multi-bi}) we have $\|(\wB^{\perp}_{{1,\ell}})^{\top} \bB_1\|,\|(\wB^{\perp}_{{2,\ell}})^{\top} \bB_2\| \leq \tO(\sqrt{(d_1 + d_2)r/M \tau^E_{\ell}})$.

\textbf{Step 3 (Estimation of $\wS_{m,\ell}$ in low dimension):} Now we estimate the quantity $\wS_{m,\ell}\in\R^{k_1\times k_2}$ for each task $m$. To do this we first build the latent arms  $\tg_{m} = \bx^\top\wU_\ell$ and $\tv_{m} = \bz^\top\wV_\ell$ for all $\bx\in\X$ and $\bz\in\Z$ for each $m$, and sample them following the $E$-optimal design in 
step $12$ of \Cref{alg:bandit-pure-multi}.
%
We also show in \Cref{lemma:min-sigma} (\Cref{app:multi-bi}) that $\sigma_{\min}(\sum_{\tw\in\tW}\bb_{\tw} \tw \tw^\top) > 0$ which enables us to sample following $E$-optimal design.
Then use the estimator in \eqref{eq:convex-prog-multi-wS}. Then in \Cref{theorem:kang-low-rank-multi-wS} we show that $\|\wS_{m,\ell}- \mu^*\bS_{m,*}\|_F^2 \leq {C_1 k_1 k_2 r \log \left(\frac{2\left(k_1+k_2\right)}{\delta_\ell}\right)}/{\tau^E_{m,\ell}}$ holds with probability greater than $(1-\delta)$.
Also, note that in the second phase by setting $\widetilde{\tau}^E_{m,\ell} = {\sqrt{8k_1 k_2 r\log (4\ell^2|\W| / \delta_\ell)}}/{S_{r}}$ and sampling each arm $\ow \in \ocW$ exactly $\lceil\wb^E_{\ell, \ow} \widetilde{\tau}^E_{m,\ell}\rceil$ times we are guaranteed that $\|\btheta_{k+1: p}^*\|_2=O(k_1k_2 r / \widetilde{\tau}^E_{m,\ell})$ in the $\ell$-th phase.
Summing up over $\ell=1$ to $\left\lceil\log _2\left(4 \Delta^{-1}\right)\right\rceil$ across each task $M$ we get that the total samples complexity of the second stage is bounded by $\widetilde{O}({M\sqrt{k_1 k_2 r}}/{S_r})$.

\textbf{Step 4 (Convert to $k_1k_2$ bilinear bandits):} Once \gob\ recovers $\wS_{m,\tau^E_\ell}$ it rotates the arm set following \eqref{eq:rotated-arm-set-phase-multi} to build $\ucW_m$ to get the $k_1k_2$ bilinear bandits. The rest of the steps follow the same way as in steps $2,3$ and $4$ of proof of \Cref{thm:single-task}.

\textbf{Step 5 (Total Samples):} We show the total samples in the third phase are bounded by $ O(\frac{k}{\gamma_{\Y}^2} \log (\frac{k\log_2( \Delta^{-1})|\ucW|}{\delta}) \lceil\log _2(\Delta^{-1})\rceil)$ where the effective dimension $k=(k_1+k_2)r$.
The total samples of phase $\ell$ is given by $\tau^E_{\ell} + \sum_{m}(\widetilde{\tau}^E_{m,\ell} + \tau^G_{m,\ell})$. Finally, we get the total sample complexity by summing over all phases from $\ell\!=\!1$ to $\lceil\log _2\left(4 \Delta^{-1}\right)\rceil$. 
The claim of the theorem follows by noting $\tO({k}/{\gamma_{\Y}^2})\leq \tO({k}/{\Delta^2})$.



\section{Experiments}
\label{sec:expt}
In this section, we conduct proof-of-concept experiments on both single and multi-task bilinear bandits. 
In the single-task experiment,  we compare against the state-of-the-art RAGE algorithm  \citep{fiez2019sequential}. We show in \Cref{app:fig-expt-1} (left) that \gob\ requires fewer samples than the RAGE with an increasing number of arms.
In the multi-task experiment,  we compare against the state-of-the-art DouExpDes algorithm  \citep{du2023multi}. We show in \Cref{app:fig-expt-1} (right) that \gob\ requires fewer samples than DouExpDes with an increasing number of tasks. As experiments are not a central contribution, we defer a fuller description of the experimental set-up to \Cref{app:expt}.
%
\begin{figure}[h]
  \begin{center}
    \begin{tabular}{cc}
    \includegraphics[width=0.33\textwidth]{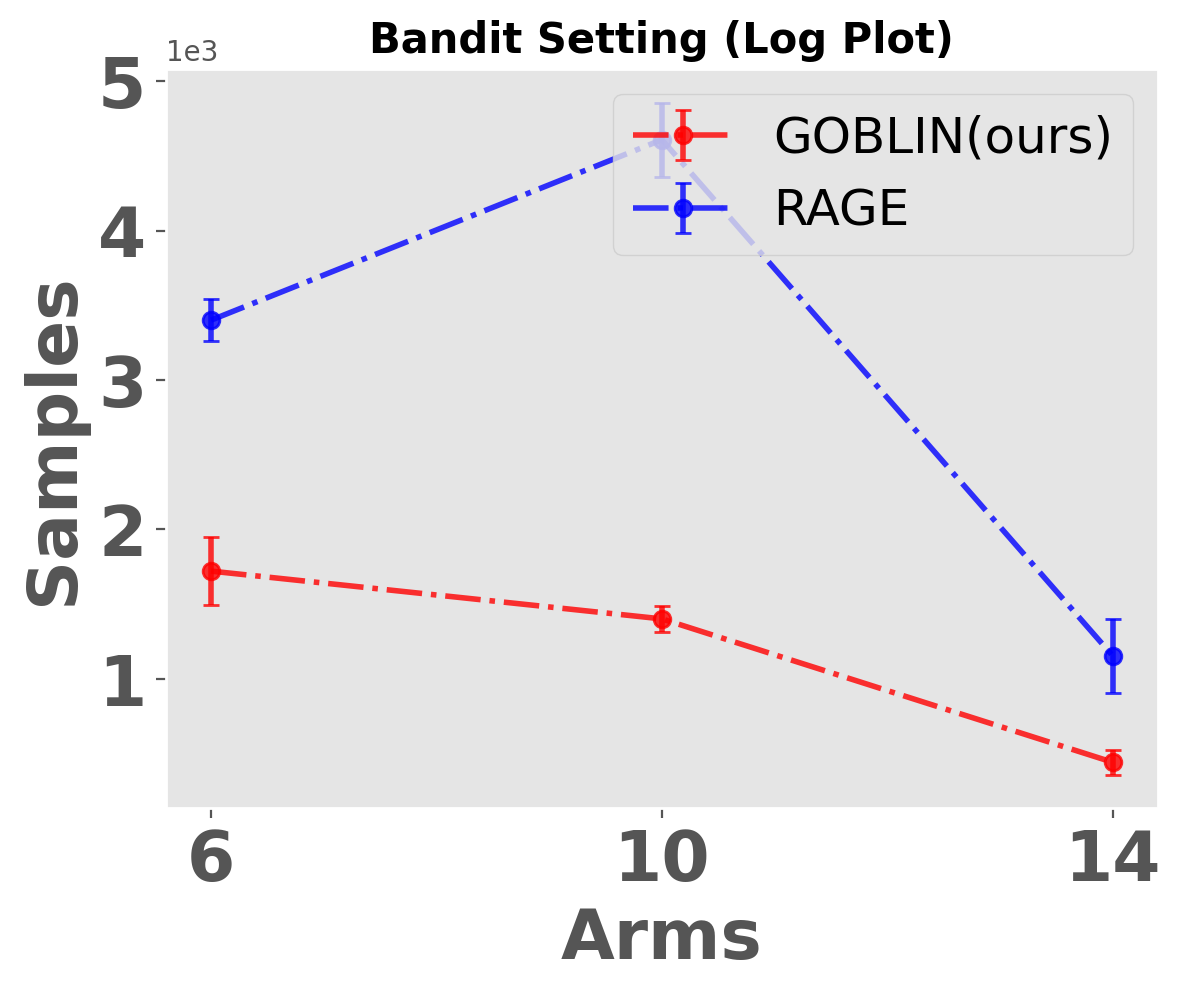} &
    \includegraphics[width=0.35\textwidth]{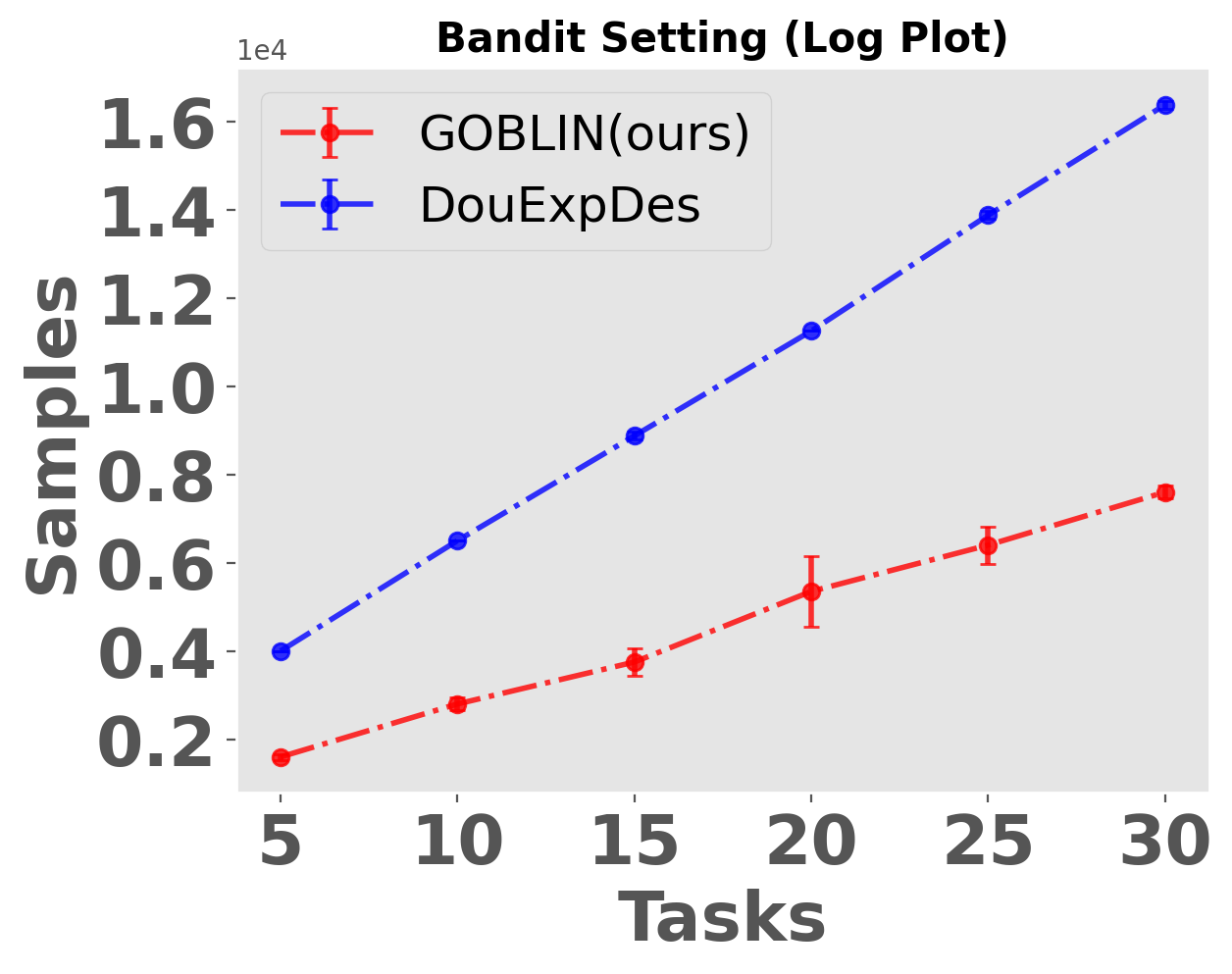} 
    \end{tabular}
    \caption{(Left) Single-task experiment: results show the number of samples required to identify the optimal action pair for differing numbers of actions. (Right) Multi-task experiment: results show the number of samples required to identify the optimal action pair for varying numbers of tasks.}
    \label{app:fig-expt-1}
    \vspace{-2.5em}
  \end{center}
\end{figure}
%

%


\section{Conclusions and Future Directions}
\label{sec:conc}
In this paper, we formulated the first pure exploration multi-task representation learning problem. We introduce an algorithm, \gob\, that achieves a sample complexity bound of $\tO((d_1+d_2)r/\Delta^2)$ which improves upon the $\tO((d_1d_2)/\Delta^2)$ sample complexity of RAGE \citep{fiez2019sequential} in a single-task setting. We then extend \gob\ for multi-task pure exploration bilinear bandit problems by learning latent features which enables  sample complexity that scales as $\tO(M(k_1+k_2)r/\Delta^2)$ which improves over the $\tO(M(k_1k_2)/\Delta^2)$ sample complexity of DouExpDes \citep{du2023multi}. 
Our analysis opens an exciting opportunity to analyze representation learning in the kernel and neural bandits \citep{zhu2021pure, mason2021nearly}. 
We can leverage the fact that this type of optimal design does not require the arm set to be an
ellipsoid \citep{du2023multi} which enables us to extend our analysis to non-linear representations.

\newpage
\bibliographystyle{apalike}
\bibliography{biblio}

\begin{thebibliography}{}

\bibitem[Agarwal et~al., 2009]{agarwal2009explore}
Agarwal, D., Chen, B.-C., and Elango, P. (2009).
\newblock Explore/exploit schemes for web content optimization.
\newblock In {\em 2009 Ninth IEEE International Conference on Data Mining},
  pages 1--10. IEEE.

\bibitem[Bengio et~al., 2013]{bengio2013representation}
Bengio, Y., Courville, A., and Vincent, P. (2013).
\newblock Representation learning: A review and new perspectives.
\newblock {\em IEEE transactions on pattern analysis and machine intelligence},
  35(8):1798--1828.

\bibitem[Bhatia, 2013]{bhatia2013matrix}
Bhatia, R. (2013).
\newblock {\em Matrix analysis}, volume 169.
\newblock Springer Science \& Business Media.

\bibitem[Bragman et~al., 2018]{bragman2018uncertainty}
Bragman, F.~J., Tanno, R., Eaton-Rosen, Z., Li, W., Hawkes, D.~J., Ourselin,
  S., Alexander, D.~C., McClelland, J.~R., and Cardoso, M.~J. (2018).
\newblock Uncertainty in multitask learning: joint representations for
  probabilistic mr-only radiotherapy planning.
\newblock In {\em Medical Image Computing and Computer Assisted
  Intervention--MICCAI 2018: 21st International Conference, Granada, Spain,
  September 16-20, 2018, Proceedings, Part IV 11}, pages 3--11. Springer.

\bibitem[Du et~al., 2020]{du2020few}
Du, S.~S., Hu, W., Kakade, S.~M., Lee, J.~D., and Lei, Q. (2020).
\newblock Few-shot learning via learning the representation, provably.
\newblock {\em arXiv preprint arXiv:2002.09434}.

\bibitem[Du et~al., 2023]{du2023multi}
Du, Y., Huang, L., and Sun, W. (2023).
\newblock Multi-task representation learning for pure exploration in linear
  bandits.
\newblock {\em arXiv preprint arXiv:2302.04441}.

\bibitem[Fiez et~al., 2019]{fiez2019sequential}
Fiez, T., Jain, L., Jamieson, K.~G., and Ratliff, L. (2019).
\newblock Sequential experimental design for transductive linear bandits.
\newblock {\em Advances in neural information processing systems}, 32.

\bibitem[Jamieson and Jain, 2022]{jamieson2022interactive}
Jamieson, K. and Jain, L. (2022).
\newblock Interactive machine learning.

\bibitem[Jun et~al., 2019]{jun2019bilinear}
Jun, K.-S., Willett, R., Wright, S., and Nowak, R. (2019).
\newblock Bilinear bandits with low-rank structure.
\newblock In {\em International Conference on Machine Learning}, pages
  3163--3172. PMLR.

\bibitem[Kang et~al., 2022]{kang2022efficient}
Kang, Y., Hsieh, C.-J., and Lee, T. C.~M. (2022).
\newblock Efficient frameworks for generalized low-rank matrix bandit problems.
\newblock {\em Advances in Neural Information Processing Systems},
  35:19971--19983.

\bibitem[Katz-Samuels et~al., 2020]{katz2020empirical}
Katz-Samuels, J., Jain, L., Jamieson, K.~G., et~al. (2020).
\newblock An empirical process approach to the union bound: Practical
  algorithms for combinatorial and linear bandits.
\newblock {\em Advances in Neural Information Processing Systems},
  33:10371--10382.

\bibitem[Kiefer and Wolfowitz, 1960]{kiefer1960equivalence}
Kiefer, J. and Wolfowitz, J. (1960).
\newblock The equivalence of two extremum problems.
\newblock {\em Canadian Journal of Mathematics}, 12:363--366.

\bibitem[Lattimore and Szepesv{\'a}ri, 2020]{lattimore2020bandit}
Lattimore, T. and Szepesv{\'a}ri, C. (2020).
\newblock {\em Bandit algorithms}.
\newblock Cambridge University Press.

\bibitem[Li et~al., 2014]{li2014joint}
Li, J., Zhang, H., Zhang, L., Huang, X., and Zhang, L. (2014).
\newblock Joint collaborative representation with multitask learning for
  hyperspectral image classification.
\newblock {\em IEEE Transactions on Geoscience and Remote Sensing},
  52(9):5923--5936.

\bibitem[Lu et~al., 2021]{lu2021low}
Lu, Y., Meisami, A., and Tewari, A. (2021).
\newblock Low-rank generalized linear bandit problems.
\newblock In {\em International Conference on Artificial Intelligence and
  Statistics}, pages 460--468. PMLR.

\bibitem[Luo et~al., 2017]{luo2017network}
Luo, Y., Zhao, X., Zhou, J., Yang, J., Zhang, Y., Kuang, W., Peng, J., Chen,
  L., and Zeng, J. (2017).
\newblock A network integration approach for drug-target interaction prediction
  and computational drug repositioning from heterogeneous information.
\newblock {\em Nature communications}, 8(1):573.

\bibitem[Mason et~al., 2021]{mason2021nearly}
Mason, B., Camilleri, R., Mukherjee, S., Jamieson, K., Nowak, R., and Jain, L.
  (2021).
\newblock Nearly optimal algorithms for level set estimation.
\newblock {\em arXiv preprint arXiv:2111.01768}.

\bibitem[Maurer et~al., 2016]{maurer2016benefit}
Maurer, A., Pontil, M., and Romera-Paredes, B. (2016).
\newblock The benefit of multitask representation learning.
\newblock {\em Journal of Machine Learning Research}, 17(81):1--32.

\bibitem[Minsker, 2018]{minsker2018sub}
Minsker, S. (2018).
\newblock Sub-gaussian estimators of the mean of a random matrix with
  heavy-tailed entries.
\newblock {\em The Annals of Statistics}, 46(6A):2871--2903.

\bibitem[Mukherjee et~al., 2022]{mukherjee2022chernoff}
Mukherjee, S., Tripathy, A.~S., and Nowak, R. (2022).
\newblock Chernoff sampling for active testing and extension to active
  regression.
\newblock In {\em International Conference on Artificial Intelligence and
  Statistics}, pages 7384--7432. PMLR.

\bibitem[Mukherjee et~al., 2023]{mukherjee2023speed}
Mukherjee, S., Xie, Q., Hanna, J., and Nowak, R. (2023).
\newblock Speed: Experimental design for policy evaluation in linear
  heteroscedastic bandits.
\newblock {\em arXiv preprint arXiv:2301.12357}.

\bibitem[Pukelsheim, 2006]{pukelsheim2006optimal}
Pukelsheim, F. (2006).
\newblock {\em Optimal design of experiments}.
\newblock SIAM.

\bibitem[Reyes et~al., 2021]{reyes2021adaptable}
Reyes, L. J.~P., Oviedo, N.~B., Camacho, E.~C., and Calderon, J.~M. (2021).
\newblock Adaptable recommendation system for outfit selection with deep
  learning approach.
\newblock {\em IFAC-PapersOnLine}, 54(13):605--610.

\bibitem[Rockafellar, 2015]{rockafellar2015convex}
Rockafellar, R. (2015).
\newblock Convex analysis. princeton landmarks in mathematics and physics.

\bibitem[Shamir, 2011]{shamir2011variant}
Shamir, O. (2011).
\newblock A variant of azuma's inequality for martingales with subgaussian
  tails.
\newblock {\em arXiv preprint arXiv:1110.2392}.

\bibitem[Shen et~al., 2023]{shen2023user}
Shen, Q., Han, S., Han, Y., and Chen, X. (2023).
\newblock User review analysis of dating apps based on text mining.
\newblock {\em Plos one}, 18(4):e0283896.

\bibitem[Soare et~al., 2014]{soare2014best}
Soare, M., Lazaric, A., and Munos, R. (2014).
\newblock Best-arm identification in linear bandits.
\newblock {\em Advances in Neural Information Processing Systems}, 27.

\bibitem[Stein et~al., 2004]{stein2004use}
Stein, C., Diaconis, P., Holmes, S., and Reinert, G. (2004).
\newblock Use of exchangeable pairs in the analysis of simulations.
\newblock {\em Lecture Notes-Monograph Series}, pages 1--26.

\bibitem[Tripuraneni et~al., 2021]{tripuraneni2021provable}
Tripuraneni, N., Jin, C., and Jordan, M. (2021).
\newblock Provable meta-learning of linear representations.
\newblock In {\em International Conference on Machine Learning}, pages
  10434--10443. PMLR.

\bibitem[Valko et~al., 2014]{valko2014spectral}
Valko, M., Munos, R., Kveton, B., and Koc{\'a}k, T. (2014).
\newblock Spectral bandits for smooth graph functions.
\newblock In {\em International Conference on Machine Learning}, pages 46--54.
  PMLR.

\bibitem[Yang et~al., 2020]{yang2020impact}
Yang, J., Hu, W., Lee, J.~D., and Du, S.~S. (2020).
\newblock Impact of representation learning in linear bandits.
\newblock {\em arXiv preprint arXiv:2010.06531}.

\bibitem[Yang et~al., 2022]{yang2022nearly}
Yang, J., Lei, Q., Lee, J.~D., and Du, S.~S. (2022).
\newblock Nearly minimax algorithms for linear bandits with shared
  representation.
\newblock {\em arXiv preprint arXiv:2203.15664}.

\bibitem[Zhang et~al., 2012]{zhang2012multi}
Zhang, D., Shen, D., Initiative, A. D.~N., et~al. (2012).
\newblock Multi-modal multi-task learning for joint prediction of multiple
  regression and classification variables in alzheimer's disease.
\newblock {\em NeuroImage}, 59(2):895--907.

\bibitem[Zhao et~al., 2009]{zhao2009reinforcement}
Zhao, Y., Kosorok, M.~R., and Zeng, D. (2009).
\newblock Reinforcement learning design for cancer clinical trials.
\newblock {\em Statistics in medicine}, 28(26):3294--3315.

\bibitem[Zhu et~al., 2021]{zhu2021pure}
Zhu, Y., Zhou, D., Jiang, R., Gu, Q., Willett, R., and Nowak, R. (2021).
\newblock Pure exploration in kernel and neural bandits.
\newblock {\em Advances in neural information processing systems},
  34:11618--11630.

\end{thebibliography}

\newpage
\appendix
\section{Appendix}
\subsection{Probability Tools and Previous Results}
\label{app:prob-tools}

In this section, we state useful lemmas we use in our proofs and previous results. 

\begin{lemma}
\label{lemma:kang-b2} (Generalized Stein's Lemma, \citep{stein2004use}) For a random variable $X$ with continuously differentiable density function $\bp: \R^d \rightarrow \R$, and any continuously differentiable function $f: \R^d \rightarrow \R$. Let $Q(\cdot)$ be a scoring function defined in \Cref{def:score-func}. If the expected values of both $\nabla f(X)$ and $f(X) \cdot Q(X)$ regarding the density $p$ exist, then they are identical, i.e.
\begin{align*}
\E[f(X) \cdot Q(X)]=\E[\nabla f(X)] .
\end{align*}
\end{lemma}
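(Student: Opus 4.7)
The plan is to recognize the identity as a multivariate integration-by-parts formula in disguise, with the score function $Q(x)$ playing the role of $-\nabla\log p(x)$ (this is the standard choice adopted in \Cref{def:score-func}, which I would cite as the starting definition). My first step would be to rewrite the left-hand side as a Lebesgue integral against $p$, substitute $Q(x)\,p(x) = -\nabla p(x)$ component-wise, and thereby reduce the claim to showing
\[
-\int_{\mathbb{R}^d} f(x)\,\nabla p(x)\,dx \;=\; \int_{\mathbb{R}^d} \nabla f(x)\,p(x)\,dx.
\]
Working coordinate by coordinate, the $i$-th component of the claim becomes
\[
-\int_{\mathbb{R}^d} f(x)\,\partial_i p(x)\,dx \;=\; \int_{\mathbb{R}^d} \partial_i f(x)\,p(x)\,dx,
\]
which is exactly the product-rule identity $\partial_i(f p) = (\partial_i f)\,p + f\,(\partial_i p)$ integrated over $\mathbb{R}^d$.

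Second, I would carry out the integration by parts rigorously. For each fixed $i$ and each fixed assignment of the other coordinates $x_{-i}$, I apply the one-dimensional fundamental theorem of calculus to the product $f(x)\,p(x)$, which is continuously differentiable by assumption, to obtain
\[
\int_{-\infty}^{\infty} \partial_i\bigl(f(x)p(x)\bigr)\,dx_i \;=\; \lim_{R\to\infty}\bigl[f(x)p(x)\bigr]_{x_i=-R}^{x_i=R}.
\]
I then use Fubini's theorem (justified by the integrability assumptions on $\nabla f(X)$ and $f(X)Q(X)$, which ensure both $(\partial_i f)p$ and $f(\partial_i p)$ are in $L^1$) to integrate over $x_{-i}$.

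Third, I would discharge the boundary-term step, which is the only genuinely delicate point. Existence of $\E[f(X)Q(X)]$ combined with continuous differentiability of $p$ means $\int |f \,\partial_i p|\,dx < \infty$, and similarly existence of $\E[\nabla f(X)]$ gives $\int |(\partial_i f)\,p|\,dx < \infty$. Together these make $\partial_i(fp)$ integrable, so by a standard argument (write $f(x)p(x) = f(y)p(y) + \int_{y_i}^{x_i}\partial_i(fp)\,ds$ and use integrability in the $x_{-i}$ slices) the boundary limit $\lim_{|x_i|\to\infty} f(x)p(x)$ exists almost everywhere in $x_{-i}$ and must be zero, since otherwise $(\partial_i f)\,p$ or $f\,\partial_i p$ would fail to be integrable. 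Hence the boundary contribution vanishes, and combining with Step 2 yields the componentwise identity; stacking the $d$ components produces the vector equality asserted.

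The main obstacle I expect is the rigorous justification that the boundary terms $\lim_{x_i\to\pm\infty} f(x)p(x)$ vanish from the bare hypotheses stated in the lemma. The assumptions only postulate finiteness of the two expectations and continuous differentiability, not, say, Schwartz-class decay; so I would need to argue the vanishing from integrability alone, likely by a truncation/dominated-convergence argument on expanding cubes $[-R,R]^d$ and invoking absolute continuity of the integral. Everything else (product rule, Fubini, and assembling components into the vector identity) is routine once that technical point is settled.
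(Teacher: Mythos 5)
The paper offers no proof of this lemma at all: it is imported verbatim as a known result, attributed to \citet{stein2004use} and used as a black box (following \citet{kang2022efficient}), so there is no in-paper argument to compare yours against. Your integration-by-parts derivation is the standard route to this identity, and the reduction of the claim to $\int_{\mathbb{R}^d}\partial_i\bigl(f(x)p(x)\bigr)\,dx=0$ per coordinate is exactly right.

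The one step you should tighten is the boundary-term argument, which you correctly identify as the only delicate point but then resolve with a claim that does not hold as stated. From $\partial_i(fp)\in L^1$ along a slice you do get that $\lim_{x_i\to\pm\infty}f(x)p(x)$ exists, but it need not be zero: a function can have an integrable (even identically zero) derivative and a nonzero limit at infinity, and a nonzero limit does not by itself contradict integrability of $(\partial_i f)\,p$ or $f\,\partial_i p$ (their sum $\partial_i(fp)$ can be small even when $fp$ is bounded away from zero). To kill the boundary term you need an extra ingredient beyond the two stated expectation conditions --- e.g., slice-integrability of $fp$ itself, or decay of $p$ at infinity, or the boundedness conditions that the paper actually imposes on its arms and on $\bTheta_*$ (\Cref{assm:norm}), under which $f$ is applied only to bounded quantities. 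This looseness is inherited from how the lemma is stated throughout this literature rather than a defect specific to your write-up, but if you intend the proof to be self-contained you should either add the hypothesis $\lim_{|x|\to\infty}f(x)p(x)=0$ or derive it from the concrete setting in which the lemma is invoked.
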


\begin{lemma}\textbf{\citep{minsker2018sub}}
\label{lemma:kang-b3} 
Define $\|\bA\|_{op}$ as the operator norm of $\bA$. 
Let $\bY_1, \ldots, \bY_n \in \R^{d_1 \times d_2}$ be a sequence of independent real random matrices, and assume that
\begin{align*}
\sigma_n^2 \geq \max \left(\left\|\sum_{j=1}^n \E\left(\bY_j \bY_j^{\top}\right)\right\|_{o p},\left\|\sum_{j=1}^n \E\left(\bY_j^{\top} \bY_j\right)\right\|_{o p}\right) .
\end{align*}
Then for any $t \in \R^{+}$and $\nu \in \R^{+}$, it holds that,
\begin{align*}
P\left(\left\|\sum_{j=1}^n \widetilde{\psi}_\nu\left(\bY_j\right)-\sum_{j=1}^n \E\left(\bY_j\right)\right\|_{o p} \geq t \sqrt{n}\right) \leq 2\left(d_1+d_2\right) \exp \left(\nu t \sqrt{n}+\frac{\nu^2 \sigma_n^2}{2}\right)
\end{align*}
\end{lemma}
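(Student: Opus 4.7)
The plan is to follow the matrix M-estimator strategy of Minsker, reducing the rectangular setting to a Hermitian one via dilation and then running a matrix Laplace-transform argument whose MGF step exploits a Catoni-type logarithmic domination satisfied by $\widetilde{\psi}_\nu$.

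First I would introduce the Hermitian dilation $\mathcal{H}:\R^{d_1\times d_2}\to \R^{(d_1+d_2)\times(d_1+d_2)}$ defined by $\mathcal{H}(\bY)=\bigl[\begin{smallmatrix}\mathbf{0}&\bY\\ \bY^{\top}&\mathbf{0}\end{smallmatrix}\bigr]$. Two standard facts underwrite the reduction: $\|\mathcal{H}(\bY)\|_{op}=\|\bY\|_{op}$, and (because $\widetilde{\psi}_\nu$ is defined via the spectral calculus on its Hermitian dilation) $\mathcal{H}(\widetilde{\psi}_\nu(\bY))=\widetilde{\psi}_\nu(\mathcal{H}(\bY))$ after a suitable odd extension. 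Consequently the operator-norm deviation event on $\sum_j\widetilde{\psi}_\nu(\bY_j)-\sum_j\E[\bY_j]$ is exactly the largest-eigenvalue deviation event for the Hermitian sum $\bS_n:=\sum_j\mathcal{H}(\widetilde{\psi}_\nu(\bY_j))-\sum_j\mathcal{H}(\E[\bY_j])$ in dimension $d_1+d_2$, modulo a two-sided union bound (upper tail on $\bS_n$ and on $-\bS_n$).

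Next I would invoke the matrix Chernoff template: for any $\nu>0$,
\begin{equation*}
P\bigl(\lambda_{\max}(\bS_n)\geq t\sqrt{n}\bigr)\;\leq\; e^{-\nu t\sqrt{n}}\cdot \E\bigl[\mathrm{tr}\exp(\nu \bS_n)\bigr].
\end{equation*}
By Lieb's concavity theorem applied sequentially across the independent summands, the trace-exponential of the sum is bounded by a trace-exponential of the sum of cumulant matrices $\sum_j\log\E[\exp(\nu\,\mathcal{H}(\widetilde{\psi}_\nu(\bY_j))-\nu\,\mathcal{H}(\E\bY_j))]$. The core analytic step is the scalar-to-matrix lift of Catoni's inequality: $\widetilde{\psi}_\nu$ is designed to satisfy $\exp(\nu\,\widetilde{\psi}_\nu(x))\leq 1+\nu x+\nu^2 x^2/2$ on the real line, and this lifts (via spectral calculus on the symmetric dilation) to the operator inequality $\E\exp(\nu\,\mathcal{H}(\widetilde{\psi}_\nu(\bY_j)))\preceq \bI+\nu\,\mathcal{H}(\E\bY_j)+\tfrac{\nu^2}{2}\mathcal{H}(\bY_j)^2$ in expectation, hence after centering $\E\exp(\nu\,\bZ_j)\preceq \exp\bigl(\tfrac{\nu^2}{2}\E[\mathcal{H}(\bY_j)^2]\bigr)$ using $\log(1+u)\leq u$ applied in the operator sense.

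Finally I would close the chain: because $\mathcal{H}(\bY_j)^2=\mathrm{diag}(\bY_j\bY_j^{\top},\,\bY_j^{\top}\bY_j)$, we get $\bigl\|\sum_j\E[\mathcal{H}(\bY_j)^2]\bigr\|_{op}\leq \sigma_n^2$, so the summed cumulant matrix has operator norm at most $\nu^2\sigma_n^2/2$. Substituting into the Chernoff bound, bounding the trace by $(d_1+d_2)$ times the top eigenvalue of the exponential, doubling for the lower tail, and not yet optimizing over $\nu$ (the statement is parametric in $\nu$), yields exactly the advertised bound $2(d_1+d_2)\exp(\nu t\sqrt{n}+\nu^2\sigma_n^2/2)$. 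The principal obstacle is verifying the Catoni-style operator inequality for $\widetilde{\psi}_\nu$ on the dilation: this requires that the definition in \Cref{def:psi-tilde} is the spectral-calculus extension of a real function dominated by $\log(1+x+x^2/2)$, after which Lieb's theorem and the trace/operator-norm comparisons are standard.
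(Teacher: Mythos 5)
The paper never proves this lemma: it is imported verbatim from \citet{minsker2018sub} as a ``previous result,'' so there is no in-paper argument to compare against. Your sketch reconstructs the standard proof from that source, and the architecture is right: Hermitian dilation to reduce to a symmetric problem in dimension $d_1+d_2$, the Laplace-transform/Lieb route to summed matrix cumulants, the Catoni domination $e^{\psi(x)}\le 1+x+x^2/2$ (equality for $x\ge 0$; for $x<0$ it reduces to $(1+x+x^2/2)(1-x+x^2/2)=1+x^4/4\ge 1$) lifted by spectral calculus, and the identity $\mathcal{H}(\bY)^2=\operatorname{diag}(\bY\bY^{\top},\bY^{\top}\bY)$ to land on $\sigma_n^2$. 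This is exactly how the cited result is obtained.

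Two issues, both repairable without changing the strategy. First, your own Chernoff template carries the factor $e^{-\nu t\sqrt{n}}$, so what you actually derive is $2(d_1+d_2)\exp(-\nu t\sqrt{n}+\nu^2\sigma_n^2/2)$; the $+\nu t\sqrt{n}$ in the lemma as printed is a sign typo (the stated right-hand side exceeds $1$, making the claim vacuous), and the paper itself uses the negative-exponent form when it applies the lemma in \Cref{lemma:B4}. You should flag the correction rather than assert that you recover ``exactly the advertised bound.'' Second, the centering step is glossed: since $\mathcal{H}(\widetilde{\psi}_\nu(\bY_j))$ and $\mathcal{H}(\E\bY_j)$ need not commute, you cannot subtract the mean inside the matrix exponential and conclude $\E\exp(\nu\bZ_j)\preceq\exp\bigl(\tfrac{\nu^2}{2}\E[\mathcal{H}(\bY_j)^2]\bigr)$ directly. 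The standard fix (and Minsker's) is to bound the uncentered cumulant, $\log\E\exp\bigl(\nu\,\mathcal{H}(\widetilde{\psi}_\nu(\bY_j))\bigr)\preceq\nu\,\mathcal{H}(\E\bY_j)+\tfrac{\nu^2}{2}\E[\mathcal{H}(\bY_j)^2]$, and absorb the linear term by shifting the deviation threshold; alternatively, note that the centered sum is itself a dilation, so its spectrum is symmetric and no separate lower-tail union bound is even needed.
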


\begin{lemma}
\label{theorem:kang-low-rank}
\textbf{(Restatement of Theorem 4.1 in \citet{kang2022efficient})} For any low-rank linear model with samples $\bX_1 \ldots, \bX_{n_1}$ drawn from $\X$ according to $\mathcal{D}$ then for the optimal solution to the nuclear norm regularization problem in \eqref{eq:convex-prog} with $\nu=\sqrt{2 \log \left(2\left(d_1+d_2\right) / \delta\right) /\left(\left(4 +S_0^2\right) M n_1 d_1 d_2\right)}$ and
\begin{align*}
\gamma_{n_1}=4 \sqrt{\frac{2\left(4 +S_0^2\right) C d_1 d_2 \log \left(2\left(d_1+d_2\right) / \delta\right)}{n_1}},
\end{align*}
with probability at least $1-\delta$ it holds that:
\begin{align*}
\left\|\wTheta- \mu^*\bTheta_*\right\|_F^2 \leq \frac{C_1 d_1 d_2 r \log \left(\frac{2\left(d_1+d_2\right)}{\delta}\right)}{n_1},
\end{align*}
for $C_1=36\left(4+S_0^2\right) C$, $\|\bX\|_F,\left\|\bTheta_*\right\|_F \leq S_0$, some nonzero constant $\mu^*$, and $\E\left[\left(S^{\bp}(\bX)\right)_{i j}^2\right] \leq C, \forall i, j$.
\end{lemma}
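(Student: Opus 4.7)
}
The plan is to follow the standard recipe for nuclear-norm regularized low-rank matrix recovery, with the twist that the ``response'' is built from a truncated score-function estimator rather than a direct measurement. First, observe that the per-round empirical term in $L_{n_1}(\bTheta)$ depends on $\bTheta$ only linearly, so the loss can be rewritten (up to an additive constant in $\bTheta$) as
\begin{equation*}
\langle\bTheta,\bTheta\rangle - 2\langle \wY,\bTheta\rangle \;=\; \|\bTheta - \wY\|_F^2 - \|\wY\|_F^2, \qquad \wY \coloneqq \tfrac{1}{n_1}\sum_{s=1}^{n_1} \widetilde{\psi}_\nu\bigl(r_s\cdot Q(\bX_s)\bigr).
\end{equation*}
Consequently $\wTheta$ is exactly the singular-value soft-thresholding operator applied to $\wY$ at threshold $\gamma_{n_1}/2$, and the standard property of SVT for rank-$r$ targets gives
\begin{equation*}
\|\wTheta - \mu^*\bTheta_*\|_F^2 \;\leq\; C'\, r\, \max\!\bigl(\gamma_{n_1}^2,\;\|\wY - \mu^*\bTheta_*\|_{\mathrm{op}}^2\bigr),
\end{equation*}
so it suffices to bound $\|\wY-\mu^*\bTheta_*\|_{\mathrm{op}}$ in high probability and to choose $\gamma_{n_1}$ proportional to this bound.

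The second step identifies the population target via Stein's lemma. Since $r_s = \langle \bX_s,\bTheta_*\rangle + \eta_s$ with $\eta_s$ zero-mean and independent of $\bX_s$, one has $\E[r_s\cdot Q(\bX_s)] = \E[\langle \bX_s,\bTheta_*\rangle Q(\bX_s)]$, and applying \Cref{lemma:kang-b2} with $f(\bX)=\langle \bX,\bTheta_*\rangle$ yields $\E[\langle \bX,\bTheta_*\rangle Q(\bX)] = \E[\nabla f(\bX)] = \mu^*\bTheta_*$, where the proportionality constant $\mu^*$ comes from the derivative of the design density and is the same for every entry. Thus the \emph{untruncated} average $\tfrac{1}{n_1}\sum_s r_s Q(\bX_s)$ is an unbiased estimator of $\mu^*\bTheta_*$; the truncation $\widetilde{\psi}_\nu$ injects a controllable bias that I will absorb into the concentration bound.

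The third step is the matrix concentration via \Cref{lemma:kang-b3} applied to $\bY_j \coloneqq \widetilde{\psi}_\nu(r_j Q(\bX_j))$. I will bound the per-sample second-moment proxy $\sigma_n^2 = n_1 \max(\|\E[\bY_1\bY_1^\top]\|_{\mathrm{op}},\|\E[\bY_1^\top\bY_1]\|_{\mathrm{op}})$ using the hypotheses $\E[(Q(\bX))_{ij}^2]\leq C$, $\|\bX\|_F,\|\bTheta_*\|_F\leq S_0$, and $1$-subGaussianity of $\eta$, which together yield $\E\|r Q(\bX)\|_F^2 = O((4+S_0^2)\,d_1 d_2\, C)$ and hence $\sigma_n^2 = O(n_1(4+S_0^2)d_1 d_2 C)$. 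Applied with the deviation level $t\sqrt{n_1} = \gamma_{n_1}\,n_1/4$, \Cref{lemma:kang-b3} produces a tail bound that, on optimizing the free parameter $\nu$ (setting its derivative in the exponent to zero), dictates precisely the value $\nu=\sqrt{2\log(2(d_1+d_2)/\delta)/((4+S_0^2)n_1 d_1 d_2)}$ stated in the theorem and gives
\begin{equation*}
\|\wY - \E[\widetilde{\psi}_\nu(r Q(\bX))]\|_{\mathrm{op}} \;\lesssim\; \sqrt{\tfrac{(4+S_0^2)\,d_1 d_2\, \log(2(d_1+d_2)/\delta)}{n_1}}
\end{equation*}
with probability at least $1-\delta$. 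The bias $\|\E[\widetilde{\psi}_\nu(rQ(\bX))]-\mu^*\bTheta_*\|_{\mathrm{op}}$ introduced by the nonlinearity $\widetilde{\psi}_\nu$ is of the same order (this is the standard estimate for Minsker-type truncation: the truncation error per coordinate is $O(\nu \cdot \E[(rQ)^2])$ and the chosen $\nu$ makes this match the noise scale). Setting $\gamma_{n_1}$ to be four times this operator-norm bound recovers precisely the stated value of $\gamma_{n_1}$.

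Plugging this $\gamma_{n_1}$ into the SVT bound from the first paragraph produces
\begin{equation*}
\|\wTheta - \mu^*\bTheta_*\|_F^2 \;\leq\; C_1\,\tfrac{d_1 d_2\, r\, \log(2(d_1+d_2)/\delta)}{n_1},
\end{equation*}
with $C_1 = 36(4+S_0^2)C$, as claimed. The main obstacle will be the operator-norm bias analysis of $\widetilde{\psi}_\nu$: because $\widetilde{\psi}_\nu$ acts on a matrix whose entries are coupled through the rank-one structure of $\bX\bX^\top$, one cannot naively apply a scalar truncation bound entrywise, and a spectral decomposition argument is required to show the bias matches the concentration scale so that the two terms combine without loss. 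The remaining pieces, Stein's lemma and the SVT bound for low-rank targets, are essentially bookkeeping around the well-established analysis of nuclear-norm regularization (Negahban--Wainwright style).
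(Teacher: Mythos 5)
Your proposal is essentially sound, and the concentration half of it (Stein's lemma to identify $\mu^*\bTheta_*$ as the population target of $r\cdot Q(\bX)$, then Minsker's inequality with variance proxy $\sigma_n^2 = O(n_1(4+S_0^2)d_1d_2 C)$ and the stated choices of $t$ and $\nu$) matches the paper's own treatment exactly; the paper does not prove this lemma directly, since it is a citation of Theorem 4.1 of Kang et al., but it reproduces the argument for the multi-task analogue in \Cref{lemma:B4} and \Cref{theorem:kang-low-rank-multi}. Where you genuinely diverge is the recovery step. The paper follows the Negahban--Wainwright route: the basic inequality $L(\wTheta)+\gamma\|\wTheta\|_{\mathrm{nuc}} \le L(\mu^*\bTheta_*)+\gamma\|\mu^*\bTheta_*\|_{\mathrm{nuc}}$, a Taylor expansion of the quadratic loss, a decomposition of the error into a rank-$2r$ part $\bLambda_2$ and a complementary part $\bLambda_1$, and the choice $\gamma \ge 2\|\nabla L(\mu^*\bTheta_*)\|_{\mathrm{op}}$ to conclude $\|\bTheta\|_F \le \tfrac{3}{4}\sqrt{2r}\,\gamma$. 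You instead observe that because the quadratic term is exactly $\langle\bTheta,\bTheta\rangle$, the estimator is singular-value soft-thresholding of $\wY$ at level $\gamma/2$, and invoke the standard SVT error bound for rank-$r$ targets. These are mathematically equivalent here (your route is arguably cleaner because the "design" is the identity), and both yield $\|\wTheta-\mu^*\bTheta_*\|_F^2 \lesssim r\gamma^2$, i.e., the claimed rate.

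Two minor corrections. First, the obstacle you flag at the end does not arise with the tool as stated: \Cref{lemma:kang-b3} bounds $\|\sum_j\widetilde{\psi}_\nu(\bY_j)-\sum_j\E[\bY_j]\|_{\mathrm{op}}$ directly against the expectation of the \emph{untruncated} matrices, so no separate operator-norm bias analysis of $\widetilde{\psi}_\nu$ is needed; you should apply the lemma to the raw $\bY_j = r_j Q(\bX_j)$ rather than to their truncations. Second, $\mu^*$ does not come from the design density: with $f(\bX)=\langle\bX,\bTheta_*\rangle$ Stein's lemma gives $\E[\nabla f(\bX)]=\bTheta_*$ exactly, so $\mu^*=1$ in the purely linear model (the constant is nontrivial only for a generalized link, where $\mu^*=\E[f'(\langle\bX,\bTheta_*\rangle)]$). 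Neither point affects the validity of the final bound.
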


\subsection{$G$-optimal design on rotated arms}
\label{app:G-optimal-remark}

\begin{remark}
\label{remark:g-optimal}
\textbf{($G$-optimal design on rotated arms:)}
Using the concentration inequality in \Cref{prop:conc} we can show that for any arbitrary vector $\by\in\R^p$:
\begin{align*}
    |\by^{\top}(\wtheta_\ell-\btheta_*)| \leq\|\by\|_{\bV_\ell^{-1}} 2 \sqrt{14 \log (2 / \delta)}+\left\|\btheta_*\right\|_{\bLambda_\ell} \leq \|\by\|_{\bV_\ell^{-1}}  \sqrt{8 B^\ell_*\log (2 / \delta)}
\end{align*}
where the co-variance matrix $\bV_{\ell} \coloneqq \sum_{s=1}^{\tau^E_\ell}\uw_s \uw_s^\top +  \bLambda_\ell$ and $B^\ell_*$ is defined in \Cref{prop:conc}.
%
Now we want this to hold for all $\by \in \Y^*(\ucW_\ell)$, and so we need to union bound over $\W\supseteq\ucW_\ell$ replacing $\delta$ with $\delta /|\W|$.
Set the phase length $\tau^G_\ell \coloneqq \left\lceil \frac{64 B^\ell_*\rho^G(\Y(\W_\ell))\log (4\ell^2|\W| / \delta)}{\epsilon_\ell^2}\right\rceil$ where $\rho^G(\Y(\ucW_\ell))$ is defined in step $14$ of \Cref{alg:bandit-pure}.

Then for the  allocation $2\lfloor\bb_{\uw}^G \tau^G_\ell\rfloor$ for each $\bb_{\uw} \in \ucW_\ell$, we have for each $\uw \in \ucW_\ell \backslash \uw_*$ that with probability at least $1-\delta$,
\begin{align*}
    (\uw_*-\uw)^{\top}\wtheta_\ell &\geq (\uw_*-\uw)^{\top}\btheta_* - \|\uw_*-\uw\|_{(\sum_{\uw\in\W} \lceil2\tau^G_\ell\bb^*_{\uw}\rceil\uw\ \uw^\top + 2\tau^G_\ell\bLambda_\ell/\tau^G_\ell )^{-1}}  \sqrt{8 B^\ell_* \log (2 |\W| / \delta)}
\end{align*}
since for every $\uw_*-\uw\in\Y^*(\W)$ we have
\begin{align*}
    (\uw_*-\uw)^\top\big(2\sum_{\uw\in\W} \lceil \tau^G_\ell\bb^*_{\uw}\rceil\uw \ \uw^\top + 2\tfrac{\tau^G_\ell\bLambda_\ell}{\tau^G_\ell} \big)^{-1} (\uw_*-\uw)&\leq  \tfrac{1}{\tau^G_\ell}\|\uw_*-\uw\|^2_{(\sum_{\uw\in\ucW_\ell} \bb^*_{\uw}\uw\ \uw^\top + \tfrac{\bLambda_\ell}{\tau^G_\ell} )^{-1}} \\
    &\leq \tfrac{((\uw_*-\uw)^{\top}\btheta_*)^2}{\sqrt{8 B^\ell_* \log (2 |\W| / \delta)}}.
\end{align*}
The last inequality follows by plugging in the value of $\tau^G_\ell$ and $\rho^G(\Y(\ucW_\ell))$.
Hence to minimize the number of samples $\tau^G_\ell$ in phase $\ell$ we can re-arrange the above equation to show that
\begin{align*}
    \tau^G_\ell \geq \sqrt{8 B^\ell_* \log (2 |\W|/ \delta)} \max_{\uw\in\ucW_\ell\setminus\uw_*} \tfrac{\|\uw_*-\uw\|^2_{(\sum_{\uw\in\ucW_\ell} \bb^*_{\uw}\uw\ \uw^\top + \bLambda/n )^{-1}}}{(\uw_*-\uw)^{\top}\btheta_*}
\end{align*}
Hence, to minimize the sample complexity for the bilinear setting we need to sample according to
\begin{align}
    \bb^G_\ell = \argmin_{\bb}\max_{\uw} \tfrac{\|\uw_*-\uw\|^2_{(\sum_{\uw\in\ucW_\ell} \bb_{\uw}\uw\ \uw^\top + \bLambda/n )^{-1}}}{(\uw_*-\uw)^{\top}\btheta_*} \label{eq:bi-level-opt-1}
\end{align}
However, note that we do know the identity of $\bw_*$ or the gaps $(\uw_*-\uw)^{\top}\btheta_*$. So we replace the gaps with a lower bound of $\epsilon = 2^{-t}$ and compare against every pair of arms $\uw$ and $\uw'$ as follows:
\begin{align}
    \bb^G_\ell = \argmin_{\bb_{\uw}}\max_{\uw,\uw'\in\ucW_\ell} \|\uw-\uw'\|^2_{(\sum_{\uw\in\ucW} \bb_{\uw}\uw\ \uw^\top + \bLambda_\ell/n )^{-1}} \label{eq:bi-level-opt-single-1}
\end{align}
This is shown in step $12$ of \Cref{alg:bandit-pure}.
%
\end{remark}


\subsection{Application of Stein's Lemma}
\label{app:app-stein-lemma}

We also present the following two definitions from \citet{kang2022efficient} to facilitate analysis via Stein's method:

\begin{definition}\textbf{(Score Function)}
\label{def:score-func}
Let $\bp: \R \rightarrow \R$ be a univariate probability density function defined on $\R$. The score function $Q^{\bp}: \R \rightarrow \R$ regarding density $\bp(\cdot)$ is defined as:
\begin{align*}
Q^{\bp}(x)=-\nabla_x \log (\bp(x))=-\nabla_x \bp(x) / \bp(x), \quad x \in \R .
\end{align*}
\end{definition}
%
In particular, for a random matrix with its entrywise probability density $\mathbf{p}=\left(p_{i j}\right): \R^{d_1 \times d_2} \rightarrow$ $\R^{d_1 \times d_2}$, we define its score function $Q^{\mathrm{p}}=\left(Q_{i j}^{\mathbf{p}}\right): \R^{d_1 \times d_2} \rightarrow \R^{d_1 \times d_2}$ as $Q_{i j}^{\mathrm{p}}(x)=Q^{p_{i j}}(x)$ by applying the univariate score function to each entry of $\mathbf{p}$ independently.

\begin{assumption}
\label{assm:norm}
    The norm of true parameter $\bTheta^*$ and feature matrices in $\X$ is bounded: there exists $S \in \R^{+}$such that for all arms $\bX \in \X,\|\bX\|_F,\left\|\bTheta^*\right\|_F \leq S_0$
\end{assumption}

\begin{assumption}\textbf{(Finite second-moment score)}
\label{assm:distribution}
    There exists a sampling distribution $\mathcal{D}$ over $\X$ such that for the random matrix $\bX$ drawn from $\mathcal{D}$ with its associated density $\mathbf{p}: \R^{d_1 \times d_2} \rightarrow \R^{d_1 \times d_2}$, we have $\E\left[\left(Q^{\mathbf{p}}(\bX)\right)_{i j}^2\right] \leq C, \forall i, j$
\end{assumption}

\begin{definition}
\label{def:dilation-matrix}
Given a rectangular matrix $\bA \in \R^{d_1 \times d_2}$, the (Hermitian) dilation $\mathcal{H}: \R^{d_1 \times d_2} \rightarrow \R^{\left(d_1+d_2\right) \times\left(d_1+d_2\right)}$ is defined as:
\begin{align*}
\mathcal{H}(\bA)=\left(\begin{array}{cc}
0 & \bA \\
\bA^{\top} & 0
\end{array}\right)
\end{align*}
\end{definition}


\begin{definition}\textbf{(The function $\widetilde{\psi}_\nu$)}
\label{def:psi-tilde}
To explore the valid subspace of the parameter matrix $\bTheta_*$, we define a function $\psi: \R \rightarrow \R$ in \eqref{eq:psi-func}. Let $\mathcal{H}(\cdot)$ be as defined in \Cref{def:dilation-matrix}. Then define $\widetilde{\psi}_\nu: \R^{d_1 \times d_2} \rightarrow \R^{d_1 \times d_2}$ as $\widetilde{\psi}_\nu(\bA)=\psi(\nu \mathcal{H}(\bA))_{1: d_1,\left(d_1+1\right):\left(d_1+d_2\right)} / \nu$ for some parameter $\nu \in \R^{+}$
\begin{align}
\psi(x)= \begin{cases}\log \left(1+x+x^2 / 2\right), & x \geq 0 \label{eq:psi-func}\\ 
-\log \left(1-x+x^2 / 2\right), & x<0 \end{cases} 
\end{align}
\end{definition}

\subsection{Single-task Pure Exploration Proofs}
\label{app:single-bi}
\textbf{Good Event:} Define the good event $\F_\ell$ in phase $\ell$ that \gob\ has a good estimate of $\bTheta_*$ as follows:\
\begin{align}
\F_\ell = \left\|\wTheta_{\ell} - \mu^*\bTheta_*\right\|_F^2 \leq \frac{C_1 d_1 d_2 r \log \left(\frac{2\left(d_1+d_2\right)}{\delta_\ell}\right)}{\tau^E_\ell}, \label{eq:good-event-Theta}
\end{align}
where, $C_1=36\left(4+S_0^2\right) C$, $\|\bX\|_F,\left\|\bTheta_*\right\|_F \leq S_0$, some nonzero constant $\mu^*$, $\E\left[\left(S^{\bp}(\bX)\right)_{i j}^2\right] \leq C, \forall i, j$, 
and $\wTheta_{\ell}$ is the estimate from \eqref{eq:convex-prog}. Then define the good event 
\begin{align}
    \F \coloneqq \bigcap_{\ell=1}^\infty \F_\ell. \label{eq:good-event-Theta-single}
\end{align}

\begin{lemma}
\label{lemma:single-task-btheta} The event $\F$ holds with probability greater than $(1-\delta/2)$. 
\end{lemma}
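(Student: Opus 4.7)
The plan is to apply Lemma~\ref{theorem:kang-low-rank} (the low-rank estimation bound from \citet{kang2022efficient}) phase-by-phase to each $\F_\ell$, and then union-bound the failure events over the countably many phases $\ell = 1, 2, \ldots$.

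At each phase $\ell$, \gob\ samples every vectorized arm $\ow \in \ocW$ a total of $\lceil \tau^E_\ell \bb^E_{\ell,\ow}\rceil$ times (with $\bb^E_\ell$ the $E$-optimal design from step~3 of \Cref{alg:bandit-pure}), and then forms $\wTheta_\ell$ via the nuclear-norm-regularized estimator \eqref{eq:convex-prog} with the prescribed regularization parameter $\gamma_\ell$. To invoke Lemma~\ref{theorem:kang-low-rank}, I would verify both distributional hypotheses: Assumption~\ref{assm:norm} holds by the standing bounds $\|\bx\|_2 \leq 1$, $\|\bz\|_2 \leq 1$, $\|\bTheta_*\|_F \leq S_0$; for Assumption~\ref{assm:distribution}, the $E$-optimal design by construction maximizes the minimum eigenvalue of the sampling covariance $\sum_{\ow} \bb^E_{\ell,\ow} \ow \ow^\top$, which yields a bound on the second moment of the associated score function. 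Once these are established, applying Lemma~\ref{theorem:kang-low-rank} with $n_1 = \tau^E_\ell$ and confidence level $\delta_\ell = \delta/\ell^2$ gives
\begin{align*}
P(\F_\ell^c) \;\leq\; \delta_\ell \;=\; \frac{\delta}{\ell^2}.
\end{align*}

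A union bound across all phases then yields
\begin{align*}
P(\F^c) \;=\; P\!\left(\bigcup_{\ell \geq 1} \F_\ell^c\right) \;\leq\; \sum_{\ell=1}^{\infty} \delta_\ell \;=\; \delta \cdot \frac{\pi^2}{6}.
\end{align*}
To obtain exactly the claimed $\delta/2$, one can rescale and invoke Lemma~\ref{theorem:kang-low-rank} with confidence $3\delta/(\pi^2 \ell^2)$ at phase $\ell$; this only affects the universal constant $C_1$ inside the definition of $\F_\ell$, and the resulting geometric sum telescopes to exactly $\delta/2$.

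The main technical subtlety I anticipate is the continuous-versus-discrete mismatch: Lemma~\ref{theorem:kang-low-rank}, through Stein's identity (Lemma~\ref{lemma:kang-b2}), is phrased for distributions with continuously differentiable densities, whereas the $E$-optimal design produces a discrete distribution on the finite arm set $\ocW$. This is handled (as in \citet{kang2022efficient}) either by mollifying the design via a small Gaussian kernel and passing to the limit, or by directly invoking the discrete analogue of the score-function identity, where the score is replaced by its empirical counterpart. Modulo this bookkeeping, the proof is a direct reduction to Lemma~\ref{theorem:kang-low-rank} combined with a standard $\sum_\ell 1/\ell^2$ union bound.
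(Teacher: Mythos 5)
Your proposal matches the paper's proof essentially exactly: both invoke \Cref{theorem:kang-low-rank} phase-by-phase to get $\Pb(\F_\ell^c) \leq \delta_\ell$ and then union-bound over $\ell \geq 1$ using the summability of $\sum_\ell \ell^{-2}$. You are in fact slightly more careful than the paper on the constant — the paper silently switches to $\delta_\ell = \delta/(2\ell^2)$ inside the proof to absorb the $\pi^2/6$ factor, whereas you make the rescaling explicit — and your remarks on verifying Assumptions~\ref{assm:norm} and~\ref{assm:distribution} are consistent with how the paper treats them.
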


\begin{proof}
From \Cref{theorem:kang-low-rank} we know the event $\F_\ell$ in \eqref{eq:good-event-Theta} holds with probability $(1-\delta_\ell)$.
%
%
Taking a union bound over all phases $\ell \geq 1$ and recalling $\delta_\ell:=\frac{\delta}{2 \ell^2}$, we obtain
\begin{align*}
\Pb(\F) & \geq 1-\sum_{\ell=1}^{\infty} \Pb\left(\F^c_\ell\right) \\
& \geq 1-\sum_{\ell=1}^{\infty} \frac{\delta_\ell}{2} \\
& =1-\sum_{\ell=1}^{\infty} \frac{\delta}{4 \ell^2} \\
& \geq 1-\frac{\delta}{2} .
\end{align*}
This concludes our proof.
\end{proof}

Now we move to the second stage for the rotated arm set $\uw\in\ucW$ and prove the following concentration event.

\begin{lemma}
\label{eq:conc-lemma-bilin-exp}
For any fixed $\uw \in \R^{p}$ and any $\delta>0$, we have that if $\beta\left(\btheta_*, \delta\right)=2 \sqrt{14 \log (2 / \delta)}+\left\|\btheta_*\right\|_{\bLambda}$, then at time $\tau_{\ell-1}+1$ (beginning of phase $\ell$):
\begin{align*}
\Pb\left(\left|\uw^{\top}\left(\wtheta_\ell-\btheta_*\right)\right| \leq\|\uw\|_{\bV_\ell^{-1}} \beta\left(\btheta_*, \delta\right)\right) \geq 1-\delta
\end{align*}
where, $\bV_{\ell} \coloneqq \sum_{s=\tau_{\ell-1}+1}^{\tau_\ell}\uw_s\uw_s^\top +  \bLambda_\ell$.
\end{lemma}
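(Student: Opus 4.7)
My plan is to start from the closed-form solution of the regularized least squares problem in \eqref{eq:least-sq-reg},
\[
\wtheta_\ell = \bV_\ell^{-1}\uW_\ell^{\top}\br_\ell,
\]
and use the linear model identity $\br_\ell = \uW_\ell\btheta_* + \boldsymbol{\eta}_\ell$ (where $\boldsymbol{\eta}_\ell$ collects the $1$-subGaussian noise observed during stage 2 of phase $\ell$) together with the definition $\bV_\ell = \uW_\ell^{\top}\uW_\ell + \bLambda_\ell$ to derive the decomposition
\[
\wtheta_\ell - \btheta_* \;=\; \bV_\ell^{-1}\uW_\ell^{\top}\boldsymbol{\eta}_\ell \;-\; \bV_\ell^{-1}\bLambda_\ell\btheta_*.
\]
Taking an inner product with $\uw$ then splits $\uw^{\top}(\wtheta_\ell - \btheta_*)$ into a stochastic noise term and a deterministic regularization-bias term which I would bound separately.

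For the bias term, I would apply Cauchy--Schwarz in the $\bV_\ell^{-1}$ norm to get $|\uw^{\top}\bV_\ell^{-1}\bLambda_\ell\btheta_*| \leq \|\uw\|_{\bV_\ell^{-1}}\,\|\bLambda_\ell\btheta_*\|_{\bV_\ell^{-1}}$, and then use the Loewner-order inequality $\bLambda_\ell\bV_\ell^{-1}\bLambda_\ell \preceq \bLambda_\ell$, which follows from $\bV_\ell \succeq \bLambda_\ell \Rightarrow \bV_\ell^{-1}\preceq \bLambda_\ell^{-1}$ (and sandwiching by $\bLambda_\ell^{1/2}$). This gives $\|\bLambda_\ell\btheta_*\|_{\bV_\ell^{-1}}^{2}\leq \btheta_*^{\top}\bLambda_\ell\btheta_* = \|\btheta_*\|_{\bLambda_\ell}^{2}$, which produces exactly the $\|\btheta_*\|_{\bLambda_\ell}$ additive term in $\beta(\btheta_*,\delta)$.

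For the noise term, I would exploit the phased-design structure: within stage 2 of phase $\ell$, the allocation $\wb^G_\ell$ (and hence the rows of $\uW_\ell$, the matrix $\bLambda_\ell$, and $\bV_\ell$) is determined before any reward of the phase is observed, using only data from earlier phases. Conditioning on the $\sigma$-algebra generated by phases $1,\dots,\ell-1$ and stage 1 of phase $\ell$ therefore makes $\uW_\ell$ deterministic while $\boldsymbol{\eta}_\ell$ is a vector of independent $1$-subGaussian coordinates. Consequently $\uw^{\top}\bV_\ell^{-1}\uW_\ell^{\top}\boldsymbol{\eta}_\ell$ is a scalar sum of independent $1$-subGaussians with coefficient vector $\uW_\ell\bV_\ell^{-1}\uw$ whose squared Euclidean norm equals
\[
\uw^{\top}\bV_\ell^{-1}(\bV_\ell - \bLambda_\ell)\bV_\ell^{-1}\uw \;\leq\; \uw^{\top}\bV_\ell^{-1}\uw \;=\; \|\uw\|_{\bV_\ell^{-1}}^{2}.
\]
A standard scalar subGaussian tail (or, equivalently, a fixed-direction self-normalized bound of Abbasi-Yadkori type) then yields $|\uw^{\top}\bV_\ell^{-1}\uW_\ell^{\top}\boldsymbol{\eta}_\ell| \leq 2\sqrt{14\log(2/\delta)}\,\|\uw\|_{\bV_\ell^{-1}}$ with probability at least $1-\delta$; combining with the bias bound gives the lemma.

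The main obstacle I anticipate is the measurability/conditioning argument for the noise step, because $\wb^G_\ell$ and $\bLambda_\ell$ depend (through $\tau^G_{\ell-1}$ and the active set $\ucW_\ell$) on data from earlier phases, so $\uW_\ell$ is only conditionally fixed. This is handled cleanly by invoking the tail bound conditionally on the filtration at the start of stage 2 of phase $\ell$ and noting that the resulting high-probability statement then holds unconditionally. The somewhat loose constant $\sqrt{14}$ in $\beta(\btheta_*,\delta)$ is consistent with the use of a self-normalized martingale inequality in place of a bare Hoeffding bound, and leaves room for the union bound over $\W$ carried out in the phase-length derivation in \Cref{remark:g-optimal}.
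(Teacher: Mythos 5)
Your proposal is correct and follows essentially the same route as the paper: the same decomposition $\wtheta_\ell-\btheta_*=\bV_\ell^{-1}\uW_\ell^{\top}\boldsymbol{\eta}_\ell-\bV_\ell^{-1}\bLambda_\ell\btheta_*$, the same Cauchy--Schwarz/Loewner-order bound on the regularization bias, and the same conditioning argument for the noise term (the paper invokes the Shamir-variant of Azuma's inequality with constant $28$, which is where $2\sqrt{14}$ comes from, exactly as you anticipated). No gaps.
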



\begin{proof}
We follow the proof technique of Lemma 7 of \citet{valko2014spectral}.
Defining $\bxi_\ell=\sum_{s=\tau_{\ell-1}+1}^{\tau_{\ell}} \uw_s \eta_s$, we have:
\begin{align}
\left|\uw^{\top}\left(\wtheta_\ell-\btheta_*\right)\right| & \overset{(a}{=}\left|\uw^{\top}\left(-\bV_\ell^{-1} \bLambda \btheta_*+\bV_\ell^{-1} \bxi_\ell\right)\right| \nonumber\\
& \overset{(b)}{\leq}\left|\uw^{\top} \bV_\ell^{-1} \bLambda_\ell \btheta_*\right|+\left|\uw^{\top} \bV_\ell^{-1} \bxi_\ell\right| \label{eq:triangle-ineq}
\end{align}
where $(a)$ follows from Woodbury matrix identity and rearranging the terms, and $(b)$ follows from the triangle inequality. 

The first term in the right-hand side of \eqref{eq:triangle-ineq} is bounded as:
\begin{align*}
\left|\uw^{\top} \bV_\ell^{-1} \bLambda_\ell \btheta_*\right| & \leq\left\|\uw^\top \bV_\ell^{-1} \bLambda_\ell^{1 / 2}\right\|\left\|\bLambda_\ell^{1 / 2} \btheta_*\right\| \\
& \overset{(a)}{=}\left\|\btheta_*\right\|_{\bLambda_\ell} \sqrt{\uw^{\top} \bV_\ell^{-1} \bLambda_\ell \bV_\ell^{-1} \uw} \\
& \leq\left\|\btheta_*\right\|_{\bLambda_\ell} \sqrt{\uw^{\top} \bV_\ell^{-1} \uw}=\left\|\btheta_*\right\|_{\bLambda_\ell}\|\uw\|_{\bV_{\ell}^{-1}}
\end{align*}
where, $(a)$ follows as $\|\btheta_*\|_{\bLambda_\ell} = \sqrt{\btheta_*^\top\bLambda_\ell\btheta_*} = \|\bLambda_\ell^{1/2}\btheta_*\|$ and similarly for $\left\|\uw^\top \bV_\ell^{-1} \bLambda_\ell^{1 / 2}\right\|$. 
Now consider the second term in the r.h.s. of \eqref{eq:triangle-ineq}. We have:
\begin{align*}
\left|\uw^{\top} \bV_\ell^{-1} \bxi_\ell\right|=\left|\sum_{s=\tau_{\ell-1}+1}^{\tau_{\ell}}\left(\uw^{\top} \bV_\ell^{-1} \uw_s\right) \eta_s\right|.
\end{align*}
Now note that the arms $\left(\uw_s\right)$ selected by the algorithm during phase $\ell$ only depend on the proportion $\bb^G_*$ (the G-optimal design) and do not
depend on the rewards received during the phase $\ell-1$. Thus, given $\mathcal{F}_{j-2}$, the sequence $\left(\uw_s\right)_{\tau_{\ell-1}+1 \leq s<\tau_{\ell}}$ is deterministic. Consequently, one may use a variant of Azuma's inequality (\citet{shamir2011variant}) with a $1$-sub Gaussian assumption:
\begin{align*}
& \Pb\left(\left|\uw^{\top} \bV_\ell^{-1} \bxi_\ell\right|^2 \leq 28 \times 2 \log (2 / \delta) \times \uw^{\top} \bV_\ell^{-1}\left(\sum_{s=\tau_{\ell-1}+1}^{\tau_{\ell}} \uw_s \uw_s^{\top}\right) \bV_\ell^{-1} \uw \mid \mathcal{F}_{\ell-2}\right) \geq 1-\delta,
\end{align*}
from which we deduce:
\begin{align*}
& \Pb\left(\left|\uw^{\top} \bV_\ell^{-1} \bxi_\ell\right|^2 \leq 56  \uw^{\top} \bV_\ell^{-1} \uw \log (2 / \delta) \mid \mathcal{F}_{\ell-2}\right) \geq 1-\delta,
\end{align*}
since $\sum_{s=\tau_{\ell-1}+1}^{\tau_{\ell}} \uw_s \uw_s^{\top} \prec \bV_\ell$. Thus: 
\begin{align*}
& \qquad \Pb\left(\left|\uw^{\top} \bV_\ell^{-1} \bxi_\ell\right| \leq 2 \|\uw\|_{\bV_\ell^{-1}} \sqrt{14 \log (2 / \delta)}\right) \geq 1-\delta
\end{align*}
Combining everything we get that 
\begin{align*}
\Pb\left(\left|\uw^{\top}\left(\wtheta_\ell-\btheta_*\right)\right| \leq2 \sqrt{14 \log (2 / \delta)}+\left\|\btheta_*\right\|_{\bLambda_\ell}\right)\leq 1-\delta.
\end{align*}
\end{proof}

We need to change Lemma 6 of \citet{valko2014spectral}
in the following way so that the dependence on horizon $n$ is replaced by $\tau^G_{\ell-1}$. Note that $\tau^G_{\ell-1}$ is the phase length in the $\ell-1$-th phase and is determined before the start of phase $\tau^G_{\ell-1}$. 
Also, note that using the standard analysis of phase-based algorithms in \citet{fiez2019sequential, lattimore2020bandit} we do not re-use data between phases. 
First, we need the following support lemma from \citet{valko2014spectral}. 
\begin{lemma}\textbf{(Restatement of Lemma 4 from \citet{valko2014spectral})}
\label{lemma:valko-lemma-4}
Let $\bLambda_\ell=\operatorname{diag}\left(\lambda_1, \ldots, \lambda_\ell^\perp\right)$ be any diagonal matrix with strictly positive entries. Define $\mathbf{V}_\ell=\bLambda_\ell+\sum_{s=\tau^G_{\ell-1}+1}^{\tau^G_\ell} \uw_s \uw_s^{\top}$.
Then for any vectors $\left(\uw_s\right)_{\tau^G_{\ell-1}+1 \leq s \leq \tau^G_{\ell}}$, such that $\left\|\uw_s\right\|_2 \leq 1$ for all rounds $s$ such that  $\tau^G_{\ell-1}+1 \leq s \leq \tau^G_{\ell} $, we have that the determinant $\left|\mathbf{V}_\ell\right|$ is maximized when all $\uw_s$ are aligned with the axes.
\end{lemma}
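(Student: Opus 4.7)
The plan is to combine Hadamard's inequality with an explicit axis-aligned construction that saturates the resulting upper bound. Throughout, abbreviate $\mathbf{V}_\ell = \bLambda_\ell + \sum_s \uw_s \uw_s^\top$ and $a_i := \sum_s \uw_{s,i}^2$, so that the $i$-th diagonal entry of $\mathbf{V}_\ell$ equals $\lambda_i + a_i$. Note that $\sum_i a_i = \sum_s \|\uw_s\|_2^2 \le T$ with $T := \tau^G_\ell - \tau^G_{\ell-1}$.

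First, since $\mathbf{V}_\ell$ is positive definite, Hadamard's inequality yields
\[
|\mathbf{V}_\ell| \;\le\; \prod_{i=1}^d (\mathbf{V}_\ell)_{ii} \;=\; \prod_{i=1}^d \bigl(\lambda_i + a_i\bigr),
\]
with equality if and only if $\mathbf{V}_\ell$ is diagonal. I would then observe that equality is achieved precisely when all the $\uw_s$ are axis-aligned: if each $\uw_s = \alpha_s e_{j(s)}$ with $|\alpha_s| \le 1$, then $\uw_s \uw_s^\top = \alpha_s^2 e_{j(s)} e_{j(s)}^\top$ is diagonal, so the sum and hence $\mathbf{V}_\ell$ is diagonal.

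Next, I would argue that any Hadamard-target profile $(a_i)$ attainable by arbitrary $(\uw_s)$ with $\|\uw_s\|_2\le 1$ is also attainable by a purely axis-aligned family of $T$ vectors of norm $\le 1$. The key observation is that the only constraint on $(a_i)$ is $a_i \ge 0$ and $\sum_i a_i \le T$, so one can allocate $n_i$ vectors to axis $e_i$ with $n_i \ge \lceil a_i\rceil$, each carrying norm $\sqrt{a_i/n_i}\le 1$, and pad with zero-norm vectors to reach exactly $T$ vectors. Since Hadamard is an upper bound on the arbitrary case and becomes an equality in the axis-aligned case with matching $(a_i)$, the axis-aligned configuration therefore attains a determinant at least as large as the arbitrary one, establishing the lemma. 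A cleaner companion check uses the matrix determinant lemma: fixing all $\uw_j$ with $j\ne s$ makes $|\mathbf{V}_\ell|=|\mathbf{V}_{-s}|(1+\uw_s^\top \mathbf{V}_{-s}^{-1}\uw_s)$ a convex quadratic in $\uw_s$, so the constrained maximum lies on $\|\uw_s\|_2=1$ at the top eigenvector of $\mathbf{V}_{-s}^{-1}$; when $\mathbf{V}_{-s}$ is diagonal this eigenvector is an axis vector, giving an alternative induction/fixed-point derivation consistent with the Hadamard one.

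The main obstacle I anticipate is the bookkeeping in the "matching" step: ensuring that the axis-aligned realisation uses at most $T$ vectors while still reproducing the target profile $(a_i)$. The delicate case is when several $a_i$ are fractional and $\sum_i \lceil a_i \rceil$ could exceed $T$. I would handle this by exploiting the slackness $\|\uw_s\|_2\le 1$ (rather than $=1$): rather than stuffing one axis vector per unit of $a_i$, I assign $n_i = \lceil a_i\rceil$ copies to axis $e_i$ only when $a_i \ge 1$ and a single vector of norm $\sqrt{a_i}\le 1$ when $a_i<1$, which keeps the total count at most $\sum_i \lceil a_i\rceil \le T + d$; then I absorb the slack by noting that the original $(a_i)$ came from $T$ vectors and hence $\sum_i a_i \le T$, which lets me truncate the axis-aligned family back to exactly $T$ without decreasing the product $\prod_i(\lambda_i+a_i)$. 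Once this rounding is justified, the inequality chain $|\mathbf{V}_\ell(\uw)| \le \prod_i(\lambda_i+a_i) = |\mathbf{V}_\ell(\uw')|$ closes the argument.
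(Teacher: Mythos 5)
The paper does not actually prove this lemma---it is cited verbatim from \citet{valko2014spectral}, whose argument is the exchange step you relegate to a ``companion check'': write $|\mathbf{V}_\ell|=|\mathbf{V}_{-s}|(1+\uw_s^\top \mathbf{V}_{-s}^{-1}\uw_s)$ and optimize one vector at a time. Your primary route through Hadamard's inequality, however, has a genuine gap in the matching step. The chain you want is $|\mathbf{V}_\ell(\uw)| \le \prod_i(\lambda_i+a_i) = |\mathbf{V}_\ell(\uw')|$ for an axis-aligned $\uw'$ realizing the same profile $(a_i)$, but such a $\uw'$ need not exist within the budget of $T$ vectors: a single spread-out vector can produce many small positive $a_i$, each of which costs its own axis-aligned vector. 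Take $\bLambda_\ell=\lambda I$, $T=1$, and $\uw_1 = d^{-1/2}(1,\dots,1)^\top$. Then $a_i=1/d$ for all $i$, the Hadamard target is $(\lambda+1/d)^d$, the true determinant is only $\lambda^{d-1}(\lambda+1)$, and the best a single axis-aligned unit vector can do is also $\lambda^{d-1}(\lambda+1) < (\lambda+1/d)^d$. So the intermediate quantity $\prod_i(\lambda_i+a_i(\uw))$ genuinely overshoots what axis-aligned configurations can attain, and the equality in your chain cannot be restored. Your proposed repair---build $\sum_i\lceil a_i\rceil \le T+d$ axis vectors and then ``truncate back to exactly $T$ without decreasing the product''---fails for the same reason: deleting a vector strictly decreases some $a_i$, and each factor $\lambda_i+a_i$ is increasing in $a_i$, so truncation strictly decreases the product. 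The lemma is true, but not because axis-aligned configurations match the Hadamard bound of arbitrary configurations; it is true because non-axis-aligned configurations fall strictly short of their own Hadamard bound by at least as much.

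The salvageable part of your proposal is the second argument. Fixing all vectors but $\uw_s$, the map $\uw_s\mapsto |\mathbf{V}_\ell|$ is $|\mathbf{V}_{-s}|(1+\uw_s^\top\mathbf{V}_{-s}^{-1}\uw_s)$, maximized on the unit sphere at a top eigenvector of $\mathbf{V}_{-s}^{-1}$; at a global maximizer every $\uw_s$ is therefore an eigenvector of $\mathbf{V}_{-s}$, hence of $\mathbf{V}_\ell=\mathbf{V}_{-s}+\uw_s\uw_s^\top$. To finish you still need a global step showing that a configuration in which every $\uw_s$ is a common eigenvector of $\mathbf{V}_\ell$ forces $\sum_s\uw_s\uw_s^\top$ to be diagonal in the eigenbasis of $\bLambda_\ell$ (i.e., the standard basis, using that $\bLambda_\ell$ is diagonal), or an explicit exchange argument showing that rotating a non-axis-aligned $\uw_s$ onto a coordinate axis does not decrease the determinant. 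As written, your ``induction/fixed-point derivation'' is only asserted, not carried out, and the Hadamard route it is meant to back up does not close.
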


\begin{lemma}
\label{lemma:valko-lemma-effective-dim-exp}
Let $k$ be the effective dimension. Then
\begin{align*}
\log \frac{\left|\mathbf{V}_\ell\right|}{|\bLambda_\ell|} \leq 8 k \log \left(1+\frac{\tau^G_{\ell-1}}{\lambda}\right)
\end{align*}
when $\lambda^{\perp}_\ell=\frac{\tau^G_{\ell-1}}{k \log \left(1+\tau^G_{\ell-1} / \lambda\right)}$. 
\end{lemma}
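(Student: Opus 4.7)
My plan is to follow the standard axis-reduction argument in the spirit of Lemma~6 of \citet{valko2014spectral}, but adapted so that the effective-dimension bound uses the data-driven quantity $\tau^G_{\ell-1}$ in place of a pre-specified horizon $n$. The starting observation is that by \Cref{lemma:valko-lemma-4}, among all sequences $(\uw_s)$ with $\|\uw_s\|_2\le 1$, the determinant $|\mathbf{V}_\ell|$ is maximized when every $\uw_s$ is aligned with a coordinate axis. Hence it suffices to upper bound $\log(|\mathbf{V}_\ell|/|\bLambda_\ell|)$ under the assumption that each $\uw_s$ is a standard basis vector $\mathbf{e}_{i(s)}$.

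Under this reduction, write $T_i$ for the number of rounds $s$ with $\uw_s = \mathbf{e}_i$, so that $\sum_{i=1}^p T_i \le \tau^G_{\ell-1}$ and $\mathbf{V}_\ell$ becomes a diagonal matrix. The first $k$ entries equal $\lambda + T_i$ and the remaining $p-k$ entries equal $\lambda_\ell^\perp + T_i$. Therefore
\begin{align*}
\log\frac{|\mathbf{V}_\ell|}{|\bLambda_\ell|}
\;=\; \sum_{i=1}^{k}\log\!\left(1+\frac{T_i}{\lambda}\right)
      \;+\;\sum_{i=k+1}^{p}\log\!\left(1+\frac{T_i}{\lambda_\ell^\perp}\right).
\end{align*}

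For the first sum I would use concavity of $\log(1+\cdot/\lambda)$ together with the trivial bound $T_i\le \tau^G_{\ell-1}$, giving $\sum_{i=1}^{k}\log(1+T_i/\lambda)\le k\log(1+\tau^G_{\ell-1}/\lambda)$. For the second sum the point of the large regularizer $\lambda_\ell^\perp$ is precisely to make each summand small: applying $\log(1+x)\le x$ gives
\begin{align*}
\sum_{i=k+1}^{p}\log\!\left(1+\frac{T_i}{\lambda_\ell^\perp}\right)
\;\le\;\frac{1}{\lambda_\ell^\perp}\sum_{i=k+1}^{p} T_i
\;\le\;\frac{\tau^G_{\ell-1}}{\lambda_\ell^\perp}.
\end{align*}
Substituting the prescribed choice $\lambda_\ell^\perp = \tau^G_{\ell-1}/\bigl(k\log(1+\tau^G_{\ell-1}/\lambda)\bigr)$ collapses this to exactly $k\log(1+\tau^G_{\ell-1}/\lambda)$, so that summing the two contributions yields $2k\log(1+\tau^G_{\ell-1}/\lambda)\le 8k\log(1+\tau^G_{\ell-1}/\lambda)$, as claimed.

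The main conceptual hurdle, and the reason this is not a direct citation of \citet{valko2014spectral}, is that here $\tau^G_{\ell-1}$ is itself random and phase-dependent, so the regularizer $\lambda_\ell^\perp$ is tuned online rather than to a pre-committed horizon $n$. The axis-aligned reduction of \Cref{lemma:valko-lemma-4} is deterministic in $(\uw_s)$, so it still applies pathwise; what requires care is ensuring that the $\lambda_\ell^\perp$ used inside $\bLambda_\ell$ is chosen before phase $\ell$ begins (using the already-observed quantity $\tau^G_{\ell-1}$), so that the subsequent concentration arguments such as \Cref{eq:conc-lemma-bilin-exp} treat $\bLambda_\ell$ as a deterministic conditioning. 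Once that measurability point is handled, the algebraic computation above goes through verbatim and delivers the stated $8k\log(1+\tau^G_{\ell-1}/\lambda)$ upper bound.
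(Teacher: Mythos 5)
Your overall strategy matches the paper's: both proofs reduce to axis-aligned $\uw_s$ via \Cref{lemma:valko-lemma-4}, split $\log(|\mathbf{V}_\ell|/|\bLambda_\ell|)$ into the first $k$ coordinates (regularized by $\lambda$) and the last $p-k$ (regularized by $\lambda_\ell^\perp$), bound the first block by $k\log(1+\cdot/\lambda)$, bound the second block via $\log(1+x)\le x$, and use the prescribed $\lambda_\ell^\perp$ to collapse the second block to $k\log(1+\tau^G_{\ell-1}/\lambda)$.

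The gap is your premise $\sum_{i=1}^p T_i \le \tau^G_{\ell-1}$. The matrix $\mathbf{V}_\ell$ is built from the samples drawn \emph{during} phase $\ell$, i.e.\ $\sum_{s=\tau^G_{\ell-1}+1}^{\tau^G_\ell}\uw_s\uw_s^\top$, so the axis counts satisfy $\sum_i T_i=\tau^G_\ell$ (the paper writes this as $\sum_{i=1}^p t^\ell_i=\tau^G_\ell$), not $\tau^G_{\ell-1}$. Since $\tau^G_\ell\propto\epsilon_\ell^{-2}=4^\ell$ grows across phases, $\tau^G_\ell$ is not dominated by $\tau^G_{\ell-1}$. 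With the correct count your first block becomes $k\log(1+\tau^G_{\ell}/\lambda)$ and your second block becomes $\tau^G_{\ell}/\lambda_\ell^\perp=(\tau^G_{\ell}/\tau^G_{\ell-1})\,k\log(1+\tau^G_{\ell-1}/\lambda)$, and neither reduces to the stated bound by your final ``$2k\le 8k$'' step. The paper closes exactly this hole with \Cref{lemma:per-level}, which controls the inter-phase ratio $\tau^G_\ell/\tau^G_{\ell-1}$ through the definition of $\tau^G_\ell$ and the bound $1\le\rho^G(\Y(\ucW_\ell))\le p/\gamma_{\Y}^2$, yielding $\log(1+\tau^G_\ell/\lambda)\le 8k\log(1+\tau^G_{\ell-1}/\lambda)$ under the extra assumption $\log(p)\le k$; this inter-phase comparison is the substantive content behind the factor $8$ in the statement and is entirely absent from your argument. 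Your measurability remark about $\lambda_\ell^\perp$ being fixed from $\tau^G_{\ell-1}$ before phase $\ell$ begins is correct and consistent with the paper, but it does not repair this counting issue.
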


\begin{proof}
We want to bound the determinant $\left|\mathbf{V}_\ell\right|$ under the coordinate constraints $\left\|\uw_t\right\|_2 \leq 1$. 
%
%
Let:
\begin{align*}
M\left(\uw_1, \ldots, \uw_t\right)=\left|\bLambda_\ell + \sum_{s=\tau^G_{\ell-1}+1}^{\tau^G_\ell} \uw_s \uw_s^{\top}\right|
\end{align*}
From \Cref{lemma:valko-lemma-4} we deduce that the maximum of $M$ is reached when all $\uw_s$ are aligned with the axes. Let the number of samples of these axes-aligned $\uw_s$'s during the $\ell$-th phase be denoted as  $t^\ell_1,t^\ell_2,\ldots,t^\ell_p$ such that $\sum_{i=1}^p t^\ell_i = \tau^G_\ell$.  
Then we can show that
\begin{align*}
M &\overset{(a)}{=}\max _{\uw_1, \ldots, \uw_t ; \uw_s \in\left\{\mathrm{e}_1, \ldots, \mathbf{e}_p\right\}}\left|\bLambda_\ell +\sum_{s=\tau^G_{\ell-1}+1}^{\tau^G_\ell} \uw_s \uw_s^{\top}\right| \\
& \overset{(b)}{=}\max_{t^\ell_1, \ldots, t^\ell_p,\text { positive integers, } \sum_{i=1}^p t^\ell_i=\tau^G_\ell}\left|\operatorname{diag}\left(\lambda_i+t^\ell_i\right)\right| \\
& \overset{(c)}{\leq} \quad \max_{t^\ell_1, \ldots, t^\ell_p,\text { positive integers, } \sum_{i=1}^p t^\ell_i=\tau^G_\ell} \prod_{i=1}^p\left(\lambda_i+t^\ell_i\right)
\end{align*}
where, $(a)$ follows from \Cref{lemma:valko-lemma-4}, $(b)$ follows as the $\operatorname{diag}\left(\lambda_i+t^\ell_i\right)$ contains the number of times axis-aligned $\uw_s \in\left\{\mathrm{e}_1, \ldots, \mathbf{e}_p\right\}$ are observed, and $(c)$ follows as the determinant of a diagonal matrix is the product of the diagonal elements.
Now we can show that
\begin{align*}
\log \frac{\left|\mathbf{V}_\ell\right|}{|\bLambda_\ell|} & \leq \sum_{i=1}^k \log \left(1+\frac{t^\ell_i}{\lambda}\right)+\sum_{i=k+1}^p \log \left(1+\frac{t^\ell_i}{\lambda_i}\right) \\
& \overset{(a)}{\leq} k \log \left(1+\frac{t^{\ell}_i}{\lambda}\right)+\sum_{i=1}^p \frac{t^{\ell-1}_i}{\lambda_\ell^{\perp}} \\
& \overset{(b)}{\leq} k \log \left(1+\frac{t^{\ell}_i}{\lambda}\right)+\frac{\tau^G_{\ell-1}}{\lambda_{\ell}^{\perp}} \\
& \overset{(c)}{\leq} k \log \left(1+\frac{t^{\ell}_i}{\lambda}\right)+k \log \left(1+\dfrac{\tau^G_{\ell-1}}{ \lambda}\right) \\
& \overset{(d)}{\leq} 8 k \log \left(1+\frac{\tau^G_{\ell-1}}{\lambda}\right) 
\end{align*}
where, $(a)$ follows as $\log(1+t^\ell_i/\lambda_i) \leq t^{\ell-1}_i/\lambda_\ell^{\perp}$, $(b)$ follows as
$\sum_{i=1}^p t^{\ell-1}_i = \tau^G_{\ell-1}$, and $(c)$ follows for $\lambda_\ell^{\perp} = \frac{\tau^G_{\ell-1}}{k \log \left(1+\tau^G_{\ell-1} / \lambda\right)}$ and $(d)$ follows from \Cref{lemma:per-level}.
\end{proof}

\begin{lemma}
\label{lemma:per-level}
    Let $\rho^G(\Y(\ucW_{\ell})) = \min_{\bb_{\uw}}\max_{\uw,\uw'\in\ucW_\ell} \|\uw-\uw'\|^2_{(\sum_{\uw\in\ucW} \bb_{\uw}\uw\ \uw^\top + \bLambda_\ell/n )^{-1}}$. 
    Recall that $\tau^G_{\ell} = \frac{8 B^\ell_*\rho^G(\Y(\W_\ell))\log (4\ell^2|\W| / \delta)}{\epsilon_\ell^2}$. 
    Assume $\log(p)\leq k$. 
    Then we can show that
    \begin{align*}
        \log\left(1 + \dfrac{\tau^G_\ell}{\lambda}\right) \leq 8k\log\left(1 + \dfrac{\tau^G_{\ell-1}}{\lambda}\right).
    \end{align*}
\end{lemma}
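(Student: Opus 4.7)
The plan is to reduce the claim to a polynomial-growth estimate of the form $\tau^G_\ell \le C\,(\tau^G_{\ell-1})^{\alpha}$ with a small exponent $\alpha$, and then translate this into the stated logarithmic inequality. The target is equivalent to $1 + \tau^G_\ell/\lambda \le (1 + \tau^G_{\ell-1}/\lambda)^{8k}$, so a polynomial bound with $\log C = O(k)$ will suffice, and the role of the hypothesis $\log p \le k$ is precisely to make this absorption work.

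Concretely, I would unfold the definition $\tau^G_\ell = 8 B^\ell_* \rho^G(\Y(\W_\ell)) \log(4\ell^2|\W|/\delta)/\epsilon_\ell^2$ and isolate its $\tau^G_{\ell-1}$-dependence factor by factor. The only factor that actually depends on $\tau^G_{\ell-1}$ is $B^\ell_* = 8\sqrt{\lambda}\,S + \sqrt{\lambda^\perp_\ell}\,S^\perp_\ell$, and substituting $\lambda^\perp_\ell = \tau^G_{\ell-1}/(8k\log(1+\tau^G_{\ell-1}/\lambda))$ gives the crude estimate $\sqrt{\lambda^\perp_\ell}\le\sqrt{\tau^G_{\ell-1}/(8k)}$, hence $B^\ell_* \le C_1 \sqrt{\tau^G_{\ell-1}}$ where $C_1$ collects $S$, $S^\perp_\ell$, and polylogarithmic terms. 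For the remaining pieces I would invoke the worst-case bound $\rho^G(\Y(\W_\ell)) \le p/\gamma_\Y^2$ from Step 3 of the proof of \Cref{thm:single-task}, a polylogarithmic bound on $\log(4\ell^2|\W|/\delta)$, and absorb the factor $4^\ell = 1/\epsilon_\ell^2$ into a linear multiple of $\tau^G_{\ell-1}$ using the crude algorithmic lower bound $\tau^G_{\ell-1} \gtrsim 4^{\ell-1}$, which follows directly from the defining formula for $\tau^G_{\ell-1}$ after normalizing the fixed constants to be at least one. Combining yields $\tau^G_\ell \le C_p\,(\tau^G_{\ell-1})^{3/2}$ with $\log C_p = O(\log p + \mathrm{polylog}(\ell,|\W|,d_1,d_2,1/\delta))$.

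Taking logarithms then gives $\log(1 + \tau^G_\ell/\lambda) \le \log(1 + C_p) + \tfrac{3}{2}\log(1 + \tau^G_{\ell-1}/\lambda)$. Since $\tau^G_0 \ge 1$ implies $\log(1 + \tau^G_{\ell-1}/\lambda) \ge \log(1 + 1/\lambda)$ is a strictly positive constant, and since the hypothesis $\log p \le k$ ensures $\log(1 + C_p) = O(k)$, the leading constant fits inside $(8k - \tfrac{3}{2})\log(1 + \tau^G_{\ell-1}/\lambda)$ after fixing universal constants appropriately, which yields the stated bound. The main obstacle I anticipate is the apparent circularity in the $4^\ell$ absorption step: the most natural lower bound on $\tau^G_{\ell-1}$ involves $B^{\ell-1}_*$ and hence the effective-dimension quantity we are trying to control. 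I would break this circularity by observing that the coarse bound $\tau^G_{\ell-1} \ge 4^{\ell-1}$ needs only the $1/\epsilon_{\ell-1}^2$ factor together with the fixed positive lower bound $B^{\ell-1}_* \ge 8\sqrt{\lambda}\,S$, and does not require the refined per-phase effective-dimension estimate.
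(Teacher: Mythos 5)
Your proposal is correct and follows essentially the same route as the paper: both arguments reduce the claim to a polynomial recursion $\tau^G_\ell \leq C_p\,(\tau^G_{\ell-1})^{\alpha}$ with prefactor polynomial in $p$ (the paper gets $\alpha=2$ up to a log factor by bounding the ratio $\tau^G_\ell/\tau^G_{\ell-1}$ and using $1\leq\rho^G(\Y(\ucW_\ell))\leq p/\gamma_\Y^2$; you get $\alpha=3/2$), and then take logarithms and invoke $\log p \leq k$ to absorb the constant into the $8k$ slack. The only substantive difference is bookkeeping — you absorb the $\epsilon_\ell^{-2}=4^\ell$ factor via the coarse lower bound $\tau^G_{\ell-1}\gtrsim 4^{\ell-1}$ (correctly noting this needs only $B^{\ell-1}_*\geq 8\sqrt{\lambda}S$ and $\rho^G\geq 1$, avoiding circularity), whereas the paper cancels it against $\epsilon_{\ell-1}^{-2}$ in the ratio — and both routes land in the same place.
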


\begin{proof}
    We start by first recalling the definition of $$\tau^G_{\ell}=2 \epsilon_{\ell}^{-2} 8k \log(1 + \frac{\tau^G_{\ell-1}}{\lambda}) \rho^G(\Y(\ucW_{\ell})) \log \left(4 \ell^2|\ucW| / \delta_\ell\right).$$ 
    Then we can show the following
    \begin{align*}
        \dfrac{\tau^G_{\ell}}{\tau^G_{\ell-1}}  = \dfrac{2 \epsilon_{\ell}^{-2} B^\ell_* \rho^G(\Y(\ucW_{\ell})) \log \left(4 \ell^2|\W| / \delta_\ell\right)}{2 \epsilon_{\ell-1}^{-2} B^{\ell-1}_* \rho^G(\Y(\ucW_{\ell-1})) \log \left(4 (\ell-1)^2|\W| / \delta_\ell\right)} & \leq \dfrac{4 \rho^G(\Y(\ucW_{\ell}))\left(64({\lambda S^2 +\lambda_\ell^{\perp} S^{(2),\ell}_{\perp}})\right) }{\rho^G(\Y(\ucW_{\ell-1}))\left(64({\lambda S^2 +\lambda_{\ell-1}^{\perp} S^{(2),\ell-1}_{\perp}})\right)}\\
        &\leq \dfrac{4 \rho^G(\Y(\ucW_{\ell}))\tau^G_{\ell-1}/\log(1 + \frac{\tau^G_{\ell-1}}{\lambda}) }{\rho^G(\Y(\ucW_{\ell-1}))\tau^G_{\ell-2}/\log(1 + \frac{\tau^G_{\ell-2}}{\lambda})}\\
        %
        &\overset{(a)}{\leq} \dfrac{4 \rho^G(\Y(\ucW_{\ell}))\tau^G_{\ell-1}\log(1 + \frac{\tau^G_{\ell-1}}{\lambda}) }{\rho^G(\Y(\ucW_{\ell-1}))}\\ 
        &\overset{(b)}{\leq} \dfrac{4p}{\gamma^2_{\Y}} \dfrac{\max_{\uw \in \ucW}\|\uw\|_2}{\max_{\by \in \Y(\ucW_\ell)}\|\by\|_2^2}\tau^G_{\ell-1}\log(1 + \frac{\tau^G_{\ell-1}}{\lambda}) = \dfrac{4p}{C\gamma^2_{\Y}}
    \end{align*}
    where, $(a)$ follows as $\log(1 + \frac{\tau^G_{\ell-2}}{\lambda}) \geq 1$ and $\log(1 + \frac{\tau^G_{\ell-1}}{\lambda}) \geq \log(1 + \frac{\tau^G_{\ell-2}}{\lambda})$.
    The $(b)$ follows using Lemma 1 from \citet{fiez2019sequential} such that
    $$
    \max_{\by \in \Y(\ucW_\ell)}\|\by\|_2^2 /\left(\max_{\uw \in \ucW}\|\uw\|_2\right) \leq \rho^G(\Y(\ucW_{\ell})) \leq p / \gamma_{\Y}^2 \overset{(a_1)}{\implies} 1 \leq \rho^G(\Y(\ucW_{\ell})) \leq p / \gamma_{\Y}^2  .
    $$
    where, $(a_1)$ follows as $\|\bx\|\leq 1$, $\|\bz\|\leq 1$.
    This implies that for a constant $C > 0$
    \begin{align*}
        \tau^G_\ell \leq \dfrac{4p}{C\gamma^2_{\Y}} (\tau^G_{\ell-1})^2\log(1 + \frac{\tau^G_{\ell-1}}{\lambda}) 
        &\implies \log\left(1 + \dfrac{\tau^G_\ell}{\lambda}\right) \leq \log\left(1 + \dfrac{4p}{C\gamma^2_{\Y}}(\tau^G_{\ell-1})^2\log(1 + \frac{\tau^G_{\ell-1}}{\lambda})\right)\\
        & \overset{(a)}{\implies} \log\left(1 + \dfrac{\tau^G_\ell}{\lambda}\right) \leq 4k\log\left(1 + \dfrac{\tau^G_{\ell-1}}{C\gamma^2_{\Y} \lambda}\right)\\
        & \overset{(b)}{\implies} \log\left(1 + \dfrac{\tau^G_\ell}{\lambda}\right) \leq 8k\log\left(1 + \dfrac{\tau^G_{\ell-1}}{\lambda}\right)
    \end{align*}
    where, in $(a)$ follows for $\log(p) \leq k$, $\log(a^2\log(a)) \leq 4\log(a)$.
    %
    The $(b)$ follows as $4k\log\left(1 + \dfrac{\tau^G_{\ell-1}}{C\gamma^2_{\Y} \lambda}\right) \leq 8k\log\left(1 + \dfrac{\tau^G_{\ell-1}}{\lambda}\right)$.
\end{proof}

\begin{lemma}
\label{lemma:equivalence}
The $G$-optimal design in \eqref{eq:bi-level-opt-single} is equivalent to solving the $D$-optimal design\
\begin{align*}
    \bb^D_\ell = \argmax_{\bb} \log \frac{\left|\sum_{\uw\in\ucW_\ell}\bb_{\uw}\uw\ \uw^\top + \bLambda_\ell\right|}{|\bLambda_\ell|}.
\end{align*}
Furthermore, the support of $|\bb^D_\ell|\leq \frac{8k\log(1+\frac{\tau^G_{\ell-1}}{\lambda})(8k\log(1+\frac{\tau^G_{\ell-1}}{\lambda}) + 1)}{2}$, where $k=(d_1 + d_2)r$.
\end{lemma}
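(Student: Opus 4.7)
The plan is to establish the lemma in two logically separate pieces: first prove the $G$/$D$ equivalence via a regularized Kiefer--Wolfowitz argument, then bound the support of the $D$-optimal maximizer via a Carath\'eodory/Frank--Wolfe argument that uses the \emph{effective dimension} already extracted in Lemma~\ref{lemma:valko-lemma-effective-dim-exp}.

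For the equivalence step, I would first observe that $\bLambda_\ell$ is a fixed diagonal matrix whose entries depend only on $\ell$ (through $\lambda$ and $\lambda_\ell^\perp$), not on any $\uw\in\ucW_\ell$ or any direction $\by=\uw-\uw'$. Thus the regularizer acts as a single ``phantom'' contribution $\bLambda_\ell/n$ that is common to every term in the min--max $G$-objective. Following Theorem~21.1 of \citet{lattimore2020bandit}, I would then take the Lagrangian of the constrained $G$-optimal problem and write out the first-order (KKT) stationarity conditions in $\bb$, using that for any pair $(\uw,\uw')$ the derivative of $\|\uw-\uw'\|^2_{(V(\bb)+\bLambda_\ell/n)^{-1}}$ in $\bb_{\uw''}$ produces a term of the form $\langle V(\bb)^{-1}(\uw-\uw')(\uw-\uw')^\top V(\bb)^{-1},\uw''{\uw''}^\top\rangle$. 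The same term arises (up to a constant) as the gradient of $\log\det(V(\bb)+\bLambda_\ell)$ in $\bb_{\uw''}$. Matching stationarity conditions then shows that the $G$-optimal minimizer coincides with the $D$-optimal maximizer; since both objectives are convex (respectively concave) in $\bb$ on the simplex, the coincidence is global.

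For the support bound, I would appeal to the structural fact that the $D$-optimal design is the maximization of a smooth concave function $F(\bb)=\log\det(\sum_{\uw}\bb_\uw \uw\uw^\top+\bLambda_\ell)$ on the simplex $\Delta_{\ucW_\ell}$. By Lemma~\ref{lemma:valko-lemma-effective-dim-exp} the log-determinant is controlled by the effective dimension $d_{\text{eff}}\coloneqq 8k\log(1+\tau^G_{\ell-1}/\lambda)$, so although $\uw\uw^\top$ formally lives in a space of symmetric matrices of dimension $p(p+1)/2$, the regularizer $\bLambda_\ell$ effectively restricts activity to a symmetric cone of dimension $d_{\text{eff}}(d_{\text{eff}}+1)/2$. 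I would then run a Frank--Wolfe analysis as in \citet{jamieson2022interactive}: starting from a singleton measure and adding one atom per iteration, the iterates remain optimal on a face of dimension at most $d_{\text{eff}}(d_{\text{eff}}+1)/2$, and Carath\'eodory's theorem applied to the cone of PSD matrices of effective rank $d_{\text{eff}}$ certifies that an optimal solution exists whose support has size at most $d_{\text{eff}}(d_{\text{eff}}+1)/2$, giving the claimed bound.

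The main obstacle is the first step. The classical Kiefer--Wolfowitz equivalence is stated for unregularized designs on compact sets, and one must be careful that the \emph{block-structured} regularizer $\bLambda_\ell$ (with entries $\lambda$ on the top-$k$ coordinates and $\lambda_\ell^\perp$ on the bottom $p-k$) does not create arm-specific terms in the dual. The key observation that unlocks the equivalence is precisely that $\bLambda_\ell$ is independent of $\uw$ and $\uw'$, so the KKT computation for $G$ and $D$ produces matching stationarity conditions coordinatewise; once this is verified, the remaining argument is a standard combination of Carath\'eodory with the effective-dimension bound and is essentially routine.
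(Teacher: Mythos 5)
Your proposal is correct and follows essentially the same route as the paper: the equivalence is unlocked by observing that $\bLambda_\ell$ is arm-independent (so the regularized Kiefer--Wolfowitz/KKT matching of Theorem~21.1 of \citet{lattimore2020bandit} goes through), and the support bound comes from a Frank--Wolfe/Carath\'eodory argument in the spirit of \citet{jamieson2022interactive} and Lemma~7 of \citet{soare2014best}, with the effective dimension $8k\log(1+\tau^G_{\ell-1}/\lambda)$ playing the role of the ambient dimension. The only difference is cosmetic: the paper expends some effort chaining bounds on $\lambda_\ell^\perp$ and $B^\ell_*$ to certify arm-independence of $\bLambda_\ell$, whereas you assert it directly from the definition; both arguments share the same (somewhat informal) step of substituting the log-determinant effective dimension for the true ambient dimension in the Carath\'eodory count.
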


\begin{proof}
To prove the equivalence between $\bb^G_*$ and $\bb^D_*$ we need to first show that the regularization matrix $\bLambda_\ell$ does not depend on $\uw$ or $\by = \uw-\uw'$, where $\uw\in\ucW_\ell$.
%
%
Define $\operatorname{conv}(\mathcal{X} \cup-\mathcal{X})$ as the convex hull of $\mathcal{X} \cup-\mathcal{X}$. Now 
recall we have from Lemma 1 of \citet{fiez2019sequential} that
\begin{align}
    1 \leq \rho^G(\Y(\ucW_{\ell})) \leq p / \gamma_{\Y}^2 \label{eq:rho-app}
\end{align}
where, $\gamma_{\Y}=\max \{c>0: c \Y \subset \operatorname{conv}(\ucW_{\ell} \cup-\ucW_{\ell})\}$ as the gauge norm of $\Y$ \citep{rockafellar2015convex}. We can consider the gauge norm $\gamma_{\Y}$ as a problem-dependent constant. 
%
%
Now recall that
\begin{align*}
    \lambda^\ell_{\perp} = \frac{\tau^G_{\ell-1}}{8k\log(1+\frac{\tau^G_{\ell-1}}{\lambda})} 
    &\leq 2\tau^G_{\ell-1} \leq
    \frac{64 B^{\ell-1}_*\rho^G(\Y(\W_{\ell-1}))\log (4(\ell-1)^2|\W| / \delta_\ell)}{\epsilon_{\ell-1}^2} \\
    &\overset{(a)}{\leq} \frac{64(B^{\ell-1}_*)^2p\log (4(\ell-1)^2|\W| / \delta_\ell)}{\gamma_{\Y}^2 \epsilon_{\ell-1}^2} \\
    &\overset{(b)}{\leq} \frac{\left(256({\lambda S^2 + 8 p^2 r})\right)\log (4(\ell-1)^2p|\W| / \delta_\ell)}{S_r\gamma_{\Y}^2 \epsilon_{\ell-1}^2} 
\end{align*}
where, $(a)$ follows from \eqref{eq:rho-app} and noting that $B^{\ell-1}_* \leq (B^{\ell-1}_*)^2$. The $(b)$ follows as $S_\ell^{\perp} \coloneqq  \frac{8p r}{\tau^E_{\ell} S^2_{r}} \log \left(\frac{d_1+d_2}{\delta_\ell}\right)$, $p=d_1d_2$ and substituting this value and $\lambda^\ell_{\perp}$ in $B^\ell_*$ we get
%
%
\begin{align*}
B^\ell_* \leq (B^\ell_*)^2 \leq\left(256({\lambda S^2 +\lambda^\ell_{\perp} S^{(2),\ell}_{\perp}})\right) &\leq \left(256\left({\lambda S^2 + \dfrac{8 p r}{S^2_r} \dfrac{\tau^G_{\ell-1}}{\tau^E_\ell}\cdot \log (4(\ell-1)^2|\W| / \delta_\ell)\log(d_1+d_2/\delta_\ell)}\right)\right) \\
&\leq  \left(256\left({\lambda S^2 + \dfrac{8 p r}{S^2_r} \rho^G_{\ell-1}(\ucW) \log(p|\W|/\delta_\ell)}\right)\right) \overset{(a)}{\leq} \left(512\left({\lambda S^2 + 8 p^2 r}\right)/S_r\right). 
\end{align*}
Here $S_r$ is the r-th larget eigenvalue of matrix $\bTheta_*$. 
Substituting this value of 
$\lambda_\ell^\perp$ we can show that $\bLambda_\ell$ does not depend on $\uw$ or $\by = \uw-\uw'$. The rest of the proof to show equivalence follows the same way as in Theorem 21.1 in \citet{lattimore2020bandit}.

To bound the support of $\bb^D_*$ we proceed as follows:
Define the set $\Y(\ucW_\ell)$ as the set of all arms containing $\by = \uw-\uw' \in\R^p$. 
Then we can use Lemma 7 of \citet{soare2014best} to show that the solution to 
$$
\max _{\by\in\Y(\ucW_\ell)}\left\|\by\right\|_{\left(\sum_{\uw\in\ucW_\ell} \bb_{\uw}^{} \uw\ \uw^{\top} +\bLambda_\ell \right)^{-1}}^2 = \max_{\uw,\uw'}\left\|\uw - \uw'\right\|_{\left(\sum_{\uw\in\ucW_\ell} \bb_{\uw}^{} \uw \uw^{\top} +\bLambda_\ell \right)^{-1}}^2
$$
has a support of atmost $(k_1+1)k_1/2$ where $k_1 = 8k\log(1+\tau^G_{\ell-1}/\lambda)$. The proof follows from the fact that for any pair $\left(\uw, \uw^{\prime}\right)$ we can show that
$$
\left\|\uw-\uw^{\prime}\right\|_{\left(\sum_{\uw\in\ucW_\ell} \bb_{\uw}^{} \uw \uw^{\top} +\bLambda \right)^{-1}} \leq 2 \max _{\uw^{\prime \prime} \in \ucW_\ell}\left\|\uw^{\prime \prime}\right\|_{\left(\sum_{\uw\in\ucW} \bb_{\uw}^{} \uw\ \uw^{\top} +\bLambda_\ell \right)^{-1}} .
$$
Then following the work of  \citet{jamieson2022interactive} Frank-Wolfe algorithm (in section 2.3.1) with the 
$$
g(\bb) = \log \frac{\left|\sum_{\uw\in\ucW_\ell}\bb_{\uw}\uw\ \uw^\top + \bLambda_\ell\right|}{|\bLambda_\ell|} = \log \left|\sum_{\uw\in\ucW_\ell}\bb_{\uw}\uw\ \uw^\top + \bLambda_\ell\right|- \log|\bLambda_\ell|,
$$ 
and setting for the $j$-th iteration of the Frank-Wolfe the 
$$
I_j=\argmax _{\by\in\Y(\ucW_\ell)}\left\|\by\right\|_{\left(\sum_{\uw\in\ucW_\ell} \bb^j_{\uw} \uw\ \uw^{\top} +\bLambda_\ell \right)^{-1}}^2,
$$
and stopping condition 
\begin{align}
    \max _{\by\in\Y(\ucW_\ell)}\left\|\by\right\|_{\left(\sum_{\uw\in\ucW_\ell} \bb^j_{\uw} \uw\ \uw^{\top} +\bLambda_\ell \right)^{-1}}^2 \leq 8 k\log(1+\frac{\tau^G_{\ell-1}}{\lambda}) \label{eq:stopping-condition}
\end{align}
%
%

This can be done because note that for any $\bb \in \triangle_{\ucW_\ell}$ we have by Kiefer-Wolfowitz Theorem \citep{kiefer1960equivalence} that $[\nabla g(\bb)]_{\by}=\left\|\by\right\|_{\left(\sum_{\bx \in \X} \bb_x \bx \bx^{\top} + \bLambda\right)^{-1}}^2 \geq 8k\log(1 + \frac{\tau^G_{\ell-1}}{\lambda})$.
This is because $\bLambda_\ell$ does not depend on $\uw$ or $\by$ by the same logic as discussed before. 
The rest of the proof follows by the same way as in section 2.3.1 in \citet{jamieson2022interactive}. 
This will result in a support size of $\bb$ at most $8k\log(1+\frac{\tau^G_{\ell-1}}{\lambda})(8k\log(1+\frac{\tau^G_{\ell-1}}{\lambda}) + 1)/2$ following Lemma 7 of \citet{soare2014best}.
Hence, it follows that solving the \cref{eq:bi-level-opt-single} 
will result in a support of $|\bb^D_\ell|\leq \frac{8k\log(1+\frac{\tau^G_{\ell-1}}{\lambda})(8k\log(1+\frac{\tau^G_{\ell-1}}{\lambda}) + 1)}{2}$.
\end{proof}

\begin{proposition}
\label{prop:conc}
If $\bb^G_\ell$ is the $G$-optimal design for $\ucW_\ell$ then if we pull arm $\uw \in \ucW_\ell$ exactly $\left\lceil\tau^G \bb^G_{\ell}\right\rceil$ times for some $\tau^G_\ell>0$ and compute the least squares estimator $\wtheta_\ell$. Then for each $\uw \in \ucW_\ell$ we have with probability at least $1-\delta$
\begin{align*}
    \Pb\bigg(\bigcup_{\uw \in \ucW_\ell}\bigg\{\left|\left\langle \uw, \wtheta_\ell-\btheta^*\right\rangle\right|&\leq \sqrt{\dfrac{64 B^\ell_* k\log(1+\frac{\tau^G_{\ell-1}}{\lambda}) \log \left(2|\W| / \delta_\ell\right)}{\tau^G_\ell}} \bigg\}\bigg) \geq 1-\delta_\ell.
\end{align*}
where, $\left\|\btheta^*\right\|_{\Lambda} \leq \sqrt{\lambda\left\|\btheta_{1: k}\right\|_2^2+\lambda_\ell^{\perp}\left\|\btheta_{k+1: p}\right\|_2^2} \leq \sqrt{\lambda} S+\sqrt{\lambda_\ell^{\perp}} S^\ell_{\perp}$, $B^\ell_*\!\!=\!\! \sqrt{\lambda} S + \sqrt{\lambda^{\perp}_\ell} S^{\perp}_\ell$.
\end{proposition}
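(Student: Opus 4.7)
The plan is to start from the fixed-$\uw$ concentration bound already established in \Cref{eq:conc-lemma-bilin-exp}, namely $|\uw^\top(\wtheta_\ell - \btheta^*)| \leq \|\uw\|_{\bV_\ell^{-1}} \cdot \beta(\btheta^*, \delta')$ with $\beta(\btheta^*, \delta') = 2\sqrt{14\log(2/\delta')} + \|\btheta^*\|_{\bLambda_\ell}$, and to turn this pointwise statement into a uniform one over the active set $\ucW_\ell$. A union bound over $\uw \in \ucW_\ell \subseteq \W$ with $\delta' = \delta_\ell / |\W|$ makes the failure probability at most $\delta_\ell$, which is what the statement asks for.

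Next I would control the deterministic piece $\beta(\btheta^*, \cdot)$. Since $\bLambda_\ell$ is diagonal with entries $\lambda$ on the first $k$ coordinates and $\lambda_\ell^\perp$ on the remaining $p-k$, we have $\|\btheta^*\|_{\bLambda_\ell}^2 = \lambda\|\btheta^*_{1:k}\|_2^2 + \lambda_\ell^\perp \|\btheta^*_{k+1:p}\|_2^2$. Using $\|\btheta^*_{1:k}\|_2 \leq S$ together with the rotation-based tail control $\|\btheta^*_{k+1:p}\|_2 \leq S_\perp^\ell$ that was guaranteed in Stage 1 via \Cref{theorem:kang-low-rank} and the definition of $\tau^E_\ell$, I get $\|\btheta^*\|_{\bLambda_\ell} \leq \sqrt{\lambda}S + \sqrt{\lambda_\ell^\perp}S_\perp^\ell$. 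Combining this with the subGaussian noise term and absorbing numerical constants into $B_*^\ell$ gives $\beta(\btheta^*, \delta_\ell/|\W|)^2 \leq 64\, B_*^\ell \log(2|\W|/\delta_\ell)$.

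The main step is the uniform bound on $\|\uw\|_{\bV_\ell^{-1}}^2$. Because the algorithm pulls each $\uw$ exactly $\lceil \tau^G_\ell\, \bb^G_{\ell,\uw}\rceil$ times during phase $\ell$, we have $\bV_\ell \succeq \tau^G_\ell\bigl(\sum_\uw \bb^G_{\ell,\uw}\,\uw\uw^\top + \bLambda_\ell/\tau^G_\ell\bigr)$, and hence
\begin{align*}
\|\uw\|_{\bV_\ell^{-1}}^2 \leq \tfrac{1}{\tau^G_\ell}\,\bigl\|\uw\bigr\|^2_{(\sum_\uw \bb^G_{\ell,\uw}\,\uw\uw^\top + \bLambda_\ell/\tau^G_\ell)^{-1}}.
\end{align*}
To bound the right-hand side uniformly in $\uw \in \ucW_\ell$, I invoke \Cref{lemma:equivalence}, which says the $G$-optimal design in \eqref{eq:bi-level-opt-single} coincides with the $D$-optimal design $\argmax_\bb \log(|\sum_\uw \bb_\uw \uw\uw^\top + \bLambda_\ell|/|\bLambda_\ell|)$; a Kiefer-Wolfowitz style equivalence for the $D$-optimum together with the log-determinant / effective-dimension bound of \Cref{lemma:valko-lemma-effective-dim-exp} then yields $\max_\uw \|\uw\|^2_{(\sum_\uw \bb^G_{\ell,\uw}\uw\uw^\top + \bLambda_\ell/\tau^G_\ell)^{-1}} \leq 8k\log(1 + \tau^G_{\ell-1}/\lambda)$ with $k = (d_1+d_2)r$.

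Multiplying the two estimates gives $\|\uw\|_{\bV_\ell^{-1}}^2 \cdot \beta^2 \leq \tfrac{64\, B_*^\ell \, k \log(1+\tau^G_{\ell-1}/\lambda)\,\log(2|\W|/\delta_\ell)}{\tau^G_\ell}$ (with a harmless constant that I absorb into the stated $64$), and taking a square root delivers the claimed uniform deviation bound. The step I expect to require the most care is precisely the $G$-to-$D$ reduction combined with the regularized Kiefer-Wolfowitz-type maximum bound, since the regularizer $\bLambda_\ell$ itself depends on the previous phase length $\tau^G_{\ell-1}$; the argument only goes through because $\lambda_\ell^\perp$ is chosen so that $\bLambda_\ell$ does not depend on $\uw$ or on the difference directions $\by = \uw - \uw'$, which is exactly the property exploited in \Cref{lemma:equivalence}.
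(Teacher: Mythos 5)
Your proposal is correct and follows essentially the same route as the paper's proof: start from the per-direction concentration bound of \Cref{eq:conc-lemma-bilin-exp}, bound $\left\|\btheta^*\right\|_{\bLambda_\ell}$ coordinate-block-wise by $\sqrt{\lambda}S+\sqrt{\lambda_\ell^{\perp}}S^{\perp}_\ell$, control $\|\uw\|_{\bV_\ell^{-1}}^2$ via the allocation $\lceil\tau^G_\ell\bb^G_{\ell,\uw}\rceil$ together with \Cref{lemma:equivalence} and the effective-dimension bound $8k\log(1+\tau^G_{\ell-1}/\lambda)$, and finish with a union bound over $\W\supseteq\ucW_\ell$. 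If anything, your version is slightly cleaner than the paper's, which momentarily states the design-norm bound as $(8k\log(1+\tau^G_{\ell-1}/\lambda))^2$ before using the unsquared quantity in the subsequent display.
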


\begin{proof}
From Woodbury Matrix Identity we know that for any arbitrary matrix $\bA$ and $\bB$, we have the following identity $(\bA+\bB)^{-1}=\bA^{-1}-\left(\bA+\bA \bB^{-1} \bA\right)^{-1}$. It follows then that 
\begin{align*}
    \uw^\top(\bA+\bB)^{-1}\uw = \uw^\top\left(\bA^{-1}-\left(\bA+\bA \bB^{-1} \bA\right)^{-1}\right)\uw \leq \uw^\top\bA^{-1}\uw = \|\uw\|_{\bA^{-1}}. 
\end{align*}
Hence we can show that,
\begin{align}
    &\|\uw\|_{\left(\sum_{\uw \in \ucW_\ell}\lceil\tau^G_\ell \bb^G_{\ell}\rceil \uw\ \uw^{\top} + \bLambda_\ell\right)^{-1}} 
    \leq \uw^\top\left(\sum_{\uw \in \ucW_\ell}\lceil\tau^G_\ell \bb^G_{\ell}\rceil \uw\ \uw^{\top}\right)^{-1}\uw. \label{eq:matrix-inv}
\end{align}
From \Cref{lemma:equivalence} we know the support of $\bb^G_\ell$ is less than 
$$
\frac{8k\log(1+\frac{\tau^G_{\ell-1}}{\lambda})(8k\log(1+\frac{\tau^G_{\ell-1}}{\lambda})+1)}{2} \leq (8k\log(1+\frac{\tau^G_{\ell-1}}{\lambda}))^2.
$$
Also note that $\left\|\btheta^*\right\|_{\bLambda_\ell} \leq \sqrt{\lambda\left\|\btheta_{1: k}\right\|_2^2+\lambda^\ell_{\perp}\left\|\btheta_{k+1: p}\right\|_2^2} \leq \sqrt{\lambda} S+\sqrt{\lambda^\ell_{\perp}} S^\ell_{\perp}$.
Then we can show that
\begin{align*}
\left\langle \uw, \wtheta_\ell -\btheta^*\right\rangle & \leq\|\uw\|_{\left(\sum_{\uw \in \ucW_\ell}\lceil\tau^G_\ell \bb^G_{\ell}\rceil \uw\ \uw^{\top} + \bLambda\right)^{-1}} \left(2\sqrt{14 \log (2 / \delta_\ell)} + \sqrt{\lambda} S+\sqrt{\lambda^\ell_{\perp}} S^\ell_{\perp} \right) \\
& \leq \frac{1}{\sqrt{\tau^G_\ell}}\|\uw\|_{\left(\sum_{\uw \in \ucW_\ell} \bb_{\ell^*} \uw\ \uw^{\top} + \bLambda\right)^{-1}} \left(2\sqrt{14 \log (2 / \delta_\ell)} + \sqrt{\lambda} S+\sqrt{\lambda^\ell_{\perp}} S^\ell_{\perp}\right)\\
& \overset{(a)}{\leq} \sqrt{\frac{56 \times 8k\log(1+\frac{\tau^G_{\ell-1}}{\lambda}) \log (2 / \delta_\ell)}{\tau^G_\ell}} + \sqrt{\dfrac{28k\log(1+\frac{\tau^G_{\ell-1}}{\lambda})\log(2/\delta_\ell)}{\tau^G_\ell}}(\sqrt{\lambda} S + \sqrt{\lambda^\ell_{\perp}} S^{\ell}_{\perp})\\
&= \sqrt{\frac{8k\log(1+\frac{\tau^G_{\ell-1}}{\lambda})\log (2 / \delta_\ell)}{\tau^G_\ell}}\left(\sqrt{56  } + \underbrace{\sqrt{\lambda} S + \sqrt{\lambda^\ell_{\perp}} S^{\ell}_{\perp}}_{B^\ell_*}\right)\\
&\leq \sqrt{\frac{64 B^\ell_* k\log(1+\frac{\tau^G_{\ell-1}}{\lambda})\log (2 / \delta_\ell)}{\tau^G_\ell}}
\end{align*}
where, $(a)$ follows as $\|\uw\|_{\left(\sum_{\uw \in \ucW_\ell} \bb^G_{\ell,\uw} \uw\ \uw^{\top} + \bLambda_\ell\right)^{-1}} \leq (8k\log(1+\tau^G_{\ell-1}/\lambda))^2$.
%
%
Thus we have taken at most $\tau^G_\ell+\frac{8k\log(1+\frac{\tau^G_{\ell-1}}{\lambda})(8k\log(1+\frac{\tau^G_{\ell-1}}{\lambda})+1)}{2}$ pulls. Thus, for any $\delta \in(0,1)$ we have 
\begin{align*}
\Pb\bigg(\bigcup_{\uw \in \ucW_\ell}\bigg\{\left|\left\langle \uw, \wtheta_\ell-\btheta^*\right\rangle\right|&\geq \sqrt{\dfrac{ 64k B^\ell_*\log(1+\frac{\tau^G_{\ell-1}}{\lambda}) \log \left(2|\W| / \delta_\ell\right)}{\tau^G_\ell}} \bigg\}\bigg) \leq \delta_\ell.
\end{align*}
The claim of the lemma follows.
\end{proof}

\begin{discussion}\textbf{(Phase Length)}
\label{dis:phase-length}
It follows from \Cref{prop:conc} that if the gaps are known, one can set the phase length as 
\begin{align*}
     \tau^G_\ell = \dfrac{64 B^\ell_* \rho(\Y(\ucW_\ell))  \log \left(2|\W| / \delta\right)}{(\uw^\top(\wtheta_\ell-\btheta^*))^2} 
\end{align*}
since $8k\log(1+\frac{\tau^G_{\ell-1}}{\lambda})\leq \rho(\Y(\ucW_\ell))\coloneqq  8k\log(1+\frac{\tau^G_{\ell}}{\lambda})$ to guarantee that the event $\bigcup_{\uw \in \ucW_\ell}\bigg\{\left|\left\langle \uw, \wtheta_\ell-\btheta^*\right\rangle\right|$ holds with probability greater than $1-\delta$.

However, since in practice, the gaps are not known, for an agnostic algorithm that does not know the gaps, one can set a proxy for the gap as $\epsilon_\ell$ (for some $\epsilon_\ell > 0$) and get the phase length as follows:
\begin{align*}
     \tau^G_\ell = \dfrac{64 B^\ell_* \rho(\Y(\ucW_\ell)) \log \left(2|\W| / \delta\right)}{\epsilon_\ell^2}.
\end{align*}
This gives us the desired phase length so that the event $\bigcup_{\uw \in \ucW_\ell}\bigg\{\left|\left\langle \uw, \wtheta_\ell-\btheta^*\right\rangle\right|$ holds with probability greater than $1-\delta$.
%
\end{discussion}

\begin{lemma}
\label{lemma:elim}
Assume that $\max_{\uw \in \ucW}\left\langle \uw_*-\uw, \btheta^*\right\rangle \leq 2$. With probability at least $1-\delta$, we have $\uw_* \in \ucW_{\ell}$ and $\max _{\uw \in \ucW_{\ell}}\left\langle \uw_*- \uw, \btheta^*\right\rangle \leq 4 \epsilon_{\ell}$ for all $\ell \in \mathbb{N}$.
\end{lemma}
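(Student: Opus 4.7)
The plan is to proceed by induction on the phase index $\ell$, carrying as the inductive hypothesis both (i) $\uw_* \in \ucW_\ell$ and (ii) $\max_{\uw \in \ucW_\ell}\langle \uw_*-\uw,\btheta^*\rangle \leq 4\epsilon_\ell$, and to condition throughout on a single high-probability event $\mathcal{E}$ obtained by intersecting the per-phase concentration events of Proposition~\ref{prop:conc}. The base case $\ell=1$ is immediate: $\ucW_1=\ucW$ gives $\uw_*\in\ucW_1$, and the assumption $\max_{\uw}\langle \uw_*-\uw,\btheta^*\rangle\le 2 = 4\epsilon_1$ (since $\epsilon_\ell=2^{-\ell}$) supplies the gap bound.

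To define $\mathcal{E}$, for each $\ell$ apply Proposition~\ref{prop:conc} with confidence $\delta_\ell=\delta/\ell^2$. Combined with the $G$-optimal design guarantee $\max_{\uw,\uw'\in\ucW_\ell}\|\uw-\uw'\|_{\bV_\ell^{-1}}^2 \leq \rho^G(\Y(\ucW_\ell))/\tau^G_\ell$ and the algorithm's choice $\tau^G_\ell = \lceil 64 B^\ell_*\rho^G(\Y(\ucW_\ell))\log(4\ell^2|\W|/\delta_\ell)/\epsilon_\ell^2\rceil$, one obtains on an event $\mathcal{E}_\ell$ that $|\langle \uw-\uw',\wtheta_\ell-\btheta^*\rangle|\leq \epsilon_\ell/2$ simultaneously for every pair $\uw,\uw'\in\ucW_\ell$ (the pairwise union bound is absorbed by $\log|\W|$). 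Since $\sum_\ell \delta/\ell^2 \leq \delta$, a union bound yields $\Pr(\mathcal{E})\geq 1-\delta$ where $\mathcal{E}=\bigcap_\ell\mathcal{E}_\ell$.

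For the inductive step, assume the hypothesis at phase $\ell$ and work on $\mathcal{E}$. To show $\uw_*\in\ucW_{\ell+1}$: for every $\uw'\in\ucW_\ell$,
\[
\langle \uw'-\uw_*,\wtheta_\ell\rangle \leq \langle \uw'-\uw_*,\btheta^*\rangle+\tfrac{\epsilon_\ell}{2} \leq 0 + \tfrac{\epsilon_\ell}{2} < 2\epsilon_\ell,
\]
so the elimination rule in Step~14 of \Cref{alg:bandit-pure} never removes $\uw_*$. To propagate the gap bound to $\ucW_{\ell+1}$: for any surviving $\uw\in\ucW_{\ell+1}$, the elimination rule forces $\max_{\uw'\in\ucW_\ell}\langle \uw'-\uw,\wtheta_\ell\rangle\leq 2\epsilon_\ell$, and taking $\uw'=\uw_*$ (which is in $\ucW_\ell$) gives
\[
\langle \uw_*-\uw,\btheta^*\rangle \leq \langle \uw_*-\uw,\wtheta_\ell\rangle + \tfrac{\epsilon_\ell}{2} \leq 2\epsilon_\ell + \tfrac{\epsilon_\ell}{2} \leq 4\epsilon_{\ell+1},
\]
where the final inequality uses $4\epsilon_{\ell+1}=2\epsilon_\ell$ together with a tightening of the absolute constant inside $\tau^G_\ell$ so that the concentration radius is $\leq \epsilon_\ell/4$ rather than $\epsilon_\ell/2$; this refinement only affects constants and not the sample complexity order.

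The main obstacle is the per-phase concentration step: one must verify that, after the rotation in \eqref{eq:rotated-arm-set-phase}, the regularized estimator $\wtheta_\ell$ from \eqref{eq:least-sq-reg} admits a confidence radius that shrinks like $\epsilon_\ell$. This requires chaining Proposition~\ref{prop:conc}, the bounded support of the $G$-design (\Cref{lemma:equivalence}), and the log-determinant/effective-dimension bound (\Cref{lemma:valko-lemma-effective-dim-exp}), together with the bound $\|\btheta^*_{k+1:p}\|_2=O(d_1d_2 r/\tau^E_\ell)$ from \Cref{theorem:kang-low-rank} so that $\|\btheta^*\|_{\bLambda_\ell}$ is controlled. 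Once the confidence radius is controlled phase-by-phase, the induction above closes routinely.
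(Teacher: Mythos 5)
Your overall strategy matches the paper's: condition on the intersection over phases of the concentration events supplied by \Cref{prop:conc} with the phase length $\tau^G_\ell$, union bound with $\delta_\ell=\delta/\ell^2$, show $\uw_*$ is never eliminated, and propagate the gap bound $4\epsilon_\ell$ phase by phase. The survival argument for $\uw_*$ and the accounting of the high-probability event are essentially identical to the paper's.

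There is, however, a concrete gap in your inductive propagation step. You argue in the \emph{forward} direction: a surviving $\uw\in\ucW_{\ell+1}$ satisfies $\max_{\uw'\in\ucW_\ell}\langle \uw'-\uw,\wtheta_\ell\rangle\le 2\epsilon_\ell$, hence $\langle \uw_*-\uw,\btheta^*\rangle\le 2\epsilon_\ell+r$ where $r$ is the concentration radius. But $4\epsilon_{\ell+1}=2\epsilon_\ell$, so the target inequality $2\epsilon_\ell+r\le 4\epsilon_{\ell+1}$ is false for every $r>0$; your proposed remedy of shrinking the radius from $\epsilon_\ell/2$ to $\epsilon_\ell/4$ cannot repair this, since no positive radius makes $2\epsilon_\ell+r\le 2\epsilon_\ell$. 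The paper instead argues by contrapositive: if $\langle \uw_*-\uw,\btheta^*\rangle>2\epsilon_\ell$, then
\begin{align*}
\max_{\uw'\in\ucW_\ell}\langle \uw'-\uw,\wtheta_\ell\rangle \ \ge\ \langle \uw_*-\uw,\wtheta_\ell\rangle \ =\ \langle \uw_*-\uw,\wtheta_\ell-\btheta^*\rangle+\langle \uw_*-\uw,\btheta^*\rangle \ >\ -\epsilon_\ell+2\epsilon_\ell\ =\ \epsilon_\ell,
\end{align*}
so such an arm triggers the elimination rule, and therefore every survivor satisfies $\langle \uw_*-\uw,\btheta^*\rangle\le 2\epsilon_\ell=4\epsilon_{\ell+1}$ exactly, with no slack term to absorb. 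You should restructure the inductive step this way (and, as the paper's own display implicitly requires, align the elimination threshold with the concentration radius so that an empirical gap exceeding the threshold is indeed implied). With that change the rest of your argument closes as written.
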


\begin{proof}
For any $\V \subseteq \ucW_\ell$ be the active set and $\uw \in \V$ define
\begin{align}
\mathcal{E}_{\uw, \ell}(\V)=\left\{\left|\left\langle \uw-\uw_*, \wtheta_{\ell}(\V)-\btheta^*\right\rangle\right| \leq \epsilon_{\ell}\right\} \label{eq:event-arm-elim}
\end{align}
where it is implicit that $\wtheta_{\ell}:=\wtheta_{\ell}(\V)$ is the design constructed in the algorithm at stage $\ell$ with respect to $\ucW_{\ell}=\V$. Also note that $\delta_\ell = \frac{\delta}{4\ell^2}$. Given $\ucW_{\ell}$, with probability at least $1-2\cdot\frac{\delta}{4 \ell^2|\ucW|}$

\begin{align*}
\left|\left\langle \uw-\uw_*, \wtheta_{\ell}-\btheta^*\right\rangle\right| &\overset{(a)}{\leq}
\sqrt{64 B^\ell_* k\log(1+\frac{\tau^G_{\ell-1}}{\lambda}) \log \left(4 \ell^2|\ucW| / \delta\right)} \\
%
& \leq \frac{(k\log(1+\frac{\tau^G_{\ell-1}}{\lambda}))^2}{\sqrt{\tau^G_{\ell}}}  \sqrt{64 B^\ell_* \log \left(4 \ell^2|\ucW| / \delta\right)} \\
& \overset{(b)}{\leq} \sqrt{\frac{\left\|\uw-\uw_*\right\|_{\left(\sum_{\uw \in \V} \bb^G_{\ell, \uw}(\V) \uw\ \uw^{\top}+\bLambda_\ell\right)^{-1}}^2}{64 B^\ell_* \epsilon_{\ell}^{-2} \rho(\Y(\ucW_\ell)) \log \left(4 \ell^2|\ucW| / \delta\right)}} \sqrt{64 B^\ell_* \log \left(4 \ell^2|\ucW| / \delta\right)} \\
& \leq \sqrt{\frac{\left\|\uw-\uw_*\right\|_{\left(\sum_{\uw \in \V} \bb^G_{\ell, \uw}(\V) \uw\ \uw^{\top}+\bLambda_\ell\right)^{-1}}^2}{\epsilon_{\ell}^{-2} \rho(\Y(\ucW_\ell)) \log \left(4 \ell^2|\ucW| / \delta\right)}} \sqrt{ \log \left(4 \ell^2|\ucW| / \delta\right)} \\
& \overset{(c)}{=}\epsilon_{\ell}
\end{align*}
where, $(a)$ follows \Cref{prop:conc}. The $(b)$ follows as 
$$
(k\log(1+\frac{\tau^G_{\ell-1}}{\lambda}))^2 \leq (k\log(1+\frac{\tau^G_{\ell}}{\lambda}))^2 \coloneqq  \rho(\Y(\ucW_\ell)) = \left\|\uw-\uw_*\right\|_{\left(\sum_{\uw \in \V} \bb^G_{\ell, \uw}(\V) \uw\ \uw^{\top}+\bLambda_\ell\right)^{-1}}^2. 
$$
The $(c)$ follows as $\rho(\Y(\ucW_\ell)) = \left\|\uw-\uw_*\right\|_{\left(\sum_{\uw \in \V} \bb^G_{\ell, \uw}(\V) \uw\ \uw^{\top}+\bLambda_\ell\right)^{-1}}^2$.
%
%
%
By exactly the same sequence of steps as above, we have 
$$
\Pb\left(\bigcap_{\ell=1}^{\infty} \bigcap_{\uw \in \ucW_{\ell}}\left\{\left|\left\langle \uw-\uw_*, \wtheta_t-\btheta^*\right\rangle\right|>\epsilon_t\right\}\right)=\Pb\left(\bigcap_{\uw \in \ucW_\ell} \bigcap_{\ell=1}^{\infty} \mathcal{E}_{\uw, \ell}\left(\ucW_{\ell}\right)\right) \geq 1-\delta,
$$
so assume these events hold. Consequently, for any $\uw^{\prime} \in \ucW_{\ell}$
\begin{align*}
\left\langle \uw^{\prime}-\uw_*, \wtheta_{\ell}\right\rangle & =\left\langle \uw^{\prime}-\uw_*, \wtheta_{\ell}-\btheta^*\right\rangle+\left\langle \uw^{\prime}-\uw_*, \btheta^*\right\rangle \\
& \leq\left\langle \uw^{\prime}-\uw_*, \wtheta_{\ell}-\btheta^*\right\rangle \\
& \leq \epsilon_{\ell}
\end{align*}
so that $\uw_*$ would survive to round $\ell+1$. And for any $\uw \in \ucW_{\ell}$ such that $\left\langle \uw_*-\uw, \btheta^*\right\rangle>2 \epsilon_{\ell}$ we have
\begin{align*}
\max _{\uw^{\prime} \in \ucW_{\ell}}\left\langle \uw^{\prime}-\uw, \wtheta_{\ell}\right\rangle & \geq\left\langle \uw_*-\uw, \wtheta_{\ell}\right\rangle \\
& =\left\langle \uw_*-\uw, \wtheta_{\ell}-\btheta^*\right\rangle+\left\langle \uw_*-\uw, \btheta^*\right\rangle \\
& >-\epsilon_{\ell}+2 \epsilon_{\ell} \\
& =\epsilon_{\ell}
\end{align*}
which implies this $\uw$ would be eliminated. Note that this implies that $\max _{\uw \in \ucW_{\ell+1}}\left\langle \uw_*-\uw, \btheta^*\right\rangle \leq$ $2 \epsilon_{\ell}=4 \epsilon_{\ell+1}$. Hence, the claim of the lemma follows.
\end{proof}

\subsection{Final Sample Complexity Bound for Single Task Setting}

\begin{customtheorem}{1}\textbf{(Restatement)}
\label{app:theorem-1}
    With probability at least $1 - \delta$, \gob\ returns the best arms $\bx_*$, $\bz_*$, and the number of samples used is bounded by
\begin{align*}
    \widetilde{O}\left(\dfrac{(d_1+d_2)r}{\Delta^2} + \dfrac{\sqrt{d_1 d_2 r}}{S_r}\right)
\end{align*}
where, $\Delta = \min_{\bx\in\X\setminus\{\bx_*\},\bz\in\Z\setminus\{\bz_*\} }(\bx_*^\top\bTheta_*\bz_* - \bx^\top\bTheta_*\bz)$, and $S_r$ is the $r$-th largest singular value of $\bTheta_*$.
\end{customtheorem}

\begin{proof}
For the rest of the proof we have that the good events $\F_\ell \bigcap \mathcal{E}_{\uw, \ell}(\ucW_\ell)$ holds true for each phase $\ell$ with probability greater than $(1-\delta)$. The two events are defined in \eqref{eq:good-event-Theta} and \eqref{eq:event-arm-elim}.

\textbf{Second Stage:} Define $\underline{\A}_{\ell}=\left\{\uw \in \ucW_\ell:\left\langle \uw_*-\uw, \btheta^*\right\rangle \leq 4 \epsilon_{\ell}\right\}$. Note that by assumption $\ucW=\ucW_1=\underline{S}_1$. The above lemma implies that with probability at least $1-\delta$ we have $\bigcap_{\ell=1}^{\infty}\left\{\ucW_{\ell} \subseteq \underline{\A}_{\ell}\right\}$. This implies that
\begin{align*}
\rho^G\left(\ucW_{\ell}\right) & =\min _{\bb \in \Delta_{\ucW}} \max _{\uw, \uw^{\prime} \in \ucW_{\ell}}\left\|\uw-\uw^{\prime}\right\|_{\left(\sum_{\uw \in \ucW} \bb_{\uw} \uw\ \uw^{\top} +\bLambda \right)^{-1}}^2 \\
& \leq \min _{\bb \in \Delta \ucW} \max _{\uw, \uw^{\prime} \in \underline{\A}_{\ell}}\left\|\uw-\uw^{\prime}\right\|_{\left(\sum_{\uw \in \ucW} \bb_{\uw} \uw\ \uw^{\top} + \bLambda\right)^{-1}}^2 \\
& =\rho^G\left(\underline{\A}_{\ell}\right).
\end{align*}
Define $k^{\ell}_1 = 8k\log(1+\tau^G_{\ell-1}/\lambda)$.
For $\ell \geq\left\lceil\log _2\left(4 \Delta^{-1}\right)\right\rceil$ we have that $\underline{\A}_{\ell}=\left\{\uw_*\right\}$, thus, the sample complexity to identify $\uw_*$ is equal to
\begin{align*}
\sum_{\ell=1}^{\left\lceil\log _2\left(4 \Delta^{-1}\right)\right\rceil} &\sum_{\uw \in \ucW}\left\lceil\tau^G_{\ell} \wb^G_{\ell, \uw}\right\rceil  =\sum_{\ell=1}^{\left\lceil\log _2\left(4 \Delta^{-1}\right)\right\rceil}\left(\frac{(k^\ell_1+1) k^\ell_1}{2}+\tau^G_{\ell}\right) \\
& =\sum_{\ell=1}^{\left\lceil\log _2\left(4 \Delta^{-1}\right)\right\rceil}\left(\frac{(k^\ell_1+1) k^\ell_1}{2}+2 \epsilon_{\ell}^{-2} \rho^G(\ucW_\ell) B^\ell_*\log \left(4 k^\ell_1 \ell^2|\ucW| / \delta\right)\right) \\
& \overset{(a)}{\leq} 2\sum_{\ell=1}^{\left\lceil\log _2\left(4 \Delta^{-1}\right)\right\rceil}\left(\frac{(k+1) k}{2}\log^2(1+\tau^G_{\ell-1}) + 2 \epsilon_{\ell}^{-2} \rho^G(\ucW_\ell) B^\ell_*\log \left(4 k^\ell_1 \ell^2|\ucW| / \delta\right)\right) \\
& \overset{(b)}{\leq} 2\frac{(k+1) k}{2}\sum_{\ell=1}^{\left\lceil\log _2\left(4 \Delta^{-1}\right)\right\rceil}\left(\log^2(1+\tau^G_{\ell-1}) + 8 \epsilon_{\ell}^{-2} \rho^G(\ucW_\ell) B^\ell_*\log \left(4 k^\ell_1 \ell^2|\ucW| / \delta\right)\right) \\
& \overset{(c)}{\leq}(k+1) k\sum_{\ell=1}^{\left\lceil\log _2\left(4 \Delta^{-1}\right)\right\rceil}\left(1 + 16 \epsilon_{\ell}^{-2} \rho^G(\ucW_\ell) B^\ell_*\log^2 \left(4 k^\ell_1 \ell^2|\ucW| / \delta\right)\right) \\
& \overset{(d)}{\leq} (k+1) k\left\lceil\log _2\left(4 \Delta^{-1}\right)\right\rceil+\sum_{\ell=1}^{\left\lceil\log _2\left(4 \Delta^{-1}\right)\right\rceil} 32 \epsilon_{\ell}^{-2} f\left(\underline{\A}_{\ell}\right) B^\ell_*\log \left(4 k \ell^2|\ucW| / \delta\right) \\
&\overset{(e)}{\leq} (k+1) k\left\lceil\log _2\left(4 \Delta^{-1}\right)\right\rceil+\sum_{\ell=1}^{\left\lceil\log _2\left(4 \Delta^{-1}\right)\right\rceil} 32 \epsilon_{\ell}^{-2} f\left(\underline{\A}_{\ell}\right) (64\lambda S^2 + 64\tau^G_{\ell-1})\log \left(4 k \ell^2|\ucW| / \delta\right) \\
&\overset{}{=} (k+1) k\left\lceil\log _2\left(4 \Delta^{-1}\right)\right\rceil+\sum_{\ell=1}^{\left\lceil\log _2\left(4 \Delta^{-1}\right)\right\rceil} 32 \epsilon_{\ell}^{-2} f\left(\underline{\A}_{\ell}\right)(64\lambda S^2) \log \left(4 k \ell^2|\ucW| / \delta\right)  \\
&\quad + (k+1) k\left\lceil\log _2\left(4 \Delta^{-1}\right)\right\rceil+\sum_{\ell=1}^{\left\lceil\log _2\left(4 \Delta^{-1}\right)\right\rceil} 32 \epsilon_{\ell}^{-2} f\left(\underline{\A}_{\ell}\right)(64\tau^G_{\ell-1})\log \left(4 k \ell^2|\ucW| / \delta\right) \\
&\overset{(f)}{\leq} (k+1) k\left\lceil\log _2\left(4 \Delta^{-1}\right)\right\rceil+\sum_{\ell=1}^{\left\lceil\log _2\left(4 \Delta^{-1}\right)\right\rceil} 64 \epsilon_{\ell}^{-2} f\left(\underline{\A}_{\ell}\right) (64\lambda S^2)\log \left(4 k \ell^2|\ucW| / \delta\right) \\
& \leq (k+1) k\left\lceil\log _2\left(4 \Delta^{-1}\right)\right\rceil + 2048 \lambda S^2\log \left(\frac{4k \log _2^2\left(8 \Delta^{-1}\right)|\ucW|}{\delta}\right) \sum_{\ell=1}^{\left\lceil\log _2\left(4 \Delta^{-1}\right)\right\rceil} 2^{2 \ell} f\left(\underline{\A}_{\ell}\right) 
\end{align*}
where, $(a)$ follows as $\log^2(1+\tau^G_{\ell-1}/\lambda) \leq \log^2(1+\tau^G_{\ell-1})$, $(b)$ follows by noting that $\log(x \log(1 + x)) \leq 2\log(x)$ for any $x > 1$. The $(c)$ follows by subsuming the $\log^2(1+\tau^G_{\ell-1})$ into $2\tau^G_\ell$. The $(d)$ follows as $\log(1+\tau^G_{\ell-1}) < \tau^G_\ell$ which enables us to replace the $k^\ell_1$ inside the $\log$ with an additional factor of $2$. 
The $(e)$ follows by noting that 
\begin{align}
    B^\ell_* &\leq 64(\sqrt{\lambda} S + \sqrt{\lambda^\perp_\ell} S^{\perp}_\ell)\nonumber\\
    &\leq 64\lambda S^2 + \left(\dfrac{64\tau^G_{\ell-1}}{8(d_1+d_2)r\log(1+\frac{\tau^G_{\ell-1}}{\lambda})}\right)\cdot\left(\frac{8d_1 d_2 r}{\tau^E_{\ell} S^2_{r}} \log \left(\frac{d_1+d_2}{\delta_\ell}\right)\right)\nonumber\\
    &\overset{(a_1)}{\leq} 64\lambda S^2 + 64\tau^G_{\ell-1}. \label{eq:upper-bound_B_star}
\end{align}
where, $(a_1)$ follows by first substituting the value of $\tau^E_{\ell} \coloneqq \frac{\sqrt{8d_1 d_2 r\log (4\ell^2|\W| / \delta_\ell)}}{S_{r}}$ and noting that $\sqrt{(d_1d_2r)}\leq (d_1 + d_2)r$ and cancelling out the other terms. Finally the $(f)$ follows by subsuming the $\tau^G_{\ell-1}$ with a factor of $2$ into the quantity of $\tau^G_{\ell}$. 
Then it follows that

\begin{align*}
& \rho^{G}_* =\inf _{\bb \in \triangle_{\ucW}} \max _{\uw \in \ucW} \frac{\left\|\uw-\uw_*\right\|_{\left(\sum_{\uw \in \ucW} \bb_{\uw} \uw\ \uw^{\top} + \bLambda\right)^{-1}}^2}{\left(\left\langle \uw-\uw_*, \btheta^*\right\rangle\right)^2} \\
& =\inf _{\bb \in \triangle_{\ucW}} \max _{\ell \leq\left\lceil\log _2\left(4 \Delta^{-1}\right)\right\rceil} \max _{\uw \in \underline{\A}_{\ell}} \frac{\left\|\uw-\uw_*\right\|_{\left(\sum_{\uw \in \ucW} \bb_{\uw} \uw\ \uw^{\top}+ \bLambda\right)^{-1}}^2}{\left(\left\langle \uw-\uw_*, \btheta^*\right\rangle\right)^2} \\
& \geq \frac{1}{\left\lceil\log _2\left(4 \Delta^{-1}\right)\right\rceil} \inf _{\bb \in \triangle_{\ucW}} \sum_{\ell=1}^{\left\lceil\log _2\left(4 \Delta^{-1}\right)\right\rceil} \max _{\uw \in \underline{\A}_{\ell}} \frac{\left\|\uw-\uw_*\right\|_{\left(\sum_{\uw \in \ucW} \bb_{\uw} \uw\ \uw^{\top}+ \bLambda\right)^{-1}}^2}{\left(\left\langle \uw-\uw_*, \btheta^*\right\rangle\right)^2} \\
& \geq \frac{1}{16\left\lceil\log _2\left(4 \Delta^{-1}\right)\right\rceil} \sum_{\ell=1}^{\left\lceil\log _2\left(4 \Delta^{-1}\right)\right\rceil} 2^{2 \ell} \inf _{\bb \in \triangle_{\ucW}} \max _{\uw \in \underline{\A}_{\ell}}\left\|\uw-\uw_*\right\|_{\left(\sum_{\uw \in \ucW} \bb_{\uw} \uw\ \uw^{\top}+ \bLambda\right)^{-1}}^2 \\
& \geq \frac{1}{64\left\lceil\log _2\left(4 \Delta^{-1}\right)\right\rceil} \sum_{\ell=1}^{\left\lceil\log _2\left(4 \Delta^{-1}\right)\right\rceil} 2^{2 \ell} \inf _{\bb \in \triangle_{\ucW}} \max _{\uw, \uw^{\prime} \in \underline{\A}_{\ell}}\left\|\uw-\uw^{\prime}\right\|_{\left(\sum_{\uw \in \ucW} \bb_{\uw} \uw\ \uw^{\top}+ \bLambda\right)^{-1}}^2 \\
& \geq \frac{1}{64\left\lceil\log _2\left(4 \Delta^{-1}\right)\right\rceil} \sum_{\ell=1}^{\left\lceil\log _2\left(4 \Delta^{-1}\right)\right\rceil} 2^{2 \ell} f\left(\underline{\A}_{\ell}\right).
\end{align*}
This implies that 
\begin{align*}
    \sum_{\ell=1}^{\left\lceil\log _2\left(4 \Delta^{-1}\right)\right\rceil} 2^{2 \ell} f\left(\underline{\A}_{\ell}\right) \leq \rho^{G}_* 64\left\lceil\log _2\left(4 \Delta^{-1}\right)\right\rceil
\end{align*}
Plugging this back we get
\begin{align*}
    \sum_{\ell=1}^{\left\lceil\log _2\left(4 \Delta^{-1}\right)\right\rceil} \sum_{\uw \in \ucW}\left\lceil\tau^G_{\ell} \widehat{\bb}_{\ell, \uw}\right\rceil 
    &\leq (k+1) k\left\lceil\log _2\left(4 \Delta^{-1}\right)\right\rceil 
    + 2048 \lambda S^2\log \left(\frac{8k \log _2^2\left(8 \Delta^{-1}\right)|\ucW|}{\delta}\right) \rho^{G}_*64\left\lceil\log _2\left(4 \Delta^{-1}\right)\right\rceil\\
    &\leq (k+1) k\left\lceil\log _2\left(4 \Delta^{-1}\right)\right\rceil 
    + C_2 \lambda S^2\log \left(\frac{8 k \log _2^2\left(8 \Delta^{-1}\right)|\ucW|}{\delta}\right) \rho^{G}_*\left\lceil\log _2\left(4 \Delta^{-1}\right)\right\rceil
\end{align*}
for some constant $C_2 > 0$.
Now to understand the bound we need the following:
Let $\operatorname{conv}(\ucW \cup-\ucW)$ denote the convex hull of $\ucW \cup-\ucW$, and for any set $\Y \subset \mathbb{R}^p$ define the gauge of $\Y$
\begin{align}
\gamma_{\Y}=\max \{c>0: c \Y \subset \operatorname{conv}(\ucW \cup-\ucW)\} \label{eq:gauge-norm}
\end{align}
In the case where $\Y$ is a singleton $\Y=\{y\}, \gamma(y):=\gamma_{\Y}$ is the gauge norm of $\by$ with respect to $\operatorname{conv}(\ucW \cup-\ucW)$.
%
%
We can provide a natural upper bound for $\rho(\Y)$ in terms of the gauge.
Observe that
\begin{align*}
    \rho^G_* &= \inf _{\bb \in \triangle_{\ucW}} \max_{\by\in\Y}\left\|\by\right\|_{\left(\sum_{\uw \in \ucW} \bb_{\uw} \uw\ \uw^{\top}+ \bLambda\right)^{-1}}^2\\
    &= \frac{1}{\gamma_{\Y}^2}\inf _{\bb \in \triangle_{\ucW}} \max_{\by\in\Y}\left\|\by\gamma_{\Y}\right\|_{\left(\sum_{\uw \in \ucW} \bb_{\uw} \uw\ \uw^{\top}+ \bLambda\right)^{-1}}^2\\
    &\leq \frac{1}{\gamma_{\Y}^2}\inf _{\bb \in \triangle_{\ucW}} \max_{\uw\in\operatorname{conv}(\ucW\bigcup-\ucW)}\left\|\uw\right\|_{\left(\sum_{\uw \in \ucW} \bb_{\uw} \uw\ \uw^{\top}+ \bLambda\right)^{-1}}^2\\
    &\overset{(a)}{=} \frac{1}{\gamma_{\Y}^2}\inf _{\bb \in \triangle_{\ucW}} \max_{\uw\in\ucW}\left\|\uw\right\|_{\left(\sum_{\uw \in \ucW} \bb_{\uw} \uw\ \uw^{\top}+ \bLambda\right)^{-1}}^2 \overset{(b)}{\leq} k^{3/2}
    \\
    &\leq \frac{k}{\gamma_{\Y}^2} O\left(B_*\log \left(\frac{k\log_2\left( \Delta^{-1}\right)|\ucW|}{\delta}\right) \left\lceil\log _2\left(\Delta^{-1}\right)\right\rceil\right)
\end{align*}
The $(a)$ follows from the fact that the maximum value of a convex function on a convex set must occur at a vertex. The $(b)$ follows from Kiefer-wolfowitz theorem for $\uw\in\R^p$ such that $\inf _{\bb \in \triangle_{\ucW}} \max_{\uw\in\ucW}\left\|\uw\right\|_{\left(\sum_{\uw \in \ucW} \bb_{\uw} \uw\ \uw^{\top}+ \bLambda\right)^{-1}}^2 \leq k\log(1+\frac{\tau^G_{\ell-1}}{\lambda})$. 
The simplified sample complexity for the second stage is given by
\begin{align*}
    N_2\leq O\left(\dfrac{k}{\Delta^2}\log \left(\frac{k\log_2\left( \Delta^{-1}\right)|\ucW|}{\delta}\right)\right) = \widetilde{O}\left(\dfrac{(d_1+d_2)r}{\Delta^2}\right)
\end{align*}
where $\Delta = \min_{\uw\in\ucW} (\uw_* - \uw)^\top\btheta_*  \overset{(a_1)}{=} \min_{\bx\in\X\setminus\{\bx_*\},\bz\in\Z\setminus\{\bz_*\} }(\bx_*^\top\bTheta_*\bz_* - \bx^\top\bTheta_*\bz)$. The $(a_1)$ follows by reshaping the arms in $\ucW$ to recover the arms in $\X$ and $\Z$. 


\textbf{1st Stage:} First recall that the $E$-optimal design in step $3$ of \Cref{alg:bandit-pure} satisfies the \Cref{assm:distribution} as the sample distribution $\D$ has finite second order moments.
%
%
%
For the first stage first observe that by plugging in the definition of $\tau^E_\ell = \frac{\sqrt{8d_1 d_2 r\log (4\ell^2|\W| / \delta)}}{S_{r}}$ we get
\begin{align*}
\left\|\btheta_{k+1: p}^*\right\|_2^2 & =\sum_{i>r \wedge j>r} H_{i j}^2=\left\|(\widehat{\mathbf{U}}_\ell^{\perp})^{\top}\left(\mathbf{U}^* \mathbf{S}^* \bV^{* \top}\right) \widehat{\bV}_\ell^{\perp}\right\|_F^2 \\
& \leq\left\|(\widehat{\mathbf{U}}_\ell^{\perp})^{\top} \mathbf{U}^*\right\|_F^2\left\|\mathbf{S}^*\right\|_2^2\left\|(\widehat{\bV}_\ell^{\perp})^{\top} \bV^*\right\|_F^2  \leq O\left(\frac{d_1 d_2 r}{\tau^E_\ell S^2_r } \log \left(\frac{d_1+d_2}{\delta}\right)\right)\\
&= O\left(\dfrac{\sqrt{d_1 d_2 r}}{S_{r}} \log \left(\frac{d_1+d_2}{\delta}\right)\right)
\end{align*}
which implies $\left\|\btheta_{k+1: p}^*\right\|_2=\tO\left(\sqrt{d_1d_2 r}/S_r\right)$.
Now we bound the sample complexity from the first stage. 
From the first stage we can show that we have for the arm set $\ocW$
\begin{align*}
    N_1 &= \sum_{\ell=1}^{\left\lceil\log _2\left(4 \Delta^{-1}\right)\right\rceil} \sum_{\ow \in \ocW}\left\lceil\tau^E_{\ell} \wb^E_{\ell, \ow}\right\rceil\\
    & =\sum_{\ell=1}^{\left\lceil\log _2\left(4 \Delta^{-1}\right)\right\rceil}\left(\frac{(p+1) p}{2}+\tau^E_{\ell}\right) \\
& =\sum_{\ell=1}^{\left\lceil\log _2\left(4 \Delta^{-1}\right)\right\rceil}\left(\frac{(p+1) p}{2}+ \dfrac{\sqrt{8d_1 d_2 r\log \left(4  \ell^2|\W| / \delta\right)}}{S_r}\right) \\
& \leq (p+1) p\left\lceil\log _2\left(4 \Delta^{-1}\right)\right\rceil + 32 \dfrac{\sqrt{d_1 d_2 r}}{S_r}\log \left(\frac{4 \log _2^2\left(8 \Delta^{-1}\right)|\W|}{\delta}\right) \left\lceil\log _2\left(4 \Delta^{-1}\right)\right\rceil \\
    &\overset{(a)}{=} O\left(\dfrac{\sqrt{d_1 d_2 r}}{S_r}\log \left(\frac{4 \log _2^2\left(8 \Delta^{-1}\right)|\W|}{\delta}\right)\right) = \widetilde{O}\left(\dfrac{\sqrt{d_1 d_2 r}}{S_r}\right)
\end{align*}
where, $(a)$ follows as $p = d_1d_2$.
Combining $N_1$ and $N_2$ gives the claim of the theorem.
\end{proof}

\subsection{Multi-Task Pure Exploration Proofs}
\label{app:multi-bi}
\begin{remark}\textbf{(Comparison with \citet{du2023multi})}
\label{remark:comparison}
    In this remark, we discuss a key comparison of DouExpDes \citep{du2023multi} with \gob. Note that DouExpDes does not implement the second stage of finding the $\bS_{m,*}\in\R^{k_1\times k_2}$ for each of the $m$ bilinear bandits. Hence, DouExpDes does not rotate the arms so that the last $\left(k_1-r\right) \cdot\left(k_2-r\right)$ components are from the complementary subspaces of the left and right eigenvectors of $\bS_{m,*}$. This results in DouExpDes suffering a sample complexity of $\tO(k_1k_2/\Delta^2)$ even though it learns the common feature extractors shared across the tasks. In contrast \gob\ uses the second stage to learn $\bS_{m,*}\in\R^{k_1\times k_2}$ and reduces the latent bilinear bandits problem of $k_1k_2$ dimension to $(k_1+k2)r$ dimension by rotating the arms so that the last $\left(k_1-r\right) \cdot\left(k_2-r\right)$ components are from the complementary subspaces of the left and right eigenvectors of $\bS_{m,*}$. Hence, \gob\ suffers a sample complexity of $\tO((k_1 + k_2)r/\Delta^2)$.
\end{remark} 

\begin{remark}\textbf{(Arm set)} 
\label{remark:arm-set}
The observable left and right arm sets $\X$ and $\Z$ are common across the $M$ tasks. This leads to each task estimating the same E-optimal design in line $3$ of \Cref{alg:bandit-pure-multi} of stage $1$. Note that \citet{du2023multi} also uses a similar idea of the same arm set $\X$ shared across tasks in the linear bandit setting.
Observe that if each task has access to its own separate arm sets $\X_m$ and $\Z_m$, then each of the $m$-tasks has to estimate a separate E-optimal design for the stage $1$. This will lead to the sample complexity of the first stage scaling as $\tO(M\sqrt{d_1d_2}/S_r)$ instead of  $\tO(\sqrt{d_1d_2 r}/S_r)$.
\end{remark}

\textbf{Good Event:} We first recall the total stage $1$ length as
\begin{align*}
    \tau^E_{\ell} \coloneqq 
    \frac{\sqrt{8d_1 d_2 r\log (4\ell^2|\W| / \delta_\ell)}}{ S_{r}}.
\end{align*}
Then define 
the good event $\F_\ell$ in phase $\ell$ that \gob\ has a good estimate of $\bZ_* = \frac{1}{M}\sum_{m=1}^M\bTheta_m$ as follows: For any phase $\ell >0$
\begin{align}
\F_\ell \coloneqq\left\{ \left\|\wZ_{\ell} - \mu^*\bZ_*\right\|_F^2 \leq \frac{C_1 d_1 d_2 r \log \left(\frac{2\left(d_1+d_2\right)}{\delta_\ell}\right)}{\tau^E_\ell}\right\}, \label{eq:good-event-Theta-multi}
\end{align}
where, $C_1=36\left(4+S_0^2\right) C$, $\|\bX\|_F,\left\|\bTheta_*\right\|_F \leq S_0$, some nonzero constant $\mu^*$, $\E\left[\left(S^{\bp}(\bX)\right)_{i j}^2\right] \leq C, \forall i, j$, 
and $\wTheta_{\ell}$ is the estimate from \eqref{eq:convex-prog-multi}.
%
Then define the event 
\begin{align}
    \F \coloneqq \bigcap_{\ell=1}^\infty \F_\ell \label{eq:good-event-Theta-multi-F}
\end{align}


Then we start by modifying \Cref{theorem:kang-low-rank} for the multi-task setting. We first prove this support lemma for the loss function defined in  \eqref{eq:convex-prog-multi}.

\begin{lemma}
\label{lemma:B4}
Let $L: \mathbb{R}^{d_1 \times d_2} \rightarrow \mathbb{R}$ is the loss function defined in  \eqref{eq:convex-prog-multi}. Then by setting
\begin{align*}
t &=\sqrt{2 d_1 d_2 C\left(4 +S_0^2\right) \log \left(\frac{2\left(d_1+d_2\right)}{\delta_\ell}\right)}, \\
\nu &=\frac{t}{\left(4 +S_0\right) C d_1 d_2 \sqrt{M \tau^E_\ell}}=\sqrt{\frac{2 \log \left(\frac{2\left(d_1+d_2\right)}{\delta_\ell}\right)}{M\tau^E_\ell d_1 d_2 C\left(4 +S_0^2\right)}},
\end{align*}
we have with probability at least $1-\delta_\ell$, it holds that
\begin{align*}
\Pb\left(\left\|\nabla L\left(\mu^* \bZ_*\right)\right\|_{o p} \geq \frac{2 t}{\sqrt{M \tau^E_\ell}}\right) \leq \delta_\ell,
\end{align*}
where $\mu^*=\dfrac{2}{M}\E\left[\left\langle \bX_m, \bZ_*\right\rangle\right] > 0$, and $\bX_{m} = \bx_{m}\bz^\top_{m}$.
\end{lemma}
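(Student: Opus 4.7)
The statement is essentially the multi-task analogue of the gradient-norm bound in the proof of Theorem 4.1 of \citet{kang2022efficient} (restated here as \Cref{theorem:kang-low-rank}), with the number of samples $n_1$ replaced by the total count $n = M\tau^E_\ell$. I would follow that template: compute the gradient explicitly, identify its expectation using Stein's lemma, bound its second moment, and then invoke the Minsker matrix concentration inequality. Starting from \eqref{eq:convex-prog-multi},
\begin{align*}
\nabla L_\ell(\mu^*\bZ_*) = 2\mu^*\bZ_* - \frac{2}{M\tau^E_\ell}\sum_{m=1}^M\sum_{s=1}^{\tau^E_\ell} \widetilde{\psi}_\nu\bigl(r_{m,s}\cdot Q(\bx_{m,s}\bz_{m,s}^\top)\bigr),
\end{align*}
so the task reduces to showing that the empirical average of $\bY_{m,s}:=\widetilde{\psi}_\nu(r_{m,s}\cdot Q(\bx_{m,s}\bz_{m,s}^\top))$ concentrates around $\mu^*\bZ_*$ in operator norm.

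First, I would use \Cref{lemma:kang-b2} (generalized Stein's lemma) with the function $f(\bX) = \langle \bX, \bTheta_{m,*}\rangle$ to get, for each task $m$, $\E[r_{m,s}\cdot Q(\bx_{m,s}\bz_{m,s}^\top)] = \bTheta_{m,*}$, using the fact that $\eta_{m,s}$ is mean zero and independent of $\bX_{m,s}$. Averaging over tasks yields $\tfrac{1}{M}\sum_m \E[r_{m,s}\cdot Q(\bX_{m,s})] = \bZ_*$, and the $\widetilde{\psi}_\nu$ truncation multiplies this by the universal scaling factor $\mu^*$ (with a residual bias of order $\nu\cdot\sigma_n^2/\sqrt{M\tau^E_\ell}$ that is dominated by the concentration term). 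The constant $\mu^*=\tfrac{2}{M}\E[\langle \bX_m,\bZ_*\rangle]$ drops out of this truncation computation exactly as in the single-task case.

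Second, I would bound the matrix second-moment parameter $\sigma_n^2$ required by \Cref{lemma:kang-b3}. Using $\|\bX\|_F,\|\bTheta_{m,*}\|_F\leq S_0$ (\Cref{assm:norm}), we have $\E[r_{m,s}^2]\leq S_0^4 + 1$ and is absorbed into the constant $4 + S_0^2$; combined with $\E[Q_{ij}^2]\leq C$ from \Cref{assm:distribution}, one obtains
\begin{align*}
\sigma_n^2 \leq M\tau^E_\ell\,(4+S_0^2)\,C\,d_1 d_2.
\end{align*}
Applying \Cref{lemma:kang-b3} to the centered sum gives
\begin{align*}
\Pb\!\left(\Bigl\|\sum_{m,s}\widetilde{\psi}_\nu(\bY_{m,s}) - \sum_{m,s}\E[\bY_{m,s}]\Bigr\|_{op} \geq t\sqrt{M\tau^E_\ell}\right) \leq 2(d_1+d_2)\exp\!\Bigl(-\nu t\sqrt{M\tau^E_\ell} + \tfrac{\nu^2\sigma_n^2}{2}\Bigr).
\end{align*}
Choosing $\nu = t/\bigl((4+S_0^2)Cd_1d_2\sqrt{M\tau^E_\ell}\bigr)$ balances the two exponent terms and makes the right-hand side equal to $2(d_1+d_2)\exp\bigl(-t^2/(2(4+S_0^2)Cd_1d_2)\bigr)$; setting $t=\sqrt{2d_1 d_2 C(4+S_0^2)\log(2(d_1+d_2)/\delta_\ell)}$ makes this at most $\delta_\ell$. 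Dividing by $M\tau^E_\ell$ and absorbing the Stein bias into the constant yields $\|\nabla L_\ell(\mu^*\bZ_*)\|_{op}\leq 2t/\sqrt{M\tau^E_\ell}$, as claimed.

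\textbf{Main obstacle.} The delicate part is tracking the universal scaling $\mu^*$ through the truncation $\widetilde{\psi}_\nu$ and verifying that the Stein identity still holds up to a bias of smaller order than $t/\sqrt{M\tau^E_\ell}$; everything else is bookkeeping. The averaging across $M$ tasks is benign because each $(m,s)$ pair contributes an independent term to the matrix sum, so the Minsker bound applies verbatim with $n=M\tau^E_\ell$ in place of $n_1$.
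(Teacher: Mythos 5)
Your proposal follows essentially the same route as the paper's proof: compute the gradient of the loss at $\mu^*\bZ_*$, identify its expectation via the generalized Stein's lemma (\Cref{lemma:kang-b2}) together with the zero-mean, independent noise, bound the matrix second-moment parameter by $(4+S_0^2)Cd_1d_2$ per summand using \Cref{assm:norm} and \Cref{assm:distribution}, and then invoke Minsker's matrix concentration inequality (\Cref{lemma:kang-b3}) with the stated choices of $t$ and $\nu$ over the $M\tau^E_\ell$ independent terms. The only differences are bookkeeping (your $\sigma_n^2$ carries the factor $M\tau^E_\ell$ explicitly, which is in fact the version consistent with the lemma's stated $\nu$), so the argument is correct and matches the paper.
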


\begin{proof}
Let $\bZ_* = \frac{1}{M}\sum_{m=1}^M\bTheta_{m,*}$. Let $\bX_{m,i} = \bx_{m,i}\bz^\top_{m,i}$ for $i\in[\tau^E_{\ell}]$. 
Based on the definition of our loss function $L(\cdot)$ in \eqref{eq:convex-prog-multi}, we have that
\begin{align*}
\nabla_{x_m} L\left(\bZ_*\right) & =\mu^*\bZ_*-\frac{2}{M\tau^E_{\ell}} \sum_{m=1}^M\sum_{i=1}^{\tau^E_{\ell}} \widetilde{\psi}_\nu(r_{m,i} \cdot Q(x_m)) \\
& =\frac{2}{M} \E\left[\left\langle \bX_{m,1}, \bZ_*\right\rangle\right] \bZ_*-\frac{2}{M\tau^E_{\ell}} \sum_{m=1}^M\sum_{i=1}^{\tau^E_{\ell}} \widetilde{\psi}_\nu\left(r_{m,i} \cdot Q\left(\bX_{m,i}\right)\right) \\
& \overset{(a)}{=} \frac{2}{M} \E\left[\left\langle \bX_{m,1}, \bZ_*\right\rangle Q\left(\bX_{m,1}\right)\right]-\frac{2}{M\tau^E_{\ell}} \sum_{m=1}^M\sum_{i=1}^{\tau^E_{\ell}} \widetilde{\psi}_\nu\left(r_{m,i} \cdot Q\left(\bX_{m,i}\right)\right) \\
& \overset{(b)}{=} \frac{2}{M}\left[\E\left(r_{m,1} \cdot Q\left(\bX_{m,1}\right)\right)-\frac{1}{M\tau^E_{\ell}} \sum_{m=1}^M\sum_{i=1}^{\tau^E_{\ell}} \widetilde{\psi}_\nu\left(r_{m,i} \cdot Q\left(\bX_{m,i}\right)\right)\right]
\end{align*}
where we have (a) due to the generalized Stein's Lemma stated in \Cref{lemma:kang-b2}, and (b) comes from the fact that the random noise $\eta_1=y_1-\left\langle \bX_1, \bZ_*\right\rangle$ is zero-mean and independent from $\bX_1$. Therefore, in order to implement the \Cref{lemma:kang-b3}, we can see that it suffices to get $\sigma^2$ defined as:
$$
\sigma^2=\max \left(\left\|\frac{2}{M}\sum_{m=1}^M\sum_{j=1}^{\tau^E_\ell} \E\left[r_{m,j}^2 Q\left(\bX_{m,j}\right) Q\left(\bX_{m,j}\right)^{\top}\right]\right\|_{\mathrm{op}},\left\|\frac{2}{M}\sum_{m=1}^M\sum_{j=1}^{\tau^E_\ell} \E\left[r_{m,j}^2 Q\left(\bX_{m,j}\right)^{\top} Q\left(\bX_{m,j}\right)\right]\right\|_{\mathrm{op}}\right) .
$$
\begin{align*}
&\left\|\dfrac{2}{M}\sum_{m=1}^M\sum_{j=1}^{\tau^E_{\ell}} \E\left[r_{m,j}^2 Q\left(\bX_{m,j}\right) Q\left(\bX_{m,j}\right)^{\top}\right]\right\|_{\text {op }}  \leq \tau^E_{\ell} \times\left\|\E\left[r_{m,1}^2 Q\left(\bX_{m,1}\right) Q\left(\bX_{m,1}\right)^{\top}\right]\right\|_{\text {op }} \\ 
& \overset{(a)}{=}\tau^E_{\ell} \times\left\|\E\left[\left(\eta_{m,1}+\left\langle \bX_{m,1}, \bZ_*\right\rangle\right)^2 Q\left(\bX_{m,1}\right) Q\left(\bX_{m,1}\right)^{\top}\right]\right\|_{\text {op }} \\ 
& \left.\overset{(b)}{=}\tau^E_{\ell} \times \| \E\left[\eta_{m,1}^2 Q\left(\bX_{m,1}\right) Q\left(\bX_{m,1}\right)^{\top}\right]+\E\left[\left\langle \bX_{m,1}, \bZ_*\right\rangle\right)^2 Q\left(\bX_{m,1}\right) Q\left(\bX_{m,1}\right)^{\top}\right] \|_{\text {op }} \\  
&\left.\overset{(c)}{=}\tau^E_{\ell} \times \| \E\left(\eta_{m,1}^2\right) \E\left[Q\left(\bX_{m,1}\right) Q\left(\bX_{m,1}\right)^{\top}\right]+\E\left[\left\langle \bX_{m,1}, \bZ_*\right\rangle\right)^2 Q\left(\bX_{m,1}\right) Q\left(\bX_{m,1}\right)^{\top}\right] \|_{\text {op }} \\ 
& \overset{(d)}{\leq} \tau^E_{\ell} \times\left\|4 \E\left[Q\left(\bX_{m,1}\right) Q\left(\bX_{m,1}\right)^{\top}\right]+S_0^2 \E\left[Q\left(\bX_{m,1}\right) Q\left(\bX_{m,1}\right)^{\top}\right]\right\|_{\text {op }} \\ 
& =\left(4 +S_0^2\right) \tau^E_{\ell} \times\left\|\E\left[Q\left(\bX_{m,1}\right) Q\left(\bX_{m,1}\right)^{\top}\right]\right\|_{\text {op }}
\end{align*}
where the $(a)$ follows by plugging in the definition for reward, $(b)$ follows by the linearity of expectation, $(c)$ follows as noises are independent, and the
inequality $(d)$ comes from the fact that $\left|\left\langle \bX_{m,1}, \bZ_*\right\rangle\right| \leq S_0$, and $Q\left(\bX_{m,1}\right) Q\left(\bX_{m,1}\right)^{\top}$ is always positive semidefinite. Next, since we know that $\E\left[Q\left(\bX_{m,1}\right) Q\left(\bX_{m,1}\right)^{\top}\right]$ is always symmetric and positive semidefinite, and hence we have
\begin{align*}
\left\|\E\left[Q\left(\bX_{m,1}\right) Q\left(\bX_{m,1}\right)^{\top}\right]\right\|_{\mathrm{op}} & \overset{(a)}{\leq}\left\|\E\left[Q\left(\bX_{m,1}\right) Q\left(\bX_{m,1}\right)^{\top}\right]\right\|_{\text {nuc }}=\operatorname{trace}\left(\E\left[Q\left(\bX_{m,1}\right) Q\left(\bX_{m,1}\right)^{\top}\right]\right) \\
& =\E\left[\operatorname{trace}\left(Q\left(\bX_{m,1}\right) Q\left(\bX_{m,1}\right)^{\top}\right)\right] = \E\left(\dfrac{2}{M}\sum_{m=1}^M\sum_{i=1}^{d_1} \sum_{j=1}^{d_2} Q_{i j}\left(\bX_{m,1}\right)^2\right) \\
& \leq d_1 d_2 C.
\end{align*}
where, in $(a)$ $\|\cdot\|_{\text {nuc}}$ denotes the nuclear norm.
Therefore, we have that under $1$-subGaussian assumption
$$
\left\|\dfrac{2}{M}\sum_{m=1}^M\sum_{j=1}^{\tau^E_{\ell}} \E\left[r_{m,j}^2 Q\left(\bX_{m,j}\right) Q\left(\bX_{m,j}\right)^{\top}\right]\right\|_{\text {op }} \leq\left(4 +S_0^2\right) d_1 d_2 \tau^E_{\ell} C.
$$
And similarly, we can prove that
$$
\left\|\dfrac{2}{M}\sum_{m=1}^M\sum_{j=1}^{\tau^E_{\ell}} \E\left[r_{m,j}^2 Q\left(\bX_{m,j}\right)^{\top} Q\left(\bX_{m,j}\right)\right]\right\|_{\mathrm{op}} \leq\left(4 +S_0^2\right) d_1 d_2 \tau^E_{\ell} C.
$$
Therefore, we can take $\sigma^2=\left(4 +S_0^2\right) d_1 d_2 \tau^E_{\ell} C$ consequently. By using \Cref{lemma:kang-b3}, we have
$$
\Pb\left(\left\|\nabla L\left(\mu^* \bZ_*\right)\right\|_{\mathrm{op}} \geq \frac{2 t}{\sqrt{M\tau^E_{\ell}}}\right) \leq 2\left(d_1+d_2\right) \exp \left(-\nu t \sqrt{M\tau^E_{\ell}}+\frac{\nu^2\left(4 +S_0^2\right) C d_1 d_2 \tau^E_{\ell}}{2}\right)
$$
By plugging the values of $t$ and $\nu$ in \Cref{lemma:B4}, we finish the proof.
\end{proof}

\begin{lemma}
\label{theorem:kang-low-rank-multi}
For any low-rank linear model with samples $\bX_1 \ldots, \bX_{\tau^E_\ell}$ drawn from $\X$ according to $\mathcal{D}$ then for the optimal solution to the nuclear norm regularization problem in \eqref{eq:convex-prog} with $\nu=\sqrt{2 \log \left(2\left(d_1+d_2\right) / \delta_\ell\right) /\left(\left(4 +S_0^2\right) M \tau^E_\ell d_1 d_2\right)}$ and
\begin{align*}
\gamma_{\ell}=4 \sqrt{\frac{2\left(4 +S_0^2\right) C d_1 d_2 \log \left(2\left(d_1+d_2\right) / \delta_\ell\right)}{M\tau^E_{\ell}}},
\end{align*}
with probability at least $1-\delta_\ell$ it holds that:
\begin{align*}
\left\|\wZ_{\ell}- \mu^*\bZ_*\right\|_F^2 \leq \frac{C_1 d_1 d_2 r \log \left(\frac{2\left(d_1+d_2\right)}{\delta_\ell}\right)}{M\tau^E_{\ell}},
\end{align*}
for $C_1=36\left(4+S_0^2\right) C$, $\|\bX\|_F,\left\|\bZ_*\right\|_F \leq S_0$, some nonzero constant $\mu^*$, and $\E\left[\left(S^{\bp}(\bX)\right)_{i j}^2\right] \leq C, \forall i, j$. Summing over all phases $\ell \geq 1$ it follows that $\Pb(\F) \geq 1 - \delta/2$.
\end{lemma}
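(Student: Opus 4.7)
My plan is to adapt the proof of \Cref{theorem:kang-low-rank} (Kang et al.\ 2022, Theorem 4.1) to the multi-task setting, with the single-task gradient concentration replaced by its multi-task analogue \Cref{lemma:B4}. The key structural fact is that the loss $L_\ell(\bTheta)$ defined in \eqref{eq:convex-prog-multi} is a quadratic in $\bTheta$ with Hessian $2\bI$ and hence $2$-strongly convex in the Frobenius geometry, independent of how samples are aggregated across the $M$ tasks. Task averaging only affects the variance term appearing in \Cref{lemma:kang-b3}; this has already been absorbed into the $\sqrt{M\tau^E_\ell}$ scaling inside \Cref{lemma:B4}. At a high level, then, the proof for a single phase amounts to a ``sample size replacement'' of $\tau^E_\ell \mapsto M\tau^E_\ell$ in Kang et al.'s argument.

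Concretely, for a fixed phase $\ell$ I would carry out three steps. First, apply \Cref{lemma:B4} with the stated choices of $t$ and $\nu$ to obtain, with probability at least $1-\delta_\ell$, that $\|\nabla L_\ell(\mu^*\bZ_*)\|_{op}\leq 2t/\sqrt{M\tau^E_\ell}\leq \gamma_\ell/2$ by our choice of $\gamma_\ell$. Second, use the first-order optimality of $\wZ_\ell$ together with the $2$-strong convexity of $L_\ell$ to obtain
\[
\|\wZ_\ell - \mu^*\bZ_*\|_F^2 \;\leq\; \langle \nabla L_\ell(\mu^*\bZ_*),\,\mu^*\bZ_* - \wZ_\ell\rangle + \gamma_\ell\bigl(\|\mu^*\bZ_*\|_{\mathrm{nuc}} - \|\wZ_\ell\|_{\mathrm{nuc}}\bigr),
\]
and dominate the inner product via operator-nuclear norm duality by $(\gamma_\ell/2)\|\wZ_\ell - \mu^*\bZ_*\|_{\mathrm{nuc}}$. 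Third, apply the standard Negahban--Wainwright subspace decomposition with respect to the SVD of $\mu^*\bZ_*$ to convert the resulting nuclear-norm bound into a Frobenius bound, yielding $\|\wZ_\ell-\mu^*\bZ_*\|_{\mathrm{nuc}}\leq 4\sqrt{r}\,\|\wZ_\ell-\mu^*\bZ_*\|_F$. Combining these and squaring gives exactly the per-phase bound $\|\wZ_\ell-\mu^*\bZ_*\|_F^2\leq C_1 d_1 d_2 r \log(2(d_1+d_2)/\delta_\ell)/(M\tau^E_\ell)$ after plugging in the prescribed $\gamma_\ell$.

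The step I expect to be delicate is justifying the $\sqrt{r}$ factor in the Negahban--Wainwright decomposition: a priori $\bZ_* = \bB_1\bigl(\tfrac{1}{M}\sum_m\bS_{m,*}\bigr)\bB_2^\top$ has rank at most $\min(k_1,k_2)$, not $r$, even though each summand $\bS_{m,*}$ has rank $r$. To retain the $r$-scaling in the final bound I would lean on the shared-representation structure (\Cref{assm:low-rank}) together with the diverse-tasks condition (\Cref{assm:diverse-task}) to argue that the effective rank of $\tfrac{1}{M}\sum_m\bS_{m,*}$, and hence of $\bZ_*$, is controlled by $r$, mirroring the implicit use of the rank parameter in Kang et al. Finally, a union bound over $\ell\geq 1$ with $\delta_\ell = \delta/(2\ell^2)$ and $\sum_\ell \ell^{-2} \leq 2$ gives $\Pb(\F) = \Pb\bigl(\bigcap_\ell \F_\ell\bigr)\geq 1-\delta/2$, exactly paralleling \Cref{lemma:single-task-btheta} in the single-task case.
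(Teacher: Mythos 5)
Your proposal follows essentially the same route as the paper's proof: gradient concentration via \Cref{lemma:B4} with $\gamma_\ell \geq 2\|\nabla L(\mu^*\bZ_*)\|_{op}$, the basic inequality from optimality of $\wZ_\ell$ plus operator--nuclear norm duality, and a decomposability argument (your Negahban--Wainwright step is exactly the paper's $\bLambda_1,\bLambda_2$ split, with $\bLambda_2$ of rank at most $2r$ supplying the $\sqrt{r}$ factor), followed by the same union bound over phases with $\delta_\ell = \delta/(2\ell^2)$. The delicate point you flag --- that $\bZ_* = \bB_1\bigl(\tfrac{1}{M}\sum_m \bS_{m,*}\bigr)\bB_2^\top$ need not have rank $r$ even though each $\bS_{m,*}$ does --- is genuine, but the paper's own proof simply asserts a rank-$r$ saturated SVD of $\bZ_*$ without further justification, so your treatment is no less complete than the paper's.
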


\begin{proof}
Since the estimator $\wZ_{\ell}$ minimizes the regularized loss function defined in \eqref{eq:convex-prog-multi}, we have
$$
L(\wZ_{\ell})+\gamma_{\ell}\|\wZ_{\ell}\|_{\mathrm{nuc}} \leq L\left(\mu^* \bZ_*\right)+\gamma_{\ell}\left\|\mu^* \bZ_*\right\|_{\mathrm{nuc}}
$$
And due to the fact that $L(\cdot)$ is a quadratic function, we have the following expression based on multivariate Taylor's expansion:
$$
L(\wZ_{\ell})-L\left(\mu^* \bZ_*\right)=\left\langle\nabla L\left(\mu^* \bZ_*\right), \bTheta\right\rangle+2\|\bTheta\|_F^2, \quad \text { where } \bTheta=\wZ_{\ell}-\mu^* \bZ_*
$$
By rearranging the above two results, we can deduce that
\begin{align}
2\|\bTheta\|_F^2 & \leq-\left\langle\nabla L\left(\mu^* \bZ_*\right), \bTheta\right\rangle+\gamma_{\ell}\left\|\mu^* \bZ_*\right\|_{\mathrm{nuc}}-\gamma_{\ell}\|\wZ_{\ell}\|_{\mathrm{nuc}} \nonumber\\
& \overset{(a)}{\leq}\left\|\nabla L\left(\mu^* \bZ_*\right)\right\|_{\mathrm{op}}\|\bTheta\|_{\mathrm{nuc}}+\gamma_{\ell}\left\|\mu^* \bZ_*\right\|_{\mathrm{nuc}}-\gamma_{\ell}\|\wZ_{\ell}\|_{\mathrm{nuc}},\label{eq:15-kang}
\end{align}
where $(a)$ comes from the duality between matrix operator norm and nuclear norm. Next, we represent the saturated SVD of $\bZ_*$ in the main paper as $\bZ_*=\bU \bD \bV^{\top}$ where $\bU \in \mathbb{R}^{d_1 \times r}$ and $\bV \in \mathbb{R}^{d_2 \times r}$, and here we would work on its full version, i.e.
$$
\bZ_*=\left(\bU, \bU_{\perp}\right)\left(\begin{array}{cc}
\bD & 0 \\
0 & 0
\end{array}\right)\left(\bV, \bV_{\perp}\right)^{\top}=\left(\bU, \bU_{\perp}\right) \bD^*\left(\bV, \bV_{\perp}\right)^{\top}
$$
where we have $\bU_{\perp} \in \mathbb{R}^{d_1 \times\left(d_1-r\right)}, \bD^* \in \mathbb{R}^{d_1 \times d_2}$ and $\bV_{\perp} \in \mathbb{R}^{d_2 \times\left(d_2-r\right)}$. Furthermore, we define
$$
\bLambda=\left(\bU, \bU_{\perp}\right)^{\top} \bTheta\left(\bV, \bV_{\perp}\right)=\left(\begin{array}{cc}
\bU^{\top} \bTheta \bV & \bU^{\top} \bTheta \bV_{\perp} \\
\bU_{\perp}^{\top} \bTheta \bV & \bU_{\perp}^{\top} \bTheta V_{\perp}
\end{array}\right)=\bLambda_1+\bLambda_2
$$
where we write
$$
\bLambda_1=\left(\begin{array}{cc}
0 & 0 \\
0 & \bU_{\perp}^{\top} \bTheta \bV_{\perp}
\end{array}\right), \quad \bLambda_2=\left(\begin{array}{cc}
\bU^{\top} \bTheta \bV & \bU^{\top} \bTheta \bV_{\perp} \\
\bU_{\perp}^{\top} \bTheta \bV & 0
\end{array}\right).
$$
Afterward, it holds that
\begin{align}
\|\wZ_{\ell}\|_{\mathrm{nuc}} & =\left\|\mu^* \bZ_*+\bTheta\right\|_{\mathrm{nuc}} \overset{(a)}{=}\left\|\left(\bU, \bU_{\perp}\right)\left(\mu^* \bD^*+\bLambda\right)\left(\bV, \bV_{\perp}\right)^{\top}\right\|_{\mathrm{nuc}} \nonumber\\
& \overset{(b)}{=}\left\|\mu^* \bD^*+\bLambda\right\|_{\mathrm{nuc}}+\left\|\mu^* \bD^*+\bLambda_1+\bLambda_2\right\|_{\mathrm{nuc}} \nonumber\\
& \geq\left\|\mu^* \bD^*+\bLambda_1\right\|_{\mathrm{nuc}}-\left\|\bLambda_2\right\|_{\mathrm{nuc}} \nonumber\\
& =\left\|\mu^* \bD\right\|_{\mathrm{nuc}}+\left\|\bLambda_1\right\|_{\mathrm{nuc}}-\left\|\bLambda_2\right\|_{\mathrm{nuc}} \nonumber\\
& =\left\|\mu^* \bZ_*\right\|_{\mathrm{nuc}}+\left\|\bLambda_1\right\|_{\mathrm{nuc}}-\left\|\bLambda_2\right\|_{\mathrm{nuc}}, \label{eq:16-kang}
\end{align}
where, $(a)$ follows from the definition of $\bZ_*$, and $(b)$ follows from the definition of $\bLambda$.  
This implies that
$$
\left\|\mu^* \bZ_*\right\|_{\mathrm{nuc}}-\|\wZ_{\ell}\|_{\mathrm{nuc}} \leq\left\|\bLambda_2\right\|_{\mathrm{nuc}}-\left\|\bLambda_1\right\|_{\mathrm{nuc}}.
$$
Combining \eqref{eq:15-kang} and \eqref{eq:16-kang}, we have that
$$
2\|\bTheta\|_F^2 \leq\left(\left\|\nabla L\left(\mu^* \bZ_*\right)\right\|_{\mathrm{op}}+\gamma_{\ell}\right)\left\|\bLambda_2\right\|_{\mathrm{nuc}}+\left(\left\|\nabla L\left(\mu^* \bZ_*\right)\right\|_{\mathrm{op}}-\gamma_{\ell}\right)\left\|\bLambda_1\right\|_{\mathrm{nuc}}.
$$
Then, we refer to the setting in our \Cref{lemma:B4}, and we choose $\gamma_\ell=4 t / \sqrt{M\tau^E_{\ell}}$ where the value of $t$ is determined in \Cref{lemma:B4}, i.e.
$$
\gamma_{\ell}=4 \sqrt{\frac{2\left(4 +S_0^2\right) C d_1 d_2 \log \left(2\left(d_1+d_2\right) / \delta_\ell\right)}{M\tau^E_{\ell}}},
$$
we know that $\lambda_{T-1} \geq 2\left\|\nabla L\left(\mu^* \bZ_*\right)\right\|_{o p}$ with probability at least $1-\delta_\ell$ for any $\delta_\ell \in(0,1)$. Therefore, with a probability at least $1-\delta_\ell$, we have
$$
2\|\bTheta\|_F^2 \leq \frac{3}{2} \gamma_{\ell}\left\|\bLambda_2\right\|_{\mathrm{nuc}}-\frac{1}{2} \gamma_{\ell}\left\|\bLambda_1\right\|_{\mathrm{nuc}} \leq \frac{3}{2} \gamma_{\ell}\left\|\bLambda_2\right\|_{\mathrm{nuc}}.
$$
Since we can easily verify that the rank of $\bLambda_2$ is at most $2 r$, and by using Cauchy-Schwarz Inequality we have that
$$
2\|\bTheta\|_F^2 \leq \frac{3}{2} \gamma_{\ell} \sqrt{2 r}\left\|\bLambda_2\right\|_F \leq \frac{3}{2} \gamma_{\ell} \sqrt{2 r}\|\bLambda\|_F=\frac{3}{2} \gamma_{\ell} \sqrt{2 r}\|\bTheta\|_F
$$
which implies that
$$
\|\bTheta\|_F \leq \frac{3}{4} \sqrt{2 r} \gamma_{\ell}=6 \sqrt{\frac{\left(4+S_0^2\right) C d_1 d_2 r \log \left(\frac{2\left(d_1+d_2\right)}{\delta_\ell}\right)}{M\tau^E_{\ell}}}.
$$
This implies that $\Pb(\F_\ell) \geq 1-\delta_\ell$.
Taking a union bound over all phases $\ell \geq 1$ and recalling $\delta_\ell:=\frac{\delta}{2 \ell^2}$, we obtain
\begin{align*}
\Pb(\F) & \geq 1-\sum_{\ell=1}^{\infty} \Pb\left(\F^c_\ell\right) \\
& \geq 1-\sum_{\ell=1}^{\infty} \frac{\delta_\ell}{2} \\
& =1-\sum_{\ell=1}^{\infty} \frac{\delta}{4 \ell^2} \\
& \geq 1-\frac{\delta}{2} .
\end{align*}
This concludes our proof.
\end{proof}

Define $\bX_{\text {batch }}^{+}:=\left(\bX_{\text {batch }}^{\top} \bX_{\text {batch }}\right)^{-1} \bX_{\text {batch }}^{\top}$ where $\bX_{\text {batch }}^{+}$ is constructed through the $E$-optimal design. Using Lemma C.1 from \citet{du2023multi} it holds that
\begin{align*}
\left\|\bX_{\text {batch }}^{+}\right\| \leq \sqrt{\frac{(1+\beta) \rho_\ell^E}{\overline{p}}} .
\end{align*}
where $\overline{p} = 180 d_1d_2/\beta^2$ is the batch size to control the rounding procedure  and $\small\rho^E_{\ell} \!=\! \min_{\bb \in \triangle_{\ocW}}\big\|(\sum_{\ow\in\ocW} \bb_{\ow}\ow \ \ow^\top)^{-1}\big\|$. It follows then that $\left\|\bX_{\text {batch }}^{+}\right\|^2 \leq 4\rho^E_\ell$.

\begin{lemma}
\label{lemma:expectation-multi}
(Expectation of $\wZ_\ell$ ). It holds that $\E\left[\wZ_\ell\right]= \bZ = \frac{1}{M} \sum_{m=1}^M \bTheta_m$.
\end{lemma}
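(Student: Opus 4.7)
The plan is to exploit the closed-form character of $\wZ_\ell$ in the unpenalized regime, then apply Stein's lemma to the shifted sample mean term by term. Concretely, if we set aside the nuclear-norm penalty (so the expression reduces to minimizing the quadratic $\langle\bTheta,\bTheta\rangle - (2/(M\tau^E_\ell))\sum_{m,s}\langle\widetilde{\psi}_\nu(r_{m,s}\cdot Q(\bx_{m,s}\bz_{m,s}^\top)),\bTheta\rangle$), setting the gradient to zero yields the closed form
\[
\wZ_\ell \;=\; \frac{1}{M\tau^E_\ell}\sum_{m=1}^M\sum_{s=1}^{\tau^E_\ell}\widetilde{\psi}_\nu\bigl(r_{m,s}\cdot Q(\bx_{m,s}\bz_{m,s}^\top)\bigr).
\]
Interpreting $\widetilde{\psi}_\nu$ as the identity in expectation (consistent with the use of $\widetilde{\psi}_\nu$ elsewhere in the analysis where $\nu$ is chosen small), the proof reduces to evaluating $\E[r_{m,s}\cdot Q(\bx_{m,s}\bz_{m,s}^\top)]$ for a single sample and task.

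Next I would expand the reward as $r_{m,s}=\langle \bx_{m,s}\bz_{m,s}^\top,\bTheta_{m,*}\rangle+\eta_{m,s}$ and use the fact that $\eta_{m,s}$ is zero-mean and independent of $(\bx_{m,s},\bz_{m,s})$ to kill the noise cross-term, leaving
\[
\E\bigl[r_{m,s}\cdot Q(\bx_{m,s}\bz_{m,s}^\top)\bigr] \;=\; \E\bigl[\langle\bX_{m,s},\bTheta_{m,*}\rangle\cdot Q(\bX_{m,s})\bigr],
\]
where $\bX_{m,s}=\bx_{m,s}\bz_{m,s}^\top$ is drawn from the $E$-optimal design distribution. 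Applying the generalized Stein's lemma (\Cref{lemma:kang-b2}) with $f(\bX)=\langle\bX,\bTheta_{m,*}\rangle$, whose gradient with respect to $\bX$ is simply $\bTheta_{m,*}$, gives $\E[f(\bX_{m,s})\cdot Q(\bX_{m,s})] = \E[\nabla f(\bX_{m,s})] = \bTheta_{m,*}$. Summing over $m$ and $s$ and dividing by $M\tau^E_\ell$ produces exactly $\tfrac{1}{M}\sum_{m=1}^M\bTheta_{m,*}=\bZ$.

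The main subtlety to handle carefully is the presence of the soft-truncation $\widetilde{\psi}_\nu$ and the nuclear-norm regularizer, both of which make $\wZ_\ell$ not literally an unbiased empirical mean for finite $\nu$ and $\gamma_\ell>0$; the lemma should be read as the statement about the idealized closed-form estimator whose bias is separately controlled by \Cref{theorem:kang-low-rank-multi}. A cleaner way to bypass this would be to note that Stein's lemma only requires the identity $\E[f(\bX)Q(\bX)]=\E[\nabla f(\bX)]$, which does not depend on $\widetilde{\psi}_\nu$, and that the $\widetilde{\psi}_\nu$ function is introduced solely to tame the variance for the concentration arguments (Lemmas \ref{lemma:kang-b3}--\ref{lemma:B4}) rather than to alter the mean; in the limit $\nu\downarrow0$, $\widetilde{\psi}_\nu(x)\to x$ and the calculation is exact. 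This is the one place where a careful statement of assumptions (or a limiting argument) is required, and it is the only real obstacle in the proof.
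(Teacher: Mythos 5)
Your proof is essentially correct but takes a genuinely different and more self-contained route than the paper's. The paper's own proof is a two-line argument: it rewrites $\wZ_\ell$ as an average of per-sample estimators minus a correction term involving $\bX_{\mathrm{batch}}^{+}(\bX_{\mathrm{batch}}^{+})^{\top}$ and then delegates entirely to Lemma C.2 of \citet{du2023multi}, which establishes unbiasedness for a batched least-squares-type estimator in the linear setting. You instead work directly with the estimator actually defined in \eqref{eq:convex-prog-multi}: you observe that the unpenalized quadratic has the closed-form minimizer $\frac{1}{M\tau^E_\ell}\sum_{m,s}\widetilde{\psi}_\nu(r_{m,s}Q(\bX_{m,s}))$, strip the zero-mean noise, and apply the generalized Stein identity (\Cref{lemma:kang-b2}) with $f(\bX)=\langle\bX,\bTheta_{m,*}\rangle$ to get $\E[r_{m,s}Q(\bX_{m,s})]=\bTheta_{m,*}$, whence averaging over $m$ gives $\bZ$. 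This is exactly the mechanism that drives \Cref{lemma:B4} and \Cref{theorem:kang-low-rank-multi}, so your argument is better aligned with the estimator the algorithm actually computes, whereas the paper's citation imports a result about a structurally different (pseudo-inverse-based) estimator. What the paper's route buys is brevity; what yours buys is a proof that can be checked against the definitions in this paper. The caveat you flag — that $\widetilde{\psi}_\nu$ and the nuclear-norm penalty make $\wZ_\ell$ only approximately equal to the idealized empirical mean, so the statement is exact only in the limit $\nu\downarrow 0$, $\gamma_\ell\downarrow 0$ (or must be read as a statement about the idealized estimator whose deviation is controlled by \Cref{theorem:kang-low-rank-multi}) — is a real gap, but it afflicts the paper's proof equally and is not introduced by your approach; indeed, the surrounding lemmas concentrate $\wZ_\ell$ around $\mu^*\bZ_*$ rather than $\bZ_*$, so the honest reading you propose is the right one.
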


\begin{proof}
%
%
    %
    %
    Note that we can re-write
    \begin{align*}
    \wZ_{\ell} \!=\!\argmin_{\bTheta \in \R^{d_1 \times d_2}} L_{\ell}(\bTheta)+\gamma_{\ell}\|\bTheta\|_{\mathrm{nuc}}, 
    L_{\ell}(\bTheta) \!=\!\langle\bTheta, \bTheta\rangle-\tfrac{2}{M \tau^E_\ell} \sum_{m=1}^M\sum_{s=1}^{\tau^E_\ell}\langle\widetilde{\psi}_\nu(r_{m,s} \cdot Q(\bx_{m,s}\bz_{m,s}^\top)), \bTheta\rangle 
    \end{align*}
    such that
    \begin{align*}
        \wZ_{\ell} = \frac{2}{M \tau^E_\ell} \sum_{m=1}^M \sum_{s=1}^{\tau^E_\ell} \wTheta_{m,s} -\boldsymbol{X}_{\mathrm{batch}}^{+}\left(\boldsymbol{X}_{\mathrm{batch}}^{+}\right)^{\top}
    \end{align*}
    where $\wTheta_{m,s} = \langle\bTheta, \bTheta\rangle-\tfrac{2}{Ms} \sum_{m=1}^M\langle\widetilde{\psi}_\nu(r_{m,s} \cdot Q(\bx_{m,s}\bz_{m,s}^\top)), \bTheta\rangle$. 
    Now using Lemma C.2 from \citet{du2023multi} we can prove the result of the lemma. 
\end{proof}

\begin{lemma}
\label{lemma:david-kahan-U}
(Concentration of $\wB_{1,\ell}$ ). Suppose that event $\F_\ell$ holds. Then, for any phase $\ell>0$,
\begin{align*}
\left\|(\wB^{\perp}_{1,\ell})^{\top} \bB_1\right\| \leq \frac{c' \rho_\ell^E \sqrt{(d_1 + d_2)r} }{S_r\sqrt{M \tau^E_\ell}} \log \left(\frac{16 (d_1 + d_2)r M \tau^E_\ell}{\delta_\ell}\right),
\end{align*}
for some constant $c'>0$ and $\rho^E_{\ell} \!=\! \min_{\bb \in \triangle_{\ocW}}\big\|(\sum_{\ow\in\ocW} \bb_{\ow}\ow \ \ow^\top)^{-1}\big\|$.
\end{lemma}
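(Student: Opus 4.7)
\textbf{Proof plan for \Cref{lemma:david-kahan-U}.} The plan is to apply the Davis--Kahan $\sin\theta$ theorem, with the spectral gap coming from \Cref{assm:diverse-task} and the perturbation size coming from a sharpened operator-norm concentration bound on $\wZ_\ell - \mu^*\bZ_*$ that exploits the E-optimal design structure.

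First I would identify the population target and its leading left singular subspace. Since $\mu^*\bZ_* = \mu^*\, \bB_1 \bigl(\tfrac{1}{M}\sum_m \bS_{m,*}\bigr) \bB_2^\top$ has column span contained in $\mathrm{range}(\bB_1)$, its top-$k_1$ left singular subspace coincides with $\mathrm{span}(\bB_1)$. Assumption \ref{assm:diverse-task} together with $\mu^* > 0$ (\Cref{lemma:expectation-multi} plus the expression for $\mu^*$ in \Cref{theorem:kang-low-rank-multi}) implies $\sigma_{k_1}(\mu^*\bZ_*) \geq \mu^* c_0 / S_r$, while $\sigma_{k_1+1}(\mu^*\bZ_*) = 0$ because $\mathrm{rank}(\mu^*\bZ_*) \leq k_1$. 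This gives a clean spectral gap of size $\mu^* c_0 / S_r$.

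Next I would apply Wedin's $\sin\theta$ theorem (cf.\ \citet{bhatia2013matrix}) to the pair $(\wZ_\ell, \mu^*\bZ_*)$. Since $\wB_{1,\ell}$ spans the top-$k_1$ left singular subspace of $\wZ_\ell$, on the event that $\|\wZ_\ell - \mu^*\bZ_*\|_{op} \leq \tfrac12 \mu^* c_0/S_r$ (which the eventual sample size $\tau^E_\ell$ will ensure),
\begin{align*}
\bigl\|(\wB^{\perp}_{1,\ell})^{\top} \bB_1\bigr\| \;\leq\; \frac{\|\wZ_\ell - \mu^*\bZ_*\|_{op}}{\sigma_{k_1}(\mu^*\bZ_*) - \sigma_{k_1+1}(\wZ_\ell)} \;\leq\; \frac{2 S_r}{\mu^* c_0}\, \bigl\|\wZ_\ell - \mu^*\bZ_*\bigr\|_{op}.
\end{align*}
This reduces the problem to sharply bounding the operator-norm perturbation, and determines which constants (in particular $c'$) absorb $\mu^*$ and $c_0$.

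The main remaining task, and the key difficulty, is producing an operator-norm bound of order $\rho^E_\ell \sqrt{(d_1+d_2)r/(M\tau^E_\ell)}\,\log(\cdot)$ on $\|\wZ_\ell - \mu^*\bZ_*\|_{op}$. The Frobenius bound from the event $\F_\ell$ in \Cref{theorem:kang-low-rank-multi} only gives $\sqrt{d_1 d_2\, r/(M\tau^E_\ell)}$, which is too loose by a factor of roughly $\sqrt{d_1 d_2/(d_1+d_2)}$. To sharpen it, I would bypass the Frobenius bound and instead apply the matrix subGaussian tail inequality of \Cref{lemma:kang-b3} directly to the centered sum defining $\wZ_\ell - \mu^*\bZ_*$, after symmetrizing via the $\mathcal{H}$-dilation of \Cref{def:dilation-matrix}. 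The dilation turns the intrinsic dimension factor from $d_1 d_2$ into $d_1+d_2$ (this is exactly the log-factor in the claim), the E-optimal design enters through the bound $\|\bX_{\mathrm{batch}}^+\|^2 \leq 4\rho^E_\ell$ stated just before the lemma (controlling the per-sample matrix magnitude), and the $\sqrt{r}$ is picked up because the variance contribution of each observation, when projected onto the relevant low-rank directions inherited from the rank-$r$ structure of $\bS_{m,*}$, scales with $r$ rather than $k_1$ or $k_2$. Combining this concentration bound with the Davis--Kahan step gives the claimed inequality.

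The hardest step is clearly the operator-norm concentration: the Frobenius-to-operator-norm sharpening from $\sqrt{d_1 d_2}$ to $\sqrt{d_1+d_2}$ is what distinguishes this bound from the naive Frobenius estimate. It requires careful tracking of the variance proxy under the E-design weighting, handling of the nuclear-norm-regularized estimator (whose bias is absorbed into log factors since we only need operator-norm control), and accounting for the batched rounding scheme underlying $\bX_{\mathrm{batch}}$.
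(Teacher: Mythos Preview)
Your plan matches the paper's approach: Davis--Kahan, with the spectral gap supplied by \Cref{assm:diverse-task} and the perturbation controlled by an operator-norm concentration bound. The paper's proof is a four-line sketch that centers at $\E[\wZ_\ell]$ (identified with $\bZ_*$ via \Cref{lemma:expectation-multi}) rather than at $\mu^*\bZ_*$ as you do, and then simply asserts the $(d_1+d_2)r$-scaled operator-norm bound in its step (b), attributing it to ``event $\F_\ell$'' together with the batched-design estimate $\|\bX_{\text{batch}}^{+}\|^2 \le 4\rho^E_\ell$; the matrix-concentration argument behind that step is not spelled out and is essentially imported from \citet{du2023multi}. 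You are right that $\F_\ell$ as defined in \eqref{eq:good-event-Theta-multi} is only a Frobenius bound scaling with $d_1 d_2$, which is too loose, and your proposed dilation-plus-matrix-subGaussian route is exactly the missing argument needed to recover the $(d_1+d_2)$ dependence the statement claims. One minor point of difference: centering at $\E[\wZ_\ell]=\bZ_*$ as the paper does lets \Cref{assm:diverse-task} apply directly to the denominator without carrying the factor $\mu^*$, whereas in your route $\mu^*$ must be absorbed into $c'$; either way the final bound is the same up to constants.
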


\begin{proof} Using the Davis-Kahan $\sin \theta$ Theorem \citep{bhatia2013matrix} and letting $\tau^E_\ell$ be large enough to satisfy $\left\|\wZ_{\ell}- \mu^*\bZ_*\right\|_F^2 \leq \frac{C_1 d_1 d_2 r \log \left(\frac{2\left(d_1+d_2\right)}{\delta_\ell}\right)}{M\tau^E_{\ell}}$, we have
\begin{align*}
\left\|(\wB^{\perp}_{1,\ell})^{\top} \bB_1\right\| & \leq \frac{\left\|\wZ_{\ell}-\E\left[\wZ_{\ell}\right]\right\|}{\sigma_r\left(\E\left[\wZ_{\ell}\right]\right)-\sigma_{r+1}\left(\E\left[\wZ_{\ell}\right]\right)-\left\|\wZ_{\ell}-\E\left[\wZ_{\ell}\right]\right\|} \\
& \overset{(a)}{\leq} \frac{c_0}{S_r}\left\|\wZ_{\ell}-\E\left[\wZ_{\ell}\right]\right\| \\
& \overset{(b)}{\leq} \frac{c c_0\left\|\bX_{\text {batch }}^{+}\right\|^2 \sqrt{(d_1 + d_2)r} }{S_r\sqrt{M \tau^E_\ell}} \log \left(\frac{16 (d_1 + d_2)r M \tau^E_\ell}{\delta_\ell}\right) \\
& \overset{(c)}{\leq} \frac{c'\rho^E_\ell \sqrt{(d_1 + d_2)r} }{S_r\sqrt{M \tau^E_\ell}} \log \left(\frac{16 (d_1 + d_2)r M \tau^E_\ell}{\delta_\ell}\right) .
\end{align*}
where, $(a)$ follows from \Cref{assm:diverse-task}, the $(b)$ follows from event $\F_\ell$ and $(b)$ follows as $\left\|\bX_{\text {batch }}^{+}\right\|^2 \leq 4\rho^E_\ell$. 
The claim of the lemma follows.
\end{proof}

\begin{lemma}
\label{lemma:david-kahan-V}
(Concentration of $\wB_{2,\ell}$ ). Suppose that event $\F_\ell$ holds. Then, for any phase $\ell>0$,
\begin{align*}
\left\|(\wB^{\perp}_{2,\ell})^{\top} \bB_2\right\| \leq \frac{c \rho^E_\ell \sqrt{(d_1 + d_2)r} }{S_r\sqrt{M \tau^E_\ell}} \log \left(\frac{16 (d_1 + d_2)r M \tau^E_\ell}{\delta_\ell}\right),
\end{align*}
for some constant $c'>0$ and $\rho^E_{\ell} \!=\! \min_{\bb \in \triangle_{\ocW}}\big\|(\sum_{\ow\in\ocW} \bb_{\ow}\ow \ \ow^\top)^{-1}\big\|$.
\end{lemma}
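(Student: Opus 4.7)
The plan is to mirror the argument for $\wB_{1,\ell}$ in Lemma \ref{lemma:david-kahan-U}, applying the Davis-Kahan $\sin\theta$ theorem to the \emph{right} singular subspace of $\wZ_\ell$ rather than the left. By construction, $\wB_{2,\ell}$ consists of the top-$k_2$ right singular vectors of $\wZ_\ell$, while $\bB_2$ consists of the top-$k_2$ right singular vectors of $\E[\wZ_\ell] = \frac{1}{M}\sum_{m=1}^M \bTheta_{m,*}$ (by Lemma \ref{lemma:expectation-multi}, up to the constant $\mu^*$ absorbed as in the proof of Lemma \ref{lemma:david-kahan-U}). Since the Davis-Kahan $\sin\theta$ bound is symmetric in the left/right singular subspaces of a rectangular matrix, the same spectral gap and perturbation bound controls the right subspace error.

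First, I would invoke Davis-Kahan to get
\begin{align*}
\bigl\|(\wB^{\perp}_{2,\ell})^{\top} \bB_2\bigr\| \leq \frac{\|\wZ_\ell - \E[\wZ_\ell]\|}{\sigma_r(\E[\wZ_\ell]) - \sigma_{r+1}(\E[\wZ_\ell]) - \|\wZ_\ell - \E[\wZ_\ell]\|}.
\end{align*}
Next, using the Diverse Tasks Assumption (Assumption \ref{assm:diverse-task}), the denominator is lower bounded by a constant multiple of $S_r$ (since $\sigma_{\min}(\tfrac{1}{M}\sum_m \bTheta_{m,*}) \geq c_0/S_r$ and the $(r{+}1)$-th singular value vanishes up to perturbation, provided $\tau^E_\ell$ is large enough that $\|\wZ_\ell - \E[\wZ_\ell]\|$ is dominated by the gap, which is exactly what event $\F_\ell$ guarantees). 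This reduces the bound to $(c_0/S_r)\|\wZ_\ell - \E[\wZ_\ell]\|$.

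Then, I would bound the operator norm perturbation $\|\wZ_\ell - \E[\wZ_\ell]\|$ via event $\F_\ell$ together with the rounding-induced bound $\|\bX_{\text{batch}}^+\|^2 \leq 4\rho^E_\ell$, just as in the proof of Lemma \ref{lemma:david-kahan-U}. This yields
\begin{align*}
\|\wZ_\ell - \E[\wZ_\ell]\| \leq \frac{c\,\rho^E_\ell \sqrt{(d_1+d_2)r}}{\sqrt{M\tau^E_\ell}} \log\!\Bigl(\frac{16(d_1+d_2)r M\tau^E_\ell}{\delta_\ell}\Bigr).
\end{align*}
Combining the two displays produces the stated bound with a constant $c'>0$ absorbing $c$ and $c_0$.

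\textbf{Main obstacle.} There is no substantive obstacle: the proof is essentially a symmetric restatement of Lemma \ref{lemma:david-kahan-U}. The only point requiring a touch of care is verifying that the spectral gap condition invoked in the Davis-Kahan denominator is the same for the right subspace as for the left -- which is immediate because a rectangular matrix and its transpose share singular values, so $\sigma_r(\E[\wZ_\ell])$ and $\sigma_{r+1}(\E[\wZ_\ell])$ are identical in both applications of the theorem.
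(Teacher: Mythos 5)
Your proposal is correct and takes essentially the same route as the paper: the paper's own proof of this lemma is a one-line reduction to Lemma \ref{lemma:david-kahan-U} ("the proof follows the same way"), applying the Davis--Kahan $\sin\theta$ theorem to the right singular subspace, lower-bounding the spectral gap via Assumption \ref{assm:diverse-task}, and controlling $\|\wZ_\ell - \E[\wZ_\ell]\|$ via event $\F_\ell$ together with $\|\bX_{\text{batch}}^{+}\|^2 \leq 4\rho^E_\ell$, exactly as you describe. Your added remark that the transpose shares singular values, so the same gap governs both subspaces, is a correct and slightly more careful justification of the symmetry the paper takes for granted.
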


\begin{proof} The proof follows the same way as \Cref{lemma:david-kahan-U} and using the Davis-Kahan $\sin \theta$ Theorem \citep{bhatia2013matrix} 
\begin{align*}
\left\|(\wB^{\perp}_{2,\ell})^{\top} \bB_2\right\| & 
\leq \frac{c\rho^E_\ell \sqrt{(d_1 + d_2)r} }{S_r\sqrt{M \tau^E_{\ell}}} \log \left(\frac{16 (d_1 + d_2)r M \tau^E_\ell}{\delta_\ell}\right) .
\end{align*}
The claim of the lemma follows.
\end{proof}

\textbf{Good Event per Task:} We now define the good event $\F'_\ell$ in phase $\ell$ that \gob\ has a good estimate of $\bS_{m,*}$ as follows: For any phase $\ell >0$
\begin{align}
\F'_\ell \coloneqq \left\{\left\|\wS_{m,\ell} - \mu^*\bS_{m,*}\right\|_F^2 \leq \frac{C_1 k_1 k_2 r \log \left(\frac{2\left(k_1+k_2\right)}{\delta_\ell}\right)}{\tau^E_{m,\ell}}\right\}, \label{eq:good-event-Theta-multi-wS}
\end{align}
where, $C,\mu^*>0$ are constants and $\wS_{m,\ell}$ is the estimate from \eqref{eq:convex-prog-multi-wS}. Then define the event 
\begin{align}
    \F' \coloneqq \bigcap_{\ell=1}^\infty \F'_\ell. \label{eq:good-event-Theta-multi-wS-S}
\end{align}

We now prove the following lemmas to show the good event $\F'_\ell$ holds with probability $(1-\delta_\ell)$.
Before proving the concentration of $\wS_{m,\ell}$ we first need to show that $\sigma_{\min}(\sum_{\tw\in\tW}\bb_{\tw} \tw \tw^\top) > 0$. If this holds true then we can sample following $E$-optimal design.  

\begin{lemma}
\label{lemma:min-sigma} 
For any phase $\ell>0$ and task $m \in[M]$, let $\left\|\wB_{1,\ell}^{\top} \bB_1^{\perp}\right\| \leq c_1$ and $\left\|\wB_{2,\ell}^{\top} \bB_2^{\perp}\right\| \leq c_2$, for some $c_1, c_2 > 0$. Then we have
\begin{align*}
    \sigma_{\min}(\sum_{\tw\in\tW}\bb_{\tw} \tw \tw^\top) > 0
\end{align*}
\end{lemma}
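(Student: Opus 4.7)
The goal is to certify that $\bM := \sum_{\tw\in\tW}\bb_{\tw}\,\tw\tw^\top \in \R^{(k_1+k_2)\times(k_1+k_2)}$ has strictly positive minimum eigenvalue, which makes the $E$-optimal design problem well-posed in stage 2. Since the $E$-optimal design maximizes $\sigma_{\min}(\bM)$ over $\bb \in \Delta_{\tW}$ (equivalently, minimizes the operator norm of its inverse), it suffices to exhibit a single feasible distribution under which $\bM \succ 0$; the optimum will then inherit this property. My plan is therefore to check positive definiteness for, say, the uniform distribution over $\tW$.

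I would unfold $\tw = [\wB_{1,\ell}^\top \bx;\, \wB_{2,\ell}^\top \bz]$ and observe that $\wB_{1,\ell}$ and $\wB_{2,\ell}$ are matrices of top singular vectors of $\wZ_\ell$, so they have orthonormal columns and are injective as linear maps on $\R^{k_1}$ and $\R^{k_2}$. For an arbitrary non-zero $\bu = [\bu_1;\bu_2] \in \R^{k_1+k_2}$,
\begin{align*}
\bu^\top \bM \bu \;=\; \sum_{\tw \in \tW} \bb_{\tw}\bigl((\wB_{1,\ell}\bu_1)^\top \bx + (\wB_{2,\ell}\bu_2)^\top \bz\bigr)^2 .
\end{align*}
By orthonormality, $\wB_{i,\ell}\bu_i = 0$ iff $\bu_i = 0$, so whenever $\bu \neq 0$ at least one of $\wB_{1,\ell}\bu_1 \in \R^{d_1}$, $\wB_{2,\ell}\bu_2 \in \R^{d_2}$ is non-zero.

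Next, I would invoke the standing assumption that $\X$ and $\Z$ span $\R^{d_1}$ and $\R^{d_2}$, respectively (otherwise the problem degenerates into lower ambient dimension). Then for any non-zero $\bu$ there is at least one pair $(\bx,\bz) \in \X \times \Z$ making the inner linear combination non-zero: if $\wB_{1,\ell}\bu_1 \neq 0$, pick $\bx\in\X$ not orthogonal to it, and then pick $\bz\in\Z$ so that $(\wB_{2,\ell}\bu_2)^\top \bz$ does not exactly cancel $(\wB_{1,\ell}\bu_1)^\top \bx$ (possible since cancellation pins $\bz$ to at most an affine subspace of positive codimension, and $\Z$ spans). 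Under the uniform $\bb_{\tw} = 1/|\tW|$ this gives $\bu^\top \bM \bu > 0$ for all $\bu \neq 0$, hence $\sigma_{\min}(\bM_{\text{unif}}) > 0$, and the $E$-optimal $\bb$ inherits $\sigma_{\min}(\bM) > 0$.

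The hypotheses $\|\wB_{1,\ell}^\top \bB_1^\perp\| \leq c_1$ and $\|\wB_{2,\ell}^\top \bB_2^\perp\| \leq c_2$ are used only to confirm that $\wB_{1,\ell}, \wB_{2,\ell}$ are genuine rank-$k_1$ and rank-$k_2$ matrices, which combined with \Cref{assm:diverse-task} (forcing $\bZ_*$ to have sufficient spectral gap) and \Cref{lemma:david-kahan-U}, \Cref{lemma:david-kahan-V} guarantee the singular-vector truncation is well-defined. The only subtle point, which I expect to be the main obstacle, is cleanly arguing the spanning step: one must verify that injectivity of the linear maps $\wB_{i,\ell}\colon \R^{k_i}\to\R^{d_i}$ together with spanning of the raw arm sets $\X, \Z$ automatically yields spanning of the composite rotated set $\tW$ in $\R^{k_1+k_2}$, which is exactly what the above two-step cancellation argument accomplishes.
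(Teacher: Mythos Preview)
Your route is genuinely different from the paper's. The paper does not attempt a direct spanning argument; it rewrites the latent Gram matrix in terms of the projected arms and then invokes Lemma~C.5 of \citet{du2023multi} (restated as \Cref{lemma:du-C5}), which takes the hypotheses $\|\wB_{i,\ell}^\top \bB_i^\perp\| \leq c_i$ as an \emph{essential} input: it is a perturbation result that transfers non-degeneracy of the arm set under the true feature extractors $\bB_1,\bB_2$ to non-degeneracy under the estimates $\wB_{1,\ell},\wB_{2,\ell}$. So the hypotheses are not, as you say, ``only to confirm that $\wB_{1,\ell},\wB_{2,\ell}$ are genuine rank-$k_i$ matrices'' --- that is automatic for top singular vectors --- they are doing real work.

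There is a concrete gap in your cancellation step. You write that ``cancellation pins $\bz$ to at most an affine subspace of positive codimension, and $\Z$ spans,'' but a finite set can linearly span $\R^{d_2}$ while lying entirely in an affine hyperplane not through the origin. Take $d_1=d_2=2$, $k_1=k_2=1$, $\X=\Z=\{e_1,e_2\}$, and $\wB_{1,\ell}=\wB_{2,\ell}=\tfrac{1}{\sqrt{2}}(1,1)^\top$. Then $\wB_{1,\ell}^\top e_i = \wB_{2,\ell}^\top e_i = \tfrac{1}{\sqrt{2}}$ for $i=1,2$, so every $\tw$ equals $(\tfrac{1}{\sqrt{2}},\tfrac{1}{\sqrt{2}})$ and $\sum_{\tw}\bb_{\tw}\tw\tw^\top$ is rank one for every $\bb$; with $\bu=(1,-1)^\top$ your quadratic form vanishes identically. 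All of the ingredients of your argument are present here (orthonormal columns of $\wB_{i,\ell}$, $\X$ and $\Z$ spanning), yet the conclusion fails. What would rule this example out is precisely that $\|\wB_{1,\ell}^\top \bB_1^\perp\|=\tfrac{1}{\sqrt{2}}$ is large when, say, $\bB_1=e_1$; the paper's route, which needs the $c_i$ small enough, is designed to exclude such configurations. A correct argument must use the closeness of $\wB_{i,\ell}$ to $\bB_i$ together with richness of the arms in the $\bB_i$-directions; your direct spanning argument bypasses this and therefore cannot succeed in general.
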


\begin{proof}
We can show that 
\begin{align*}
    \sum_{\tw\in\tW}\bb_{\tw} \tw \tw^\top \overset{(a)}{=} \sum_{\bx\in\X_m, \bz\in\Z_m} \bb_{x,z} \underbrace{\wU_\ell^\top\bx_m}_{\tg_m} \underbrace{\bz_m\wV_\ell^\top}_{\tv_m^\top}
\end{align*}
where, in $(a)$ the $\bb_{x,z}$ is the sampling proportion for the arms $\bx$ and $\bz$ (they are allocated the same proportion, as they are pulled the same number of times). Also note that from \Cref{theorem:kang-low-rank-multi} we know that $\left\|\wB_{1,\ell}^{\top} \bB_1^{\perp}\right\| \leq c_1$ and $\left\|\wB_{2,\ell}^{\top} \bB_2^{\perp}\right\| \leq c_2$
%
for some $c_1, c_2 > 0$ holds with high probability. This helps us to apply \Cref{lemma:du-C5} to get the claim of the lemma.
\end{proof}

\begin{lemma}\textbf{(Restatement of Lemma C.5 from \citet{du2023multi})}
\label{lemma:du-C5}
For any phase $\ell>0$ and task $m \in[M]$, if $\left\|\wU_\ell^{\top} \bU^{\perp}\right\| \leq c$ for some $c > 0$, then we have
\begin{align*}
\sigma_{\min }\left(\sum_{i=1}^n \bb_m^*\left(\bx_i\right) \wU_\ell^{\top} \bx_i \bx_i^{\top} \wU_\ell\right)>0
\end{align*}
where $\bb_m^*$ is a sampling proportion on $\bx$.
\end{lemma}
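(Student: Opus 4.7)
The plan is to apply Lemma~\ref{lemma:du-C5} as a single black box by viewing the bilinear design matrix $\sum_{\tw} \bb_\tw \tw \tw^\top$ as a linear-bandit design matrix pulled back through a stacked representation, rather than by dissecting it into two $k_i \times k_i$ blocks. Concretely, set $\wU_\ell := \mathrm{diag}(\wB_{1,\ell}, \wB_{2,\ell}) \in \R^{(d_1+d_2)\times(k_1+k_2)}$ with true counterpart $\bU^\perp := \mathrm{diag}(\bB_1^\perp, \bB_2^\perp)$, and for each $\tw = [\wB_{1,\ell}^\top \bx; \wB_{2,\ell}^\top \bz] \in \tW$ define the stacked arm $\bx_{\bx,\bz} := [\bx; \bz] \in \R^{d_1+d_2}$. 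A one-line computation gives $\wU_\ell^\top \bx_{\bx,\bz} = \tw$, hence
\[
\sum_{\tw \in \tW} \bb_\tw \tw \tw^\top \;=\; \sum_{(\bx,\bz)} \bb_{(\bx,\bz)} \, \wU_\ell^\top \bx_{\bx,\bz} \bx_{\bx,\bz}^\top \wU_\ell,
\]
which is exactly the matrix whose $\sigma_\min$ is lower-bounded in Lemma~\ref{lemma:du-C5}.

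Next, I would verify the hypothesis $\|\wU_\ell^\top \bU^\perp\| \leq c$ in this stacked form. Because $\wU_\ell$ and $\bU^\perp$ are both block-diagonal, their product is $\mathrm{diag}(\wB_{1,\ell}^\top \bB_1^\perp, \wB_{2,\ell}^\top \bB_2^\perp)$, whose operator norm equals $\max(\|\wB_{1,\ell}^\top \bB_1^\perp\|, \|\wB_{2,\ell}^\top \bB_2^\perp\|) \leq \max(c_1, c_2)$ by the standing assumptions of the lemma. Setting $c := \max(c_1, c_2) > 0$ and invoking Lemma~\ref{lemma:du-C5} once with $(\wU_\ell, \bU^\perp, \{\bx_{\bx,\bz}\})$ delivers $\sigma_\min(\sum_{\tw} \bb_\tw \tw \tw^\top) > 0$, which is the claim.

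The main subtlety, and the point of departure from a two-block treatment, is that Lemma~\ref{lemma:du-C5} is applied a single time to the full $(k_1+k_2)$-dimensional stacked matrix rather than separately to each diagonal block; a blockwise application would only establish positivity of the two diagonal pieces and would not be strong enough to give $\sigma_\min > 0$ for the joint matrix once the off-diagonal coupling is taken into account. The stacked embedding is exactly the device that lets one invocation of the lemma cover both feature extractors simultaneously. The main obstacle I anticipate is ensuring that any richness requirement implicit in Lemma~\ref{lemma:du-C5} on the underlying arm set transfers to the stacked arms $\{\bx_{\bx,\bz}\}$; under the standing assumption that $\X$ and $\Z$ span their respective ambient spaces, this transfer is immediate through the block-diagonal structure of $\wU_\ell$.
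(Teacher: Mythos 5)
Your proposal does not prove the assigned statement; it assumes it. The statement you were asked to establish is \Cref{lemma:du-C5} itself --- the claim that $\sigma_{\min}\bigl(\sum_{i=1}^n \bb_m^*(\bx_i)\, \wU_\ell^{\top}\bx_i\bx_i^{\top}\wU_\ell\bigr) > 0$ under the hypothesis $\|\wU_\ell^{\top}\bU^{\perp}\|\leq c$ --- which the paper imports verbatim from \citet{du2023multi} (Lemma C.5) and does not reprove. Your argument instead takes \Cref{lemma:du-C5} as a black box and uses it to derive the conclusion of \Cref{lemma:min-sigma}, namely $\sigma_{\min}(\sum_{\tw\in\tW}\bb_{\tw}\tw\tw^{\top})>0$. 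As a proof of the assigned lemma this is circular: the single invocation of ``\Cref{lemma:du-C5}'' in your write-up is an appeal to the very result under examination. Nothing in the proposal addresses why the closeness condition $\|\wU_\ell^{\top}\bU^{\perp}\|\leq c$ forces the projected design matrix to be nonsingular, which is the actual content of the lemma (roughly: a spanning arm set makes the unprojected design matrix positive definite, and the estimated subspace $\wU_\ell$ is close enough to the true column space that projecting through it cannot annihilate any direction; this perturbation argument is what Lemma C.5 of \citet{du2023multi} carries out and what a self-contained proof here would need to reproduce).

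Read instead as a proof of \Cref{lemma:min-sigma} --- the place where the paper actually deploys \Cref{lemma:du-C5} --- your stacked block-diagonal reduction is sound and arguably tidier than the paper's sketch: writing $\tw = \mathrm{diag}(\wB_{1,\ell},\wB_{2,\ell})^{\top}[\bx;\bz]$ and observing that the operator norm of a block-diagonal matrix is the maximum of its blocks' norms lets you verify the hypothesis of \Cref{lemma:du-C5} in one shot with $c=\max(c_1,c_2)$, where the paper only gestures at the combination of the two bounds. But that is a different lemma from the one you were assigned, and the assigned statement remains unproven.
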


\begin{lemma}
\label{lemma:conc-S}
Let $L': \mathbb{R}^{k_1 \times k_2} \rightarrow \mathbb{R}$ is the loss function defined in  \eqref{eq:convex-prog-multi-wS}. Then by setting
\begin{align*}
t &=\sqrt{2 k_1 k_2 C\left(4 +S_0^2\right) \log \left(\frac{2\left(k_1+k_2\right)}{\delta_\ell}\right)}, \\
\nu &=\frac{t}{\left(4 +S_0\right) C k_1 k_2 \sqrt{ \tau^E_\ell}}=\sqrt{\frac{2 \log \left(\frac{2\left(k_1+k_2\right)}{\delta_\ell}\right)}{\tau^E_\ell k_1 k_2 C\left(4 +S_0^2\right)}},
\end{align*}
we have with probability at least $1-\delta_\ell$, it holds that
\begin{align*}
\Pb\left(\left\|\nabla L'\left(\mu^* \bS_{m,*}\right)\right\|_{o p} \geq \frac{2 t}{\sqrt{\tau^E_\ell}}\right) \leq \delta_\ell,
\end{align*}
where $\mu^*=\E\left[\left\langle \bX_m, \bS_{m,*}\right\rangle\right] > 0$, and $\bX_{m} = \tg_{m}\tv^\top_{m}$.
\end{lemma}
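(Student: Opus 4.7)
The plan is to follow the template of Lemma~\ref{lemma:B4} step-for-step, specialized from the multi-task high-dimensional setting to a single task in the latent ambient dimension $k_1 \times k_2$. The only substitutions in the final bound are $d_1 d_2 \to k_1 k_2$, $d_1+d_2 \to k_1+k_2$, and $M\tau^E_\ell \to \tau^E_\ell$, so the proof is essentially a rescaling of the earlier argument.

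First I would compute the gradient directly from the definition of $L'$ in~\eqref{eq:convex-prog-multi-wS}. The quadratic part contributes $\mu^*\bS_{m,*}$, giving
\begin{align*}
\nabla L'(\mu^*\bS_{m,*}) \;=\; \mu^*\bS_{m,*} \;-\; \tfrac{2}{\tau^E_{\ell}}\sum_{s=1}^{\tau^E_\ell}\widetilde{\psi}_\nu\bigl(r_{m,s}\cdot Q(\tg_{m,s}\tv_{m,s}^\top)\bigr).
\end{align*}
Using $\mu^* = \E[\langle \bX_m, \bS_{m,*}\rangle]$ with $\bX_m = \tg_m\tv_m^\top$ together with the generalized Stein's Lemma (Lemma~\ref{lemma:kang-b2}), I rewrite $\mu^*\bS_{m,*} = \E[\langle \bX_m,\bS_{m,*}\rangle\, Q(\bX_m)] = \E[r_{m} \cdot Q(\bX_m)]$, where the last step uses that $\eta_m$ is zero-mean and independent of $\bX_m$. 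Hence the gradient is exactly a centered sum of i.i.d.\ matrices in the form required by Lemma~\ref{lemma:kang-b3}.

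Next I would bound the second-moment operator norm needed to instantiate Minsker's inequality. Expanding $r_{m,j}^2 = (\eta_{m,j} + \langle \bX_{m,j},\bS_{m,*}\rangle)^2$, using independence of the noise and $\bX_{m,j}$, the $1$-subGaussian bound $\E[\eta_{m,j}^2]\leq 4$, and the norm assumption $|\langle \bX_{m,j},\bS_{m,*}\rangle|\leq S_0$, I obtain
\begin{align*}
\Bigl\|\sum_{j=1}^{\tau^E_\ell}\E\bigl[r_{m,j}^2\, Q(\bX_{m,j}) Q(\bX_{m,j})^\top\bigr]\Bigr\|_{op}
\;\leq\; (4+S_0^2)\,\tau^E_\ell\,\bigl\|\E[Q(\bX_m)Q(\bX_m)^\top]\bigr\|_{op}.
\end{align*}
The inner factor is controlled by the trace, $\|\E[Q(\bX_m)Q(\bX_m)^\top]\|_{op} \leq \trace(\E[QQ^\top]) = \E[\sum_{i,j}Q_{ij}(\bX_m)^2] \leq k_1 k_2 C$, invoking the finite-second-moment-score assumption~\ref{assm:distribution} for the latent distribution. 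The symmetric bound holds for $\sum_j \E[r_{m,j}^2 Q^\top Q]$. Thus $\sigma^2 = (4+S_0^2)\,k_1 k_2\,\tau^E_\ell\,C$ suffices in Lemma~\ref{lemma:kang-b3}.

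Finally I would plug $\sigma^2$ into Minsker's bound with threshold $2t/\sqrt{\tau^E_\ell}$ to obtain
\begin{align*}
\Pb\Bigl(\|\nabla L'(\mu^*\bS_{m,*})\|_{op} \geq \tfrac{2t}{\sqrt{\tau^E_\ell}}\Bigr) \;\leq\; 2(k_1+k_2)\exp\!\Bigl(-\nu t\sqrt{\tau^E_\ell} + \tfrac{\nu^2 (4+S_0^2) k_1 k_2 \tau^E_\ell\, C}{2}\Bigr),
\end{align*}
and substitute the prescribed $t$ and $\nu$. A direct calculation shows $\nu t\sqrt{\tau^E_\ell} = 2\log(2(k_1+k_2)/\delta_\ell)$ and $\nu^2 \sigma^2/2 = \log(2(k_1+k_2)/\delta_\ell)$, so the exponent collapses to $-\log(2(k_1+k_2)/\delta_\ell)$ and the right-hand side is at most $\delta_\ell$. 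The main obstacle is technical rather than conceptual: one must verify that after rotating the arms by the (random) estimated feature extractors $\wB_{1,\ell},\wB_{2,\ell}$, the induced latent distribution still satisfies the finite-second-moment-score condition with a constant of the same order, and that $\|\bX_m\|_F = \|\tg_m\tv_m^\top\|_F\leq S_0$ still holds; this is inherited from Assumption~\ref{assm:distribution} together with orthonormality of the estimated bases. Once these distributional properties are checked, everything else reduces mechanically to the calculation already carried out in Lemma~\ref{lemma:B4}.
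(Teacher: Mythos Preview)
Your proposal is correct and follows essentially the same approach as the paper: the paper's own proof simply writes down the gradient of $L'$ and then says ``using the same steps as in Lemma~\ref{lemma:B4}'' to arrive at the Minsker-type tail bound with $d_1,d_2$ replaced by $k_1,k_2$ and $M\tau^E_\ell$ by $\tau^E_\ell$. Your write-up is in fact more detailed than the paper's, and your flagged caveat---that one must check Assumption~\ref{assm:distribution} and the norm bound persist for the rotated latent features $\tg_m\tv_m^\top$---is a legitimate point that the paper's proof also leaves implicit.
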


\begin{proof}
Let $\bX_{m,i} = \tg_{m,i}\tv^\top_{m,i}$
Based on the definition of our loss function $L'(\cdot)$ in \eqref{eq:convex-prog-multi-wS}, we have that
\begin{align*}
\nabla_{x_m} L'\left(\bS_{m,*}\right) & =\mu^*\bS_{m,*}-\frac{2}{\tau^E_{\ell}} \sum_{i=1}^{\tau^E_{\ell}} \widetilde{\psi}_\nu(r_{m,i} \cdot Q(x_m)) \\
& \overset{(a)}{=} \left[\E\left(r_{m,1} \cdot Q\left(\bX_{m,1}\right)\right)-\frac{1}{\tau^E_{\ell}} \sum_{i=1}^{\tau^E_{\ell}} \widetilde{\psi}_\nu\left(r_{m,i} \cdot Q\left(\bX_{m,i}\right)\right)\right]
\end{align*}
where $(a)$ follows using the same steps as in \Cref{lemma:B4}. Similarly, using the same steps for a single task as in \Cref{lemma:B4} we have
$$
\Pb\left(\left\|\nabla L'\left(\mu^* \bS_{m,*}\right)\right\|_{\mathrm{op}} \geq \frac{2 t}{\sqrt{\tau^E_{\ell}}}\right) \leq 2\left(k_1+k_2\right) \exp \left(-\nu t \sqrt{\tau^E_{\ell}}+\frac{\nu^2\left(4 +S_0^2\right) C k_1 k_2 \tau^E_{\ell}}{2}\right)
$$
By plugging the values of $t$ and $\nu$ in \Cref{lemma:B4}, we finish the proof.
\end{proof}

\begin{lemma}
\label{theorem:kang-low-rank-multi-wS}\textbf{(Concentration of $\wS_{m,\ell}$)}
For any low-rank linear model with samples $\bX_1 \ldots, \bX_{\tau^E_\ell}$ drawn from $\X$ according to $\mathcal{D}$ then for the optimal solution to the nuclear norm regularization problem in \eqref{eq:convex-prog} with $\nu=\sqrt{2 \log \left(2\left(k_1+k_2\right) / \delta_\ell\right) /\left(\left(4 +S_0^2\right)  \tau^E_\ell k_1 k_2\right)}$ and
\begin{align*}
\gamma_{m,\ell}=4 \sqrt{\frac{2\left(4 +S_0^2\right) C k_1 k_2 \log \left(2\left(k_1+k_2\right) / \delta_\ell\right)}{\tau^E_{m,\ell}}},
\end{align*}
with probability at least $1-\delta_\ell$ it holds that:
\begin{align*}
\left\|\wS_{m,\ell}- \mu^*\bS_{m,*}\right\|_F^2 \leq \frac{C_1 k_1 k_2 r \log \left(\frac{2\left(k_1+k_2\right)}{\delta_\ell}\right)}{\tau^E_{m,\ell}},
\end{align*}
for $C_1=36\left(4+S_0^2\right) C$, $\|\bX\|_F,\left\|\bS_{m,*}\right\|_F \leq S_0$, some nonzero constant $\mu^*$, and $\E\left[\left(S^{\bp}(\bX)\right)_{i j}^2\right] \leq C, \forall i, j$. Summing over all phases $\ell \geq 1$ it follows that $\Pb(\F') \geq 1 - \delta/2$.
\end{lemma}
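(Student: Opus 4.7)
The plan is to mirror the proof of Theorem \ref{theorem:kang-low-rank-multi} but specialized to a single task in the reduced latent dimension $k_1 \times k_2$. The statement is essentially Kang's low-rank concentration (Theorem \ref{theorem:kang-low-rank}) applied to the model $r_{m,s} = \tg_{m,s}^\top \bS_{m,*} \tv_{m,s} + \eta_{m,s}$, since by Assumption \ref{assm:low-rank} the matrix $\bS_{m,*}$ has rank $r$ and the noise is still $1$-sub-Gaussian. Before invoking the concentration machinery I would first verify, via Lemma \ref{lemma:min-sigma}, that $\sigma_{\min}(\sum_{\tw\in\tW_m} \widetilde{\bb}^E_{m,\ell,\tw}\,\tw\tw^\top) > 0$, so that the $E$-optimal design on the latent arms is well-defined. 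The closeness $\|\wB_{1,\ell}^\top \bB_1^\perp\|, \|\wB_{2,\ell}^\top \bB_2^\perp\|$ needed for this is guaranteed by Lemmas \ref{lemma:david-kahan-U} and \ref{lemma:david-kahan-V} on the good event $\F_\ell$.

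Next, I would exploit the optimality of $\wS_{m,\ell}$ to write
$$L'_\ell(\wS_{m,\ell}) + \gamma_{m,\ell}\|\wS_{m,\ell}\|_{\mathrm{nuc}} \leq L'_\ell(\mu^*\bS_{m,*}) + \gamma_{m,\ell}\|\mu^*\bS_{m,*}\|_{\mathrm{nuc}}.$$
Since $L'_\ell$ is quadratic, its Taylor expansion around $\mu^*\bS_{m,*}$ is exact and yields $L'_\ell(\wS_{m,\ell}) - L'_\ell(\mu^*\bS_{m,*}) = \langle \nabla L'_\ell(\mu^*\bS_{m,*}), \bTheta\rangle + 2\|\bTheta\|_F^2$ with $\bTheta := \wS_{m,\ell} - \mu^*\bS_{m,*}$. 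Combined with operator/nuclear-norm duality, this gives
$$2\|\bTheta\|_F^2 \leq \|\nabla L'_\ell(\mu^*\bS_{m,*})\|_{\mathrm{op}}\|\bTheta\|_{\mathrm{nuc}} + \gamma_{m,\ell}\|\mu^*\bS_{m,*}\|_{\mathrm{nuc}} - \gamma_{m,\ell}\|\wS_{m,\ell}\|_{\mathrm{nuc}}.$$
I would then decompose $\bTheta$ through the full SVD $\bS_{m,*} = \bU\bD\bV^\top$ (with complements $\bU_\perp, \bV_\perp$), writing $\bLambda = [\bU,\bU_\perp]^\top \bTheta[\bV,\bV_\perp] = \bLambda_1 + \bLambda_2$, where $\bLambda_1$ is the all-complement block and $\bLambda_2$ has rank at most $2r$. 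The same triangle-inequality argument as in \eqref{eq:16-kang} gives $\|\mu^*\bS_{m,*}\|_{\mathrm{nuc}} - \|\wS_{m,\ell}\|_{\mathrm{nuc}} \leq \|\bLambda_2\|_{\mathrm{nuc}} - \|\bLambda_1\|_{\mathrm{nuc}}$.

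Finally, Lemma \ref{lemma:conc-S} supplies $\|\nabla L'_\ell(\mu^*\bS_{m,*})\|_{\mathrm{op}} \leq 2t/\sqrt{\tau^E_{m,\ell}}$ with probability at least $1-\delta_\ell$, and choosing $\gamma_{m,\ell} = 4t/\sqrt{\tau^E_{m,\ell}}$ as in the statement dominates the gradient term, leaving $2\|\bTheta\|_F^2 \leq \tfrac{3}{2}\gamma_{m,\ell}\|\bLambda_2\|_{\mathrm{nuc}} \leq \tfrac{3}{2}\gamma_{m,\ell}\sqrt{2r}\|\bTheta\|_F$ by Cauchy--Schwarz and $\mathrm{rank}(\bLambda_2) \leq 2r$. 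Dividing through and squaring yields the claimed $\|\bTheta\|_F^2 \leq C_1 k_1 k_2 r \log(2(k_1+k_2)/\delta_\ell)/\tau^E_{m,\ell}$. A union bound over phases with $\delta_\ell = \delta/(2\ell^2)$ and the standard $\sum_{\ell \geq 1} \ell^{-2} \leq \pi^2/6$ estimate yields $\Pb(\F') \geq 1-\delta/2$. I do not expect a serious obstacle, since the core low-rank concentration is a direct transcription of Kang's argument; the only item worth checking carefully is that the rank of $\bS_{m,*}$ (rather than of $\bTheta_{m,*}$) controls the effective dimension in Cauchy--Schwarz, which follows immediately from Assumption \ref{assm:low-rank}.
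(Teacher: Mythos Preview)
Your proposal is correct and follows essentially the same approach as the paper: optimality of $\wS_{m,\ell}$, exact quadratic Taylor expansion of $L'_\ell$, operator/nuclear-norm duality, the SVD block decomposition $\bLambda = \bLambda_1 + \bLambda_2$ with $\mathrm{rank}(\bLambda_2)\le 2r$, the gradient bound from Lemma~\ref{lemma:conc-S} with $\gamma_{m,\ell}=4t/\sqrt{\tau^E_{m,\ell}}$, Cauchy--Schwarz, and a union bound over phases. Your additional remark about invoking Lemma~\ref{lemma:min-sigma} (via Lemmas~\ref{lemma:david-kahan-U}--\ref{lemma:david-kahan-V}) to ensure the latent $E$-optimal design is well-posed is a sensible preliminary check that the paper handles separately rather than inside this proof.
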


\begin{proof}
Since the estimator $\wS_{m,\ell}$ minimizes the regularized loss function defined in Eqn. (6), we have
$$
L(\wS_{m,\ell})+\gamma_{\ell}\|\wS_{m,\ell}\|_{\mathrm{nuc}} \leq L\left(\mu^* \bS_{m,*}\right)+\gamma_{\ell}\left\|\mu^* \bS_{m,*}\right\|_{\mathrm{nuc}}.
$$
And due to the fact that $L'(\cdot)$ is a quadratic function, we have the following expression based on multivariate Taylor's expansion:
$$
L'(\wS_{m,\ell})-L'\left(\mu^* \bS_{m,*}\right)=\left\langle\nabla L'\left(\mu^* \bS_{m,*}\right), \bTheta\right\rangle+2\|\bTheta\|_F^2, \quad \text { where } \bTheta=\wS_{m,\ell}-\mu^* \bS_{m,*}.
$$
By rearranging the above two results, we can deduce that
\begin{align}
2\|\bTheta\|_F^2 & \leq-\left\langle\nabla L'\left(\mu^* \bS_{m,*}\right), \bTheta\right\rangle+\gamma_{\ell}\left\|\mu^* \bS_{m,*}\right\|_{\mathrm{nuc}}-\gamma_{\ell}\|\wS_{m,*}\|_{\mathrm{nuc}} \nonumber\\
& \overset{(i)}{\leq}\left\|\nabla L'\left(\mu^* \bS_{m,*}\right)\right\|_{\mathrm{op}}\|\bTheta\|_{\mathrm{nuc}}+\gamma_{\ell}\left\|\mu^* \bS_{m,*}\right\|_{\mathrm{nuc}}-\gamma_{\ell}\|\wS_{m,\ell}\|_{\mathrm{nuc}},\label{eq:15-kang-wS}
\end{align}
where (i) comes from the duality between matrix operator norm and nuclear norm. Next, we represent the saturated SVD of $\bS_{m,*}$ as $\bS_{m,*}=\bU \bD \bV^{\top}$ where $\bU \in \mathbb{R}^{k_1 \times r}$ and $\bV \in \mathbb{R}^{k_2 \times r}$, and here we would work on its full version, i.e.
$$
\bS_{m,*}=\left(\bU, \bU_{\perp}\right)\left(\begin{array}{cc}
\bD & 0 \\
0 & 0
\end{array}\right)\left(\bV, \bV_{\perp}\right)^{\top}=\left(\bU, \bU_{\perp}\right) \bD^*\left(\bV, \bV_{\perp}\right)^{\top}
$$
where we have $\bU_{\perp} \in \mathbb{R}^{k_1 \times\left(k_1-r\right)}, \bD^* \in \mathbb{R}^{k_1 \times k_2}$ and $\bV_{\perp} \in \mathbb{R}^{k_2 \times\left(k_2-r\right)}$. Furthermore, we define
$$
\bLambda=\left(\bU, \bU_{\perp}\right)^{\top} \bTheta\left(\bV, \bV_{\perp}\right)=\left(\begin{array}{cc}
\bU^{\top} \bTheta \bV & \bU^{\top} \bTheta \bV_{\perp} \\
\bU_{\perp}^{\top} \bTheta \bV & \bU_{\perp}^{\top} \bTheta \bV_{\perp}
\end{array}\right)=\bLambda_1+\bLambda_2
$$
where we write
$$
\bLambda_1=\left(\begin{array}{cc}
0 & 0 \\
0 & \bU_{\perp}^{\top} \bTheta \bV_{\perp}
\end{array}\right), \quad \bLambda_2=\left(\begin{array}{cc}
\bU^{\top} \bTheta \bV & \bU^{\top} \bTheta \bV_{\perp} \\
\bU_{\perp}^{\top} \bTheta \bV & 0
\end{array}\right).
$$
Afterward, it holds that
\begin{align}
\|\wS_\ell\|_{\mathrm{nuc}} & =\left\|\mu^* \bS_{m,*} +\bTheta\right\|_{\mathrm{nuc}} \overset{(a)}{=}\left\|\left(\bU, \bU_{\perp}\right)\left(\mu^* \bD^*+\bLambda\right)\left(\bV, \bV_{\perp}\right)^{\top}\right\|_{\mathrm{nuc}} \nonumber\\
& \overset{(b)}{=}\left\|\mu^* \bD^*+\bLambda\right\|_{\mathrm{nuc}}+\left\|\mu^* \bD^*+\bLambda_1+\bLambda_2\right\|_{\mathrm{nuc}} \nonumber\\
& \geq\left\|\mu^* \bD^*+\bLambda_1\right\|_{\mathrm{nuc}}-\left\|\bLambda_2\right\|_{\mathrm{nuc}} \nonumber\\
& =\left\|\mu^* \bD\right\|_{\mathrm{nuc}}+\left\|\bLambda_1\right\|_{\mathrm{nuc}}-\left\|\bLambda_2\right\|_{\mathrm{nuc}} \nonumber\\
& =\left\|\mu^* \bS_{m,*}\right\|_{\mathrm{nuc}}+\left\|\bLambda_1\right\|_{\mathrm{nuc}}-\left\|\bLambda_2\right\|_{\mathrm{nuc}}, \label{eq:16-kang-wS}
\end{align}
where, $(a)$ follows from the definition of $\bS_{m,*}$, and $(b)$ follows the definition of $\bLambda$.
This implies that
$$
\left\|\mu^* \bS_{m,*}\right\|_{\mathrm{nuc}}-\|\wS_{m,\ell}\|_{\mathrm{nuc}} \leq\left\|\bLambda_2\right\|_{\mathrm{nuc}}-\left\|\bLambda_1\right\|_{\mathrm{nuc}}.
$$
Combining \eqref{eq:15-kang-wS} and \eqref{eq:16-kang-wS}, we have that
$$
2\|\bTheta\|_F^2 \leq\left(\left\|\nabla L\left(\mu^* \bB\right)\right\|_{\mathrm{op}}+\gamma_{\ell}\right)\left\|\bLambda_2\right\|_{\mathrm{nuc}}+\left(\left\|\nabla L\left(\mu^* \bB\right)\right\|_{\mathrm{op}}-\gamma_{\ell}\right)\left\|\bLambda_1\right\|_{\mathrm{nuc}}.
$$
Then, we refer to the setting in our \Cref{lemma:B4}, and we choose $\gamma_\ell=4 t / \sqrt{M\tau^E_{\ell}}$ where the value of $t$ is determined in \Cref{lemma:B4}, i.e.
$$
\gamma_{\ell}=4 \sqrt{\frac{2\left(4 +S_0^2\right) C k_1 k_2 \log \left(2\left(d_1+d_2\right) / \delta_\ell\right)}{M\tau^E_{\ell}}},
$$
we know that $\lambda_{T-1} \geq 2\left\|\nabla L\left(\mu^* \bS_{m,*}\right)\right\|_{o p}$ with probability at least $1-\delta_\ell$ for any $\delta_\ell \in(0,1)$. Therefore, with a probability at least $1-\delta_\ell$, we have
$$
2\|\bTheta\|_F^2 \leq \frac{3}{2} \gamma_{\ell}\left\|\bLambda_2\right\|_{\mathrm{nuc}}-\frac{1}{2} \gamma_{\ell}\left\|\bLambda_1\right\|_{\mathrm{nuc}} \leq \frac{3}{2} \gamma_{\ell}\left\|\bLambda_2\right\|_{\mathrm{nuc}}.
$$
Since we can easily verify that the rank of $\bLambda_2$ is at most $2 r$, and by using Cauchy-Schwarz Inequality we have that
$$
2\|\bTheta\|_F^2 \leq \frac{3}{2} \gamma_{\ell} \sqrt{2 r}\left\|\bLambda_2\right\|_F \leq \frac{3}{2} \gamma_{\ell} \sqrt{2 r}\|\bLambda\|_F=\frac{3}{2} \gamma_{\ell} \sqrt{2 r}\|\bTheta\|_F
$$
which implies that
$$
\|\bTheta\|_F \leq \frac{3}{4} \sqrt{2 r} \gamma_{\ell}=6 \sqrt{\frac{\left(4+S_0^2\right) C k_1 k_2 r \log \left(\frac{2\left(k_1+k_2\right)}{\delta_\ell}\right)}{\tau^E_{\ell}}}
$$
This implies that $\Pb(\F'_\ell) \geq 1-\delta_\ell$.
Taking a union bound over all phases $\ell \geq 1$ and recalling $\delta_\ell:=\frac{\delta}{2 \ell^2}$, we obtain
\begin{align*}
\Pb(\F') & \geq 1-\sum_{\ell=1}^{\infty} \Pb\left((\F')^c_\ell\right) \\
& \geq 1-\sum_{\ell=1}^{\infty} \frac{\delta_\ell}{2} \\
& =1-\sum_{\ell=1}^{\infty} \frac{\delta}{4 \ell^2} \\
& \geq 1-\frac{\delta}{2} .
\end{align*}
This concludes our proof.
\end{proof}

\subsection{Final Sample Complexity Bound}

We first define the arm elimination event similar to \Cref{thm:single-task}. 
For any $\V \subseteq \ucW$ be the active set and $\uw \in \V$ define
\begin{align}
\mathcal{E}_{\uw, \ell}(\V)=\left\{\left|\left\langle \uw-\uw^{\star}, \wtheta_{\ell}(\V)-\btheta^*\right\rangle\right| \leq \epsilon_{\ell}\right\} \label{eq:event-arm-elim-multi}
\end{align}
where it is implicit that $\wtheta_{\ell}:=\wtheta_{\ell}(\V)$ is the design constructed in the algorithm at stage $\ell$ with respect to $\ucW_{\ell}=\V$.

\begin{customtheorem}{2}\textbf{(Restatement)}
\label{app:theorem-2}
    With probability at least $1 - \delta$, multi-task \gob\ returns the best arms $\bx_*$, $\bz_*$, and the number of samples used is bounded by
\begin{align*}
    \widetilde{O}\left(\dfrac{M(k_1+k_2)r}{\Delta^2} + \dfrac{M\sqrt{k_1 k_2 r}}{S_r} + \dfrac{\sqrt{d_1 d_2 r}}{S_r}\right).
\end{align*}
\end{customtheorem}

\begin{proof} For the rest of the proof we have that the good events $\F_\ell \bigcap \F'_\ell \bigcap \mathcal{E}_{\uw, \ell}(\ucW_\ell)$ holds true for each phase $\ell$ with probability greater than $(1-\delta)$. The three events are defined in \eqref{eq:good-event-Theta-multi}, \eqref{eq:good-event-Theta-multi-wS} and \eqref{eq:event-arm-elim-multi}.

\textbf{Third Stage:} Define $\underline{\A}_{m,\ell}=\left\{\uw \in \ucW_\ell:\left\langle \uw^{\star}-\uw, \btheta^*\right\rangle \leq 4 \epsilon_{m,\ell}\right\}$. Note that by assumption $\ucW=\ucW_1=\underline{\A}_1$. The above lemma implies that with probability at least $1-\delta$ we have $\bigcap_{\ell=1}^{\infty}\left\{\ucW_{m,\ell} \subseteq S_{m,\ell}\right\}$. This implies that
\begin{align*}
\rho^G\left(\ucW_{m,\ell}\right) & =\min _{\bb \in \Delta_{\ucW_m}} \max _{\uw, \uw^{\prime} \in \ucW_{m,\ell}}\left\|\uw-\uw^{\prime}\right\|_{\left(\sum_{\uw \in \ucW} \bb_{\uw} \uw\ \uw^{\top} +\bLambda \right)^{-1}}^2 \\
& \leq \min _{\bb \in \Delta \ucW_m} \max _{\uw, \uw^{\prime} \in S_{m,\ell}}\left\|\uw-\uw^{\prime}\right\|_{\left(\sum_{\uw \in \ucW} \bb_{\uw} \uw\ \uw^{\top} + \bLambda\right)^{-1}}^2 \\
& =\rho^G\left(\underline{\A}_{m,\ell}\right).
\end{align*}
Let the effective dimension be $k = (k_1 + k_2)r$. 
Define $k^{\ell}_1 = 8k\log(1+\tau^G_{m,\ell-1}/\lambda)$.
For $\ell \geq\left\lceil\log _2\left(4 \Delta^{-1}_m\right)\right\rceil$ we have that $S_{m,\ell}=\left\{\uw^{\star}\right\}$, thus, the sample complexity to identify $\uw^{\star}_m$ is equal to
\begin{align*}
&\sum_{\ell=1}^{\left\lceil\log _2\left(4 \Delta^{-1}_m\right)\right\rceil} \sum_{\uw \in \ucW_m}\left\lceil\tau^G_{m,\ell} \wb^G_{m, \ell, \uw}\right\rceil =\sum_{\ell=1}^{\left\lceil\log _2\left(4 \Delta^{-1}_m\right)\right\rceil}\left(\frac{(k^\ell_1+1) k^\ell_1}{2}+\tau^G_{m,\ell}\right) \\
& =\sum_{\ell=1}^{\left\lceil\log _2\left(4 \Delta^{-1}_m\right)\right\rceil}\left(\frac{(k^\ell_1+1) k^\ell_1}{2}+2 \epsilon_{m,\ell}^{-2} \rho^G(\ucW_{m,\ell}) B^\ell_{m,*} \log \left(4 k^\ell_1 \ell^2|\ucW_m| / \delta\right)\right) \\
& \overset{(a)}{\leq} 2\sum_{\ell=1}^{\left\lceil\log _2\left(4 \Delta^{-1}_m\right)\right\rceil}\left(\frac{(k+1) k}{2}\log^2(1+\tau^G_{m,\ell-1}) + 2 \epsilon_{m,\ell}^{-2} \rho^G(\ucW_{m,\ell}) B^\ell_{m,*} \log \left(4 k^\ell_1 \ell^2|\ucW_m| / \delta\right)\right) \\
& \overset{(b)}{\leq} 2\frac{(k+1) k}{2}\sum_{\ell=1}^{\left\lceil\log _2\left(4 \Delta^{-1}_m\right)\right\rceil}\left(\log^2(1+\tau^G_{m,\ell-1}) + 8 \epsilon_{m,\ell}^{-2} \rho^G(\ucW_{m,\ell}) B^\ell_{m,*} \log \left(4 k^\ell_1 \ell^2|\ucW_m| / \delta\right)\right) \\
& \overset{(c)}{\leq}(k+1) k\sum_{\ell=1}^{\left\lceil\log _2\left(4 \Delta^{-1}_m\right)\right\rceil}\left(1 + 16 \epsilon_{m,\ell}^{-2} \rho^G(\ucW_{m,\ell}) B^\ell_{m,*} \log^2 \left(4 k^\ell_1 \ell^2|\ucW_m| / \delta\right)\right) \\
& \overset{(d)}{\leq} (k+1) k\left\lceil\log _2\left(4 \Delta^{-1}_m\right)\right\rceil+\sum_{\ell=1}^{\left\lceil\log _2\left(4 \Delta^{-1}_m\right)\right\rceil} 32 \epsilon_{m,\ell}^{-2} f\left(\underline{\A}_{m,\ell}\right) B^\ell_{m,*} \log \left(4 k \ell^2|\ucW_m| / \delta\right) \\
&\overset{(e)}{\leq} (k+1) k\left\lceil\log _2\left(4 \Delta^{-1}\right)\right\rceil+\sum_{\ell=1}^{\left\lceil\log _2\left(4 \Delta^{-1}\right)\right\rceil} 32 \epsilon_{\ell}^{-2} f\left(\underline{\A}_{m,\ell}\right) (64\lambda S^2 + 64\tau^G_{m,\ell-1})\log \left(4 k \ell^2|\ucW| / \delta\right) 
\end{align*}
\begin{align*}
&\overset{}{=} (k+1) k\left\lceil\log _2\left(4 \Delta^{-1}\right)\right\rceil+\sum_{\ell=1}^{\left\lceil\log _2\left(4 \Delta^{-1}\right)\right\rceil} 32 \epsilon_{\ell}^{-2} f\left(\underline{\A}_{m,\ell}\right) (64\lambda S^2)\log \left(4 k \ell^2|\ucW| / \delta\right) \\
&\quad + (k+1) k\left\lceil\log _2\left(4 \Delta^{-1}\right)\right\rceil+\sum_{\ell=1}^{\left\lceil\log _2\left(4 \Delta^{-1}\right)\right\rceil} 32 \epsilon_{\ell}^{-2} f\left(\underline{\A}_{m,\ell}\right) (64\tau^G_{m,\ell-1})\log \left(4 k \ell^2|\ucW| / \delta\right) \\
& \overset{(f)}{\leq} (k+1) k\left\lceil\log _2\left(4 \Delta^{-1}_m\right)\right\rceil + 2048 \lambda S^2\log \left(\frac{4k \log _2^2\left(8 \Delta^{-1}_m\right)|\ucW_m|}{\delta}\right) \sum_{\ell=1}^{\left\lceil\log _2\left(4 \Delta^{-1}_m\right)\right\rceil} 2^{2 \ell} f\left(\underline{\A}_{m,\ell}\right) .
\end{align*}
where, $(a)$ follows as $\log^2(1+\tau^G_{m,\ell-1}/\lambda) \leq \log^2(1+\tau^G_{m,\ell-1})$, $(b)$ follows by noting that $\log(x \log(1 + x)) \leq 2\log(x)$ for any $x > 1$. The $(c)$ follows by subsuming the $\log^2(1+\tau^G_{m,\ell-1})$ into $2\tau^G_\ell$. The $(d)$ follows as $\log(1+\tau^G_{m,\ell-1}) < \tau^G_\ell$ which enables us to replace the $k^\ell_1$ inside the $\log$ with an addtional factor of $2$. 
The $(e)$ follows similarly to \eqref{eq:upper-bound_B_star} by noting that 
\begin{align*}
    B^\ell_{m,*} &\leq 64(\sqrt{\lambda} S + \sqrt{\lambda^\perp_{m,\ell}} S^{\perp}_{m,\ell})\nonumber\\
    &\leq 64\lambda S^2 + \left(\dfrac{64\tau^G_{m,\ell-1}}{8(d_1+d_2)r\log(1+\frac{\tau^G_{m,\ell-1}}{\lambda})}\right)\cdot\left(\frac{8d_1 d_2 r}{\tau^E_{\ell} S^2_{r}} \log \left(\frac{d_1+d_2}{\delta_\ell}\right)\right)\nonumber\\
    &\overset{(a_1)}{\leq} 64\lambda S^2 + 64\tau^G_{m,\ell-1}.
\end{align*}
Finally the $(f)$ follows by subsuming the $\tau^G_{\ell-1}$ with a factor of $2$ into the quantity of $\tau^G_{\ell}$. 
%
%
%
Then it follows that
\begin{align*}
& \rho^{G}_{m,*} =\inf _{\bb \in \triangle_{\ucW_m}} \max _{\uw \in \ucW_m} \frac{\left\|\uw-\uw^{\star}\right\|_{\left(\sum_{\uw \in \ucW_m} \bb_{\uw} \uw\ \uw^{\top} + \bLambda\right)^{-1}}^2}{\left(\left\langle \uw-\uw^{\star}, \btheta^*\right\rangle\right)^2} \\
& =\inf _{\bb \in \triangle_{\ucW_m}} \max _{\ell \leq\left\lceil\log _2\left(4 \Delta^{-1}_m\right)\right\rceil} \max _{\uw \in S_{m,\ell}} \frac{\left\|\uw-\uw^{\star}\right\|_{\left(\sum_{\uw \in \ucW_m} \bb_{\uw} \uw\ \uw^{\top}+ \bLambda\right)^{-1}}^2}{\left(\left\langle \uw-\uw^{\star}, \btheta^*\right\rangle\right)^2} \\
& \geq \frac{1}{\left\lceil\log _2\left(4 \Delta^{-1}_m\right)\right\rceil} \inf _{\bb \in \triangle_{\ucW_m}} \sum_{\ell=1}^{\left\lceil\log _2\left(4 \Delta^{-1}_m\right)\right\rceil} \max _{\uw \in \underline{\A}_{m,\ell}} \frac{\left\|\uw-\uw^{\star}\right\|_{\left(\sum_{\uw \in \ucW_m} \bb_{\uw} \uw\ \uw^{\top}+ \bLambda\right)^{-1}}^2}{\left(\left\langle \uw-\uw^{\star}, \btheta^*\right\rangle\right)^2} \\
& \geq \frac{1}{16\left\lceil\log _2\left(4 \Delta^{-1}_m\right)\right\rceil} \sum_{\ell=1}^{\left\lceil\log _2\left(4 \Delta^{-1}_m\right)\right\rceil} 2^{2 \ell} \inf _{\bb \in \triangle_{\ucW_m}} \max _{\uw \in \underline{\A}_{m,\ell}}\left\|\uw-\uw^{\star}\right\|_{\left(\sum_{\uw \in \ucW_m} \bb_{\uw} \uw\ \uw^{\top}+ \bLambda\right)^{-1}}^2 
\end{align*}
\begin{align*}
& \geq \frac{1}{64\left\lceil\log _2\left(4 \Delta^{-1}_m\right)\right\rceil} \sum_{\ell=1}^{\left\lceil\log _2\left(4 \Delta^{-1}_m\right)\right\rceil} 2^{2 \ell} \inf _{\bb \in \triangle_{\ucW_m}} \max _{\uw, \uw^{\prime} \in \underline{\A}_{m,\ell}}\left\|\uw-\uw^{\prime}\right\|_{\left(\sum_{\uw \in \ucW_m} \bb_{\uw} \uw\ \uw^{\top}+ \bLambda\right)^{-1}}^2 \\
& \geq \frac{1}{64\left\lceil\log _2\left(4 \Delta^{-1}_m\right)\right\rceil} \sum_{\ell=1}^{\left\lceil\log _2\left(4 \Delta^{-1}_m\right)\right\rceil} 2^{2 \ell} f\left(\underline{\A}_{m,\ell}\right).
\end{align*}
This implies that 
\begin{align*}
    \sum_{\ell=1}^{\left\lceil\log _2\left(4 \Delta^{-1}_m\right)\right\rceil} 2^{2 \ell} f\left(\underline{\A}_{m,\ell}\right) \leq \rho^{G}_{m,*} 64\left\lceil\log _2\left(4 \Delta^{-1}_m\right)\right\rceil.
\end{align*}
Plugging this back we get
\begin{align*}
    \sum_{\ell=1}^{\left\lceil\log _2\left(4 \Delta^{-1}_m\right)\right\rceil} \sum_{\uw \in \ucW_m}\left\lceil\tau^G_{m,\ell} \widehat{\bb}_{\ell, \uw}\right\rceil 
    &\leq (k+1) k\left\lceil\log _2\left(4 \Delta^{-1}_m\right)\right\rceil \\
    &\qquad + 2048 \lambda S^2\log \left(\frac{8k \log _2^2\left(8 \Delta^{-1}_m\right)|\ucW_m|}{\delta}\right) 64\rho^{G}_{m,*}\left\lceil\log _2\left(4 \Delta^{-1}_m\right)\right\rceil\\
    &\leq (k+1) k\left\lceil\log _2\left(4 \Delta^{-1}_m\right)\right\rceil \\
    &\qquad + C_2 \lambda S^2\log \left(\frac{8 k \log _2^2\left(8 \Delta^{-1}_m\right)|\ucW_m|}{\delta}\right) \rho^{G}_{m,*}\left\lceil\log _2\left(4 \Delta^{-1}_m\right)\right\rceil
\end{align*}
where, $C_2 > 0$ is a constant.
Summing over each task $m$, the simplified sample complexity for the third stage is given by
\begin{align*}
    N_3\leq O\left(\dfrac{Mk}{\Delta^2}\log \left(\frac{k\log_2\left( \Delta^{-1}\right)|\ucW_m|}{\delta}\right)\right) = \widetilde{O}\left(\dfrac{M(k_1+k_2)r}{\Delta^2}\right)
\end{align*}
where $\Delta = \min_{\uw\in\ucW} (\uw_* - \uw)^\top\btheta_*  \overset{(a1)}{=} \min_{\bx\in\X\setminus\{\bx_*\},\bz\in\Z\setminus\{\bz_*\} }(\bx_*^\top\bTheta_*\bz_* - \bx^\top\bTheta_*\bz)$. The $(a1)$ follows by reshaping the arms in $\ucW$ to recover the arms in $\X$ and $\Z$. 


\textbf{2nd Stage:} Again recall that the $E$-optimal design in stage 2 of \Cref{alg:bandit-pure-multi} satisfies the \Cref{assm:distribution} as the sample distribution $\D$ has finite second order moments.



For the second stage first observe that by plugging in the definition of $\widetilde{\tau}^\ell_E$ we get
\begin{align*}
\left\|\btheta_{m,k+1: p}^*\right\|_2^2 & =\sum_{i>r \wedge j>r} H_{i j}^2=\left\|(\widehat{\bU}_\ell^{\perp})^{\top}\left(\bU^* \mathbf{S}^* \bV^{* \top}\right) \widehat{\bV}_\ell^{\perp}\right\|_F^2 \\
& \leq\left\|(\widehat{\bU}_\ell^{\perp})^{\top} \bU^*\right\|_F^2\left\|\mathbf{S}^*\right\|_2^2\left\|(\widehat{\bV}_\ell^{\perp})^{\top} \bV^*\right\|_F^2  \leq O\left(\frac{k_1 k_2 r}{\tau^E_\ell } \log \left(\frac{k_1+k_2}{\delta}\right)\right)\\
&= O\left(\dfrac{\sqrt{d_1 d_2 r}}{S_{r}} \log \left(\frac{d_1+d_2}{\delta_\ell}\right)\right),
\end{align*}
which implies $\left\|\btheta_{k+1: p}^*\right\|_2=\tO\left(\sqrt{k_1k_2 r}/S_r\right)$. We also set $\frac{8k_1 k_2 r}{\tau^E_{m,\ell} S^2_{r}} \log \left(\frac{k_1+k_2}{\delta_\ell}\right) :=S_{m,\ell}^{\perp}$.
Now we bound the sample complexity from the second stage. 
From the second stage we can show that we have for the arm set $\tW_m$

\begin{align*}
N_2 &= \sum_{\ell=1}^{\left\lceil\log _2\left(4 \Delta^{-1}\right)\right\rceil} \sum_{\tw \in \tW_m}\left\lceil\widetilde{\tau}^E_{m,\ell} \wb^E_{m,\ell, \tw}\right\rceil\\
& =\sum_{\ell=1}^{\left\lceil\log _2\left(4 \Delta^{-1}\right)\right\rceil}\left(\frac{(p+1) p}{2}+\widetilde{\tau}^E_{m,\ell}\right) \\
& =\sum_{\ell=1}^{\left\lceil\log _2\left(4 \Delta^{-1}\right)\right\rceil}\left(\frac{(p+1) p}{2}+ \dfrac{\sqrt{8k_1 k_2 r\log \left(4  \ell^2|\W| / \delta\right)}}{S_r}\right) \\
& \leq (p+1) p\left\lceil\log _2\left(4 \Delta^{-1}\right)\right\rceil + 32 \dfrac{\sqrt{k_1 k_2 r}}{S_r}\log \left(\frac{4 \log _2^2\left(8 \Delta^{-1}\right)|\W|}{\delta}\right) \left\lceil\log _2\left(4 \Delta^{-1}\right)\right\rceil \\
&= O\left(\dfrac{\sqrt{k_1 k_2 r}}{S_r}\log \left(\frac{4 \log _2^2\left(8 \Delta^{-1}\right)|\W|}{\delta}\right)\right) \overset{(a)}{=} \widetilde{O}\left(\dfrac{\sqrt{k_1 k_2 r}}{S_r}\right)
\end{align*}

\textbf{1st Stage:} Finally we also use the $E$-optimal design in first stage of \Cref{alg:bandit-pure-multi}. Note that this design satisfies the \Cref{assm:distribution} as the sample distribution $\D$ has finite second-order moments. Now we bound the sample complexity from the first stage. 
From the first stage we can show that we have for the arm set $\ocW$
\begin{align*}
N_1 &= \sum_{\ell=1}^{\left\lceil\log _2\left(4 \Delta^{-1}\right)\right\rceil} \sum_{m=1}^M\sum_{\ow \in \ocW_m}\left\lceil\tau^E_{m,\ell} \wb^E_{m,\ell, \ow}\right\rceil\\
& =\sum_{\ell=1}^{\left\lceil\log _2\left(4 \Delta^{-1}\right)\right\rceil}\left(\frac{M(p+1) p}{2}+\sum_{m}\tau^E_{m,\ell}\right) \\
& =\sum_{\ell=1}^{\left\lceil\log _2\left(4 \Delta^{-1}\right)\right\rceil}\left(\frac{M(p+1) p}{2}+ \dfrac{\sqrt{8d_1 d_2 r\log \left(4  \ell^2|\W| / \delta\right)}}{S_r}\right) \\
& \leq M(p+1) p\left\lceil\log _2\left(4 \Delta^{-1}\right)\right\rceil + 32 \dfrac{\sqrt{d_1 d_2 r}}{S_r}\log \left(\frac{4 \log _2^2\left(8 \Delta^{-1}\right)|\W|}{\delta}\right) \left\lceil\log _2\left(4 \Delta^{-1}\right)\right\rceil \\
&\overset{(a)}{=} O\left(\dfrac{\sqrt{d_1 d_2 r}}{S_r}\log \left(\frac{4 \log _2^2\left(8 \Delta^{-1}\right)|\W|}{\delta}\right)\right) \overset{(a)}{=} \widetilde{O}\left(\dfrac{\sqrt{d_1 d_2 r}}{S_r}\right)
\end{align*}
where, $(a)$ follows as $p = d_1d_2$.
Combining $N_1, N_2$ and $N_3$ gives the claim of the theorem.
\end{proof}

\subsection{Additional Experimental Details}
\label{app:expt}

\textbf{Single Task Unit Ball:} This experiment consists of a set of $\{6,10,14\}$ left and right arms that are arranged in a unit ball in $\R^6$, and $\|\bx\|=1$, $\|\bz\|=1$ for all $\bx\in\X$ and $\bz\in\Z$. Hence, we have $d_1\in\R^6$ and $d_2\in\R^6$. We choose a random $\bTheta_*\in\R^{d_1\times d_2}$ which has rank $r=2$. We set $\delta = 0.1$. We compare against RAGE \citep{fiez2019sequential} that treats this $d_1d_2$ bilinear bandit as a linear bandit setting and suffers a sample complexity that scales as $\tO(d_1d_2/\Delta^2)$. We do a continuous relaxation of the algorithm when implementing it to make this more tractable.

\textbf{Multi-task Unit Ball:} This experiment consists of a set of $\{5,10,15,20,25,30\}$ tasks. For each task, we choose left and right arms that are arranged in a unit ball in $\R^8$, and $\|\bx\|=1$, $\|\bz\|=1$ for all $\bx\in\X$ and $\bz\in\Z$. Hence, we have $d_1\in\R^8$ and $d_2\in\R^8$. We choose $k_1 = k_2 = 4$, and feature extractors $\bB_1\in\R^{d_1\times k_1}$, $\bB_2\in\R^{d_2\times k_2}$ shared across tasks. We choose a random matrix $\bS_{m,*}\in\R^{k_1\times k_2}$ for each task $m$ such that $\bS_{m,*}$ has rank $r=2$. We set $\delta = 0.1$. We compare against DouExpDes \citep{du2023multi} that treats this setting as $M$ $k_1k_2$ bilinear bandits (after learning the feature extractors) and suffers a sample complexity that scales as $\tO(M k_1k_2/\Delta^2)$ (see \Cref{remark:comparison}). Again we do a continuous relaxation of the algorithm when implementing it to make this more tractable.

\newpage
\section{Table of Notations}
\label{table-notations}

\begin{table}[!tbh]
    \centering
    \begin{tabular}{|p{10em}|p{33em}|}
        \hline\textbf{Notations} & \textbf{Definition} \\\hline
        $\X$  & Left arm set \\\hline
        $\Z$  & Right arm set \\\hline
        $M$  & Number of tasks \\\hline
        $\ell$  & Phase number \\\hline
        $\bTheta_{m,*}$  & Hidden parameter matrix for \\\hline
        $\bb^E_\ell$  & E-optimal design at the $\ell$-th phase\\\hline
        $\bb^G_{m,\ell}$  & G-optimal design at the $\ell$-th phase for the $m$-th task\\\hline
        $S_{m,\ell}^{\perp} $  & $\frac{8d_1 d_2 r}{\tau^E_{\ell} S^2_{r}} \log \left(\frac{d_1+d_2}{\delta_\ell}\right)$\\\hline
        $\lambda_{m,\ell}^{\perp}$ & $\tau^G_{m,\ell-1}/8(d_1+d_2)r\log(1+\frac{\tau^G_{m,\ell-1}}{\lambda})$ \\\hline
        $B^\ell_{m,*}$  & $(8\sqrt{\lambda} S + \sqrt{\lambda_{m,\ell}^{\perp}} S^{\perp}_{m,\ell})$\\\hline
        $\bB_1$  & Left feature extractor\\\hline
        $\bB_2$  & Right feature extractor\\\hline
        $S_r$  & $r$-th largest singular value of $\bTheta_*$\\\hline
        $\Delta(\bx,\bz)$ & $\bx_*^\top\bTheta_*\bz_* - \bx^\top\bTheta_*\bz$\\\hline 
        $\Delta $ & $\min_{\bx \neq \bx_*,\bz\neq \bz_*} \Delta(\bx,\bz)$\\\hline
        $\Y(\W)$ & $\left\{\bw-\bw^{\prime}: \forall \bw, \bw^{\prime} \in \W, \bw \neq \bw^{\prime}\right\}$ \\\hline
        $\Y^*(\W) $ & $\left\{\bw_*-\bw: \forall \bw \in \W \backslash \bw_*\right\}$ \\\hline
        $\delta$ & confidence level \\\hline
    \end{tabular}
    \vspace{1em}
    \caption{Table of Notations}
    \label{tab:my_label}
\end{table}




\end{document}